\newif\ifshortver
\theoremstyle{plain}
\newtheorem{theorem}{Theorem}
\theoremstyle{definition}
\newtheorem{definition}[theorem]{Definition}
\theoremstyle{plain}
\newtheorem{lemma}[theorem]{Lemma}
\theoremstyle{plain}
\theoremstyle{plain}
\newtheorem{proposition}[theorem]{Proposition}
\theoremstyle{definition}
\def\BibTeX{{\rm B\kern-.05em{\sc i\kern-.025em b}\kern-.08em
    T\kern-.1667em\lower.7ex\hbox{E}\kern-.125emX}}
\DeclareMathOperator{\Var}{Var}
\begin{document}

\title{Communication-Efficient and Privacy-Adaptable Mechanism -- a Federated Learning \\
Scheme with Convergence Analysis
}

\ifshortver
\author{\IEEEauthorblockN{Chun Hei Michael Shiu}
\IEEEauthorblockA{\textit{Department of Electrical and Computer Engineering,} \\
\textit{University of British Columbia}\\
shiuchm@student.ubc.ca}
\and
\IEEEauthorblockN{Chih Wei Ling}
\IEEEauthorblockA{\textit{Department of Computer Science,} \\
\textit{City University of Hong Kong}\\
cwling6@um.cityu.edu.hk}
}
\else
\author{\IEEEauthorblockN{Chun Hei Michael Shiu} \\
\IEEEauthorblockA{Department of Electrical and Computer Engineering, University of British Columbia \\
shiuchm@ece.ubc.ca} \\
\and
\IEEEauthorblockN{Chih Wei Ling} \\
\IEEEauthorblockA{Department of Computer Science, City University of Hong Kong\\
cwling6@um.cityu.edu.hk}
}
\fi

\maketitle

\begin{abstract}
    \ifshortver
    THIS PAPER IS ELIGIBLE FOR THE STUDENT PAPER AWARD. \
    \else \fi
    Federated learning enables multiple parties to jointly train learning models without sharing their own underlying data, offering a practical pathway to privacy-preserving collaboration under data-governance constraints. Continued study of federated learning is essential to address key challenges in it, including communication efficiency and privacy protection between parties. A recent line of work introduced a novel approach called the Communication-Efficient and Privacy-Adaptable Mechanism (CEPAM), which achieves both objectives simultaneously. CEPAM leverages the rejection-sampled universal quantizer (RSUQ), a randomized vector quantizer whose quantization error is equivalent to a prescribed noise, which can be tuned to customize privacy protection between parties. In this work, we theoretically analyze the privacy guarantees and convergence properties of CEPAM. Moreover, we assess CEPAM's utility performance through experimental evaluations, including convergence profiles compared with other baselines, and accuracy-privacy trade-offs between different parties.
\end{abstract}

\begin{IEEEkeywords}
federated learning, randomized quantization, differential privacy, convergence analysis, channel simulation
\end{IEEEkeywords}

%\nocite{arXiv}

% \nocite{ling2025communication}

\section{Introduction and Motivation}

Federated learning (FL) was first introduced in \cite{McMahan2016FL} as a framework where many decentralized clients collaboratively train a shared global model by sending only updates (e.g. gradients, model differences) to a central server, without transmitting data from their underlying databases. Such decentralized design\cite{Jeffrey2012distributed} exploits their computational resource to handle vast amounts of data held by each client. To make such framework practical, the authors in \cite{McMahan2016FL} propose the FederatedAveraging (\texttt{FedAvg}) approach, empirically showing that it can train deep neural networks efficiently. Unlike in classical centralized machine learning, communication efficiency and privacy protection are primary concerns\cite{Tian2019FLChallenges} in the study of federated learning, due to the distributed nature and the need to limit information exchange between parties. Traditionally, these challenges have been addressed separately, with compression or quantization schemes \cite{Konecn2016NIPSFederatedLS,lin2020DGCreducing,Corentin2017DLadapCompr,alistarh2017QSGD,shlezinger2020uveqfed} improving communication efficiency, and differential privacy (DP) mechanisms \cite{Dwork06DP,Dwork14Book} enhancing privacy protection.

To address communication efficiency and privacy protection simultaneously, a common strategy is to first apply a differential privacy (DP) mechanism (e.g. Gaussian\cite{Dwork14Book} or Laplace\cite{Dwork06DP} noise), and then quantize the perturbed updates\cite{girgis2021shuffled,agarwal2021skellam}. However, this DP-then-quantize pipeline is suboptimal: as the aggregated error from the privacy mechanism and quantization scheme is no longer exactly controlled, which harms the model accuracy. Also, the quantization error itself does not contribute to privacy protection against other curious clients. An alternative line of work uses randomized quantization such as subtractive dithering 
\ifshortver
\cite{roberts1962picture,schuchman1964dither},
\else 
\cite{roberts1962picture,schuchman1964dither, Limb1969visual, Jayant1972speech, sripad1977necessary},
\fi
to jointly achieve compression and privacy. However, methods based on subtractive dithering 
\ifshortver
\cite{Kim2021FederatedLW,shahmiri2024communication,hasircioglu2023communication,hegazy2024compression} 
\else
\cite{Kim2021FederatedLW,shahmiri2024communication,hasircioglu2023communication,yan2023layered,hegazy2024compression} 
\fi
typically rely on scalar quantization, which is less efficient than vector quantization. In addition, they often support only certain noise distributions such as Laplace or multivariate-$t$ distributions \cite{lang2023joint},\footnote{Although \cite{lang2023joint} proposes adding a low-power privacy-preserving noise before quantization, it does not specify how to design the necessary infinitesimal noise for local DP's Gaussian noise, a highly non-trivial task. In contrast, our CEPAM approach provides a clear specification for generating Gaussian noise under the central DP model.} rather than fully encompassing the Gaussian mechanism.

Another related line of work is the study of channel simulation \cite{bennett2002entanglement,harsha2010communication,sfrl_trans,havasi2019minimal,shah2022optimal,flamich2023greedy,liu2024universal,Li2024ChannelSim}, where one seeks to reproduce the effect of 
\ifshortver
a noise channel. 
\else
a noise channel with or without shared randomness. 
\fi
Based on the observation that Gaussian noise can be rewritten as a mixture of uniform distributions\cite{WalkerUniformP1999,CHOY2003exppower}, constructions in \cite{agustsson2020universally, hegazy2022randomized} combine randomized quantization and dithering to simulate any one-dimensional additive-noise channel, and more recent results \cite{ling2025RSUQ} extend these ideas to general multivariate distributions via randomized vector quantization. In particular, 
\ifshortver
RSUQ \cite{ling2025RSUQ} 
\else 
the rejection-sampled universal quantizer (RSUQ) \cite{ling2025RSUQ} 
\fi
has been shown to exactly simulate any multivariate non-uniform continuous additive-noise channel with finite communication.

More recently, a joint mechanism 
\ifshortver
CEPAM 
\else 
called the Communication-Efficient and Privacy-Adaptable Mechanism (CEPAM) 
\fi
has been proposed \cite{ling2025communication} for the FL setting. CEPAM is built upon RSUQ by exploiting its ability to reinterpret quantization distortion as an additive noise term with controllable variance, independent of the underlying model updates. In particular, by appropriately configuring RSUQ,\footnote{A variation of RSUQ -- layered rejection-sampled universal quantizer (LRSUQ) was used.} CEPAM can emulate different privacy mechanisms while simultaneously compressing the updates, thereby enabling a tunable trade-off between privacy protection and communication cost within a unified framework.

This work is a continuation of the study of CEPAM in the FL setting. In contrast to \cite{ling2025communication}, which studies the performance of CEPAM when compressing the model differences as updates, we consider a slightly modified setup in which CEPAM is applied to compress the stochastic gradients. Moreover,
\ifshortver
\else
whereas the earlier work reported empirical convergence behavior of CEPAM without a formal analysis, \fi
we provide a rigorous convergence analysis together with theoretical privacy guarantees for the proposed scheme.

We theoretically analyze the convergence behavior of CEPAM, and validate our convergence results for  \ifshortver CEPAM-Gaussian
\else
both CEPAM-Gaussian and CEPAM-Laplace \fi 
through experimental evaluations.
Experimental results demonstrate that
\ifshortver CEPAM-Gaussian achieve improvements of 0.8-1.1\% in test accuracy\else
both CEPAM-Gaussian and CEPAM-Laplace achieve improvements of 0.8-1.1\% in test accuracy, respectively\fi, compared to several other commonly used baselines in the FL setup. In addition, we investigate the accuracy-privacy trade-offs among clients through experimental evaluations\ifshortver. \else , and the observed behavior coincides with that obtained when compressing model differences in the previous study \cite{ling2025communication}. \fi

\ifshortver
\else
The rest of this paper is structured as follows. In Section \ref{sec: model}, we define the system model and review preliminaries. In Section \ref{sec: CEPAM}, we detail the proposed scheme CEPAM and highlight the modifications. In Section \ref{sec: analysis}, we provide theoretical analysis on the performance of CEPAM, and the corresponding experimental evaluations are given in \ref{sec: exp}. 
Finally, we conclude the paper in Section \ref{sec: conclusion}.
\fi

\section{System Model and Preliminaries} \label{sec: model}

In this section, we describe the system model and setups\ifshortver \else used in this paper\fi. We first present the FL framework in Section \ref{sec: fl}, followed by a detailed description of LRSUQ in Section \ref{sec: rsuq}. Section \ref{sec: dp} reviews the necessary background on differential privacy\ifshortver, and we state the problem setting in Section \ref{sec: problem setting}.\else. 
Finally, we identify and summarize the desired goals under our framework in Section \ref{sec: problem setting}.
\fi

\subsection*{Notations}

Write $H(X)$ for the entropy and in bits.
Logarithms are to the base $2$.  
For $\mathcal{A},\mathcal{B}\subseteq\mathbb{R}^{n}$,
$\beta\in\mathbb{R}$, $\mathbf{G}\in\mathbb{R}^{n\times n}$, $\mathbf{x}\in\mathbb{R}^{n}$
write $\beta\mathcal{A}:=\{\beta\mathbf{z}:\,\mathbf{z}\in\mathcal{A}\}$,
$\mathbf{G}\mathcal{A}:=\{\mathbf{G}\mathbf{z}:\,\mathbf{z}\in\mathcal{A}\}$,
$\mathcal{A}+\mathbf{x}:=\{\mathbf{z}+\mathbf{x}:\,\mathbf{z}\in\mathcal{A}\}$,
$\mathcal{A}+\mathcal{B}=\{\mathbf{y}+\mathbf{z}:\,\mathbf{y}\in\mathcal{A},\,\mathbf{z}\in\mathcal{B}\}$
for the Minkowski sum, $\mathcal{A}-\mathcal{B}=\{\mathbf{y}-\mathbf{z}:\,\mathbf{y}\in\mathcal{A},\,\mathbf{z}\in\mathcal{B}\}$,
and $\mu(\mathcal{A})$ for the Lebesgue measure of $\mathcal{A}$.
Let $B^{n}:=\{\mathbf{x}\in\mathbb{R}^{n}:\,\Vert\mathbf{x}\Vert\le1\}$ be the unit $n$-ball.
For a function $f:\mathbb{R}^n \rightarrow \mathbb{R}$, its superlevel set is defined as $L_{u}^{+}(f):=\{\mathbf{x} \in \mathbf{R}^n:f(\mathbf{x}) \geq u\}$.

\subsection{Federated Learning} \label{sec: fl}

For the setup, we consider the FL framework 
(or \emph{federated optimization})
proposed in \cite{McMahan2016FL}.
More explicitly, $K$ clients (or devices) possess local dataset $\mathcal{D}^{(k)}$ where $k \in \{1, 2, ..., K\}  =: \mathcal{K}$, work jointly together to train a shared global model $\mathbf{W}$ with $m$ parameters through a central server.
The goal is to minimize the  objective function $F:\mathbb{R}^m \to \mathbb{R}$:
\begin{equation} \label{eq:FLOpt}
    \min_{\mathbf{W} \in \mathbb{R}^m} \Big\{F(\mathbf{W}):= \sum_{k \in \mathcal{K}} p_{k}F_{k}(\mathbf{W})\Big\},
\end{equation}
where $p_k$ is the weight of client $k$ such that $p_k \ge 0$ and $\sum_{k \in \mathcal{K}} p_{k} = 1$.
Suppose that the $k$-th local dataset contains $n_{k}$ training data: $\mathcal{D}^{(k)} = \{\xi_{k,1}, \dots, \xi_{k,n_{k}}\}$. The local objective function $F_k:\mathbb{R}^m \to \mathbb{R}$ is defined by
\begin{equation}
    F_{k}(\mathbf{W}) \equiv F_k(\mathbf{W},\mathcal{D}^{(k)}) := \frac{1}{n_k} \sum_{j=1}^{n_{k}} \ell(\mathbf{W};\xi_{k,j}),
\end{equation}
where $\ell(\cdot;\cdot)$ is an application-specified loss function.

Let $T$ denote the total number of iterations in FL and define the set of \emph{synchronization indices} $\mathcal{T}_T := \{0, \tau, 2\tau, \ldots, T\}$, i.e., the set of integer multiples of some positive integer $\tau \in \mathbb{Z}^+$ where $T \equiv 0 \pmod{\tau}$. 
We describe one FL round of the FedAvg \cite{McMahan2016FL} with some modifications to solve the optimization problem \eqref{eq:FLOpt} as follows. 
Let $\mathbf{W}_t$ denote the global parameter vector available on the  server at the time instance $t \in \mathcal{T}_T$. 
At the beginning of each FL round, the server broadcasts $\mathbf{W}_t$ to all clients.
Then, each client $k$ sets $\mathbf{W}_{t}^{k}=\mathbf{W}_{t}$ and computes the $\tau-1 \; (\ge 1)$ local parameter vectors by SGD:\footnote{When $T$ is fixed, the larger $\tau$ is, the fewer the communication rounds there are.}
\begin{equation} \label{eq:local_sgd}
    \mathbf{W}_{t+t'}^{k} \leftarrow \mathbf{W}_{t+t'-1}^{k} - \eta_{t+t'-1} \nabla F_{k}^{j_{t+t'-1}^{k}}(\mathbf{W}_{t+t'-1}^{k}),
\end{equation}
for $t'=1, \ldots, \tau-1,$ where $\eta_{t+t'-1}$ is the learning rate, $\nabla F_{k}^{j}(\mathbf{W}):=\nabla F_{k}(\mathbf{W};\xi_{k,j})$ is the gradient computed at a single sample of index $j$, and $j_{t}^{k}$ is the sample index chosen uniformly from the local data $\mathcal{D}^{(k)}$ of client $k$ at time $t$.\footnote{In this work, our focus is on analyzing the computation of a single stochastic gradient at each client during every time instance.
The FL convergence rates can potentially be enhanced by incorporating mini-batching technique \cite{stich2018local}, leaving the detailed analysis for future work.}
Finally, client $k$ computes $\mathbf{X}_{t+\tau-1}^{k} = \nabla F_{k}^{j_{t+\tau}^{k}}(\mathbf{W}_{t+\tau-1}^k)$.
Let $M$ be the maximum $\ell_2$-norm of all possible gradients for any given weight vector $\mathbf{W}$ and sampled dataset $\xi_k$, i.e., 
\ifshortver $M := \sup_{\mathbf{W} \in \mathbb{R}^m, \xi_k \in \mathcal D^{(k)}} \mathbb{E}\left[\left\| \nabla F_k(\mathbf{W}; \xi_{k,j})\right\|_2\right]$.
\else
\begin{align*}
    M := \sup_{\mathbf{W} \in \mathbb{R}^m, \xi_k \in \mathcal D^{(k)}} \mathbb{E}\left[\left\| \nabla F_k(\mathbf{W}; \xi_{k,j})\right\|_2\right]. 
\end{align*} 
\fi
For simplicity, we assume that all $K$ clients participate in each FL round for all $k \in \mathcal K$. 
The server then aggregates $K$ local gradients $\{\mathbf{X}_{t+\tau-1}^{k}\}_{k\in \mathcal{K}}$, computes the new global parameter vector:
\begin{equation} \label{eq:global_without_quan}
    \mathbf{W}_{t+\tau} \leftarrow \mathbf{W}_{t} - \eta_{t+\tau-1}\sum_{k \in \mathcal{K}} p_{k}\mathbf{X}_{t+\tau-1}^{k},
\end{equation}
and broadcasts  $\mathbf{W}_{t+\tau}$ to all clients.

The dataset $\mathcal{D}^{(k)}$ inherently induces a distribution. 
By an abuse of notation, we also denote this induced distribution as $\mathcal{D}^{(k)}$.
Suppose the data in client $k$ is iid sampled from the induced distribution $\mathcal{D}^{(k)}$.
Thus, the overall distribution becomes a mixture of all local distributions: $\mathcal{D} = \sum_{k \in \mathcal{K}} p_k \mathcal{D}^{(k)}$. 
Previous works typically assume that the data is iid generated by or partitioned among the $K$ clients, i.e., for all $k \in \mathcal{K}$, $\mathcal{D}^{(k)} = \mathcal{D}$.
In contrast, we consider a scenario where the data is non-iid (or heterogeneous), implying that $F_k$ could potentially be an arbitrarily poor approximation to $F$.

\subsection{Rejection-Sampled Universal Quantizer} \label{sec: rsuq}

We briefly review  RSUQ \cite{ling2025RSUQ}, which is constructed based on the subtractive dithered  quantizer (SDQ) \cite{roberts1962picture,ziv1985universal,zamir1992universal}.

Given a non-singular generator matrix
$\mathbf{G}\in\mathbb{R}^{n\times n}$,  a \emph{lattice} is the set $\mathbf{G}\mathbb{Z}^{n}=\{\mathbf{G}\mathbf{j}:\,\mathbf{j}\in\mathbb{Z}^{n}\}$. 
A bounded set $\mathcal{P}\subseteq\mathbb{R}^{n}$ is called a \emph{basic
cell} of the lattice $\mathbf{G}\mathbb{Z}^{n}$ if $(\mathcal{P}+\mathbf{G}\mathbf{j})_{\mathbf{j}\in\mathbb{Z}^{n}}$
forms a partition of $\mathbb{R}^{n}$ \cite{conway2013sphere, zamir2014}.
Specifically, the Voronoi
cell $\mathcal{V}:=\{\mathbf{x}\in\mathbb{R}^{n}:\,\arg\min_{\mathbf{j}\in\mathbb{Z}^{n}}\Vert\mathbf{x}-\mathbf{G}\mathbf{j}\Vert=\mathbf{0}\}$ is a basic cell.
Given a basic cell $\mathcal{P}$, we can define a \emph{lattice quantizer} $Q_{\mathcal{P}}:\mathbb{R}^{n}\to\mathbf{G}\mathbb{Z}^{n}$ such that $Q_{\mathcal{P}}(\mathbf{x})=\mathbf{y}$
where $\mathbf{y}\in\mathbf{G}\mathbb{Z}^{n}$ is the unique lattice
point that satisfies $\mathbf{x}\in-\mathcal{P}+\mathbf{y}$. 
The resulting quantization error $\mathbf{z}:=Q_{\mathcal{P}}(\mathbf{x})-\mathbf{x}$ depends deterministically on the input $\mathbf{x}$ and is approximately uniformly distributed over
 the basic cell of the lattice quantizer under some regularity assumptions. 
Therefore, it is often combined with probabilistic methods such as random dithering to construct SDQ  \cite{ziv1985universal,zamir1992universal}.

\begin{definition}
Given a basic cell $\mathcal{P}$ and a random dither $\mathbf{V}\sim\mathrm{Unif}(\mathcal{P})$, a \emph{subtractive dithered
quantizer} (SDQ) $Q_{\mathcal{P}}^{SDQ}:\mathbb{R}^{n}\times\mathcal{P}\to\mathbb{R}^{n}$ for an input $\mathbf{x} \in \mathbb{R}^n$ is given by 
\ifshortver
$Q_{\mathcal{P}}^{SDQ}(\mathbf{x},\mathbf{v})=Q_{\mathcal{P}}(\mathbf{x}-\mathbf{v})+\mathbf{v}$.
\else 
$Q_{\mathcal{P}}^{SDQ}(\mathbf{x},\mathbf{v})=Q_{\mathcal{P}}(\mathbf{x}-\mathbf{v})+\mathbf{v}$, where $Q_{\mathcal{P}}$ is the lattice quantizer. 
\fi
\end{definition}

It is well-known that the resulting quantization error of SDQ is uniformly distributed over the basic cell of the quantizer and is statistically independent of the input signal \cite{schuchman1964dither,zamir1992universal,gray1993dithered,kirac1996results,zamir1996lattice}.
However, it may be desirable to have the quantization error follow a uniform distribution over an arbitrary set, rather than being distributed uniformly over a basic cell. 
RSUQ is a randomized quantizer where the quantization error is uniformly distributed over a set $\mathcal{A}$, a subset of a basic cell. This quantization scheme is based on applying rejection sampling on top of SDQ.
Intuitively, we keep generating new dither signals until the quantization error falls in $\mathcal{A}$.\footnote{ It is easy to see that the acceptance probability is $\mu(\mathcal{A})/\mu(\mathcal{P})$.}

\begin{definition} \cite[Definition 3]{ling2025RSUQ}
\label{def:rej_samp_quant} Given a basic cell $\mathcal{P}$ of the
lattice $\mathbf{G}\mathbb{Z}^{n}$, a subset $\mathcal{A}\subseteq\mathcal{P}$, and a sequence $S=(\mathbf{V}_{i})_{i\in\mathbb{N}^{+}}$, $\mathbf{V}_{1},\mathbf{V}_{2},\ldots\stackrel{iid}{\sim}\mathrm{Unif}(\mathcal{P})$
are i.i.d. dithers, 
the \emph{rejection-sampled universal quantizer} (RSUQ) $Q_{\mathcal{A},\mathcal{P}}:\mathbb{R}^{n}\times \prod_{i \in \mathbb{N}^{+}}\mathcal{P}_i\to\mathbb{R}^{n}$ for $\mathcal{A}$
against $\mathcal{P}$ is given by 
\ifshortver
\begin{align}
&Q_{\mathcal{A},\mathcal{P}}(\mathbf{x},(\mathbf{v}_{i})_{i}):=Q_{\mathcal{P}}(\mathbf{x}-\mathbf{v}_{h})+\mathbf{v}_{h},\label{eq:QAP_def} \\
&h:=\min\big\{ i:\,Q_{\mathcal{P}}(\mathbf{x}-\mathbf{v}_{i})+\mathbf{v}_{i}-\mathbf{x}\in\mathcal{A}\big\}.\label{eq:kstar}
\end{align}
\else
\begin{equation}
Q_{\mathcal{A},\mathcal{P}}(\mathbf{x},(\mathbf{v}_{i})_{i}):=Q_{\mathcal{P}}(\mathbf{x}-\mathbf{v}_{h})+\mathbf{v}_{h},\label{eq:QAP_def}
\end{equation}
where
\begin{equation}
h:=\min\big\{ i:\,Q_{\mathcal{P}}(\mathbf{x}-\mathbf{v}_{i})+\mathbf{v}_{i}-\mathbf{x}\in\mathcal{A}\big\},\label{eq:kstar}
\end{equation}
and $Q_{\mathcal{P}}$ is the lattice quantizer for basic cell
$\mathcal{P}$.
\fi
\end{definition}

Note that when $\mathcal{A}=\mathcal{P}$, SDQ is a special case of RSUQ. 
RSUQ can be generalized to simulate an additive noise channel with noise following a continuous  distribution by using layered construction as in \cite{hegazy2022randomized,wilson2000layered}.
Consider a pdf $f:\mathbb{R}^{n}\to[0,\infty)$ and write its superlevel set as
\ifshortver $L_{u}^{+}(f)=\{\mathbf{z}\in\mathbb{R}^{n}:\,f(\mathbf{z})\ge u\}$.
\else
\[
L_{u}^{+}(f)=\{\mathbf{z}\in\mathbb{R}^{n}:\,f(\mathbf{z})\ge u\}.
\]
\fi
Let $f_{U}(u):=\mu(L_{u}^{+}(f))$ for $u>0$, which is also a pdf. 
If we generate $U\sim f_{U}$, and then $\mathbf{Z}|\{U=u\}\sim\mathrm{Unif}(L_{u}^{+}(f))$,
and $\mathbf{Z}\sim f$ by the fundamental theorem of simulation \cite{robert2004monte}.

\begin{definition} \cite[Definition 4]{ling2025RSUQ}
\label{def:rej_samp_quant_layer} Given a basic cell $\mathcal{P}$
of the lattice $\mathbf{G}\mathbb{Z}^{n}$, a probability density
function  $f:\mathbb{R}^{n}\to[0,\infty)$ where $L_{u}^{+}(f)$ is
always bounded for $u>0$, and $\beta:(0,\infty)\to[0,\infty)$ satisfying
$L_{u}^{+}(f)\subseteq\beta(u)\mathcal{P}$ for $u>0$, and a random pair $S=(U,(\mathbf{V}_{i})_{i\in\mathbb{N}^{+}})$ where the latent variable 
$U\sim f_{U}$ with $f_{U}(u):=\mu(L_{u}^{+}(f))$, and $\mathbf{V}_{1},\mathbf{V}_{2},\ldots\stackrel{iid}{\sim}\mathrm{Unif}(\mathcal{P})$
is a sequence of i.i.d. dither signals, the \emph{layered rejection-sampled universal quantizer} (LRSUQ) $Q_{f,\mathcal{P}}:\mathbb{R}^{n}\times \mathbb{R} \times \prod_{i \in \mathbb{N}^{+}}\mathcal{P}_i\to\mathbb{R}^{n}$ for $f$ against $\mathcal{P}$
is given by 
\ifshortver
\begin{align}
&Q_{f,\mathcal{P}}(\mathbf{x},u,(\mathbf{v}_{i})_{i}):=\beta(u)\cdot\big(Q_{\mathcal{P}}(\mathbf{x}/\beta(u)-\mathbf{v}_{h})+\mathbf{v}_{h}\big), \\
&h:=\min_i \beta(u)\big(Q_{\mathcal{P}}(\tfrac{\mathbf{x}}{\beta(u)}-\mathbf{v}_{i})+\mathbf{v}_{i}\big)-\mathbf{x}\in L_{u}^{+}(f).
\end{align}
\else
\begin{equation}
Q_{f,\mathcal{P}}(\mathbf{x},u,(\mathbf{v}_{i})_{i}):=\beta(u)\cdot\big(Q_{\mathcal{P}}(\mathbf{x}/\beta(u)-\mathbf{v}_{h})+\mathbf{v}_{h}\big),
\end{equation}
where
\begin{equation}
h:=\min\big\{ i:\,\beta(u)\cdot\big(Q_{\mathcal{P}}(\mathbf{x}/\beta(u)-\mathbf{v}_{i})+\mathbf{v}_{i}\big)-\mathbf{x}\in L_{u}^{+}(f)\big\},
\end{equation}
and $Q_{\mathcal{P}}$ is the lattice quantizer for basic cell
$\mathcal{P}$.
\fi
\end{definition}

It can be shown that LRSUQ indeed gives the desired error distribution.

\begin{proposition} \label{prop:LRSUQ_error} 
\cite[Proposition 4]{ling2025RSUQ}
% Consider LRSUQ $Q_{f,\mathcal{P}}$. 
For any random input $\mathbf{X}$, the quantization error of LRSUQ $Q_{f,\mathcal{P}}$
defined by $\mathbf{Z} := Q_{f,\mathcal{P}}(\mathbf{X},U,(\mathbf{V}_{i})_{i}) - \mathbf{X}$, 
follows the pdf $f$, independent of $\mathbf{X}$.
\end{proposition}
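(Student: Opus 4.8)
The plan is to reduce the layered construction to the non-layered RSUQ of Definition~\ref{def:rej_samp_quant} by conditioning on the latent variable $U$ and rescaling, and then to apply the layered-sampling identity recorded just before Definition~\ref{def:rej_samp_quant_layer}. Fix any $u$ in the support of $f_U$; there $f_U(u)=\mu(L_u^+(f))>0$, so (since $L_u^+(f)\subseteq\beta(u)\mathcal{P}$ would otherwise force $\mu(L_u^+(f))=0$) we may take $\beta(u)>0$, making $\mathbf{x}/\beta(u)$ well defined. Put $\mathcal{A}_u:=\beta(u)^{-1}L_u^+(f)$; the hypothesis $L_u^+(f)\subseteq\beta(u)\mathcal{P}$ gives $\mathcal{A}_u\subseteq\mathcal{P}$, so $Q_{\mathcal{A}_u,\mathcal{P}}$ is a legitimate RSUQ for $\mathcal{A}_u$ against $\mathcal{P}$. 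The first step is to verify that, conditioned on $U=u$,
\[
Q_{f,\mathcal{P}}(\mathbf{X},u,(\mathbf{V}_i)_i)=\beta(u)\,Q_{\mathcal{A}_u,\mathcal{P}}\!\bigl(\mathbf{X}/\beta(u),(\mathbf{V}_i)_i\bigr).
\]
This reduces to matching the stopping rules \eqref{eq:kstar}: writing $\mathbf{x}':=\mathbf{x}/\beta(u)$ and using $\beta(u)\bigl(Q_{\mathcal{P}}(\mathbf{x}'-\mathbf{v}_i)+\mathbf{v}_i\bigr)-\mathbf{x}=\beta(u)\bigl(Q_{\mathcal{P}}(\mathbf{x}'-\mathbf{v}_i)+\mathbf{v}_i-\mathbf{x}'\bigr)$, the LRSUQ acceptance event $\{\beta(u)(Q_{\mathcal{P}}(\mathbf{x}'-\mathbf{v}_i)+\mathbf{v}_i)-\mathbf{x}\in L_u^+(f)\}$ is literally the RSUQ acceptance event $\{Q_{\mathcal{P}}(\mathbf{x}'-\mathbf{v}_i)+\mathbf{v}_i-\mathbf{x}'\in\mathcal{A}_u\}$, so the two minimal indices $h$ coincide and the displayed identity follows.

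Second, I would invoke the basic error property of RSUQ noted in the text above Definition~\ref{def:rej_samp_quant} — rejection sampling on top of SDQ makes the quantization error exactly $\mathrm{Unif}(\mathcal{A}_u)$ and independent of the input. Hence, conditioned on $U=u$,
\[
Q_{\mathcal{A}_u,\mathcal{P}}\!\bigl(\mathbf{X}/\beta(u),(\mathbf{V}_i)_i\bigr)-\mathbf{X}/\beta(u)\sim\mathrm{Unif}(\mathcal{A}_u),
\]
independent of $\mathbf{X}/\beta(u)$, hence of $\mathbf{X}$. Multiplying by $\beta(u)$ and using the elementary change-of-variables fact that scaling a $\mathrm{Unif}(\mathcal{A}_u)$ vector by $\beta(u)>0$ gives a $\mathrm{Unif}(\beta(u)\mathcal{A}_u)=\mathrm{Unif}(L_u^+(f))$ vector, the identity of the first step yields, with $\mathbf{Z}:=Q_{f,\mathcal{P}}(\mathbf{X},U,(\mathbf{V}_i)_i)-\mathbf{X}$,
\[
\mathbf{Z}\mid\{U=u\}\sim\mathrm{Unif}\bigl(L_u^+(f)\bigr),
\]
with conditional law not depending on $\mathbf{X}$.

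Finally, $U\sim f_U$ with $f_U(u)=\mu(L_u^+(f))$, which is a genuine density by the layer-cake identity $\int_0^\infty\mu(L_u^+(f))\,du=\int_{\mathbb{R}^n}f(\mathbf{z})\,d\mathbf{z}=1$. Combining this with $\mathbf{Z}\mid\{U=u\}\sim\mathrm{Unif}(L_u^+(f))$, the fundamental theorem of simulation \cite{robert2004monte} (applied exactly as in the discussion preceding Definition~\ref{def:rej_samp_quant_layer}) gives that the marginal law of $\mathbf{Z}$ is $f$. Independence of $\mathbf{Z}$ from $\mathbf{X}$ then follows because the conditional law of $\mathbf{Z}$ given $U=u$ is the same for every value of $\mathbf{X}$ and $U$ (part of the exogenous randomness $S$) is independent of $\mathbf{X}$; integrating out $u$ preserves this. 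I expect the only real obstacle to be the bookkeeping in the first step — confirming that rescaling by $\beta(u)$ commutes with the lattice quantizer in the sense above and that the LRSUQ and RSUQ stopping times genuinely agree — since the rest is either the quoted RSUQ error property or the quoted layered-sampling argument, neither of which needs to be re-derived here.
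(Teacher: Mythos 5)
The paper does not actually prove this proposition; it is imported verbatim as \cite[Proposition 4]{ling2025RSUQ}, so there is no in-paper argument to compare yours against. That said, your reconstruction is correct and is essentially the intended layered-construction argument: the rescaling identity $Q_{f,\mathcal{P}}(\mathbf{x},u,(\mathbf{v}_i)_i)=\beta(u)\,Q_{\mathcal{A}_u,\mathcal{P}}(\mathbf{x}/\beta(u),(\mathbf{v}_i)_i)$ with $\mathcal{A}_u=\beta(u)^{-1}L_u^+(f)$ is verified correctly (the two acceptance events are literally the same set inclusion after dividing by $\beta(u)$, so the stopping indices coincide), the observation that $\beta(u)>0$ on the support of $f_U$ because $\mu(L_u^+(f))>0$ forces $\beta(u)\mathcal{P}$ to have positive measure is a nice touch of rigor, and the passage from $\mathbf{Z}\mid\{U=u\}\sim\mathrm{Unif}(L_u^+(f))$ to $\mathbf{Z}\sim f$ via the layer-cake/fundamental-theorem-of-simulation identity, together with the independence argument (the conditional law given $U=u$ is the same for every realization of $\mathbf{X}$, and $U$ is exogenous), is exactly right. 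The one ingredient you take on faith is the RSUQ error property itself --- that the accepted error is $\mathrm{Unif}(\mathcal{A}_u)$ and independent of the input --- which in this paper is likewise only asserted in the text; a complete proof would add the one-line justification that the SDQ error is $\mathrm{Unif}(\mathcal{P})$ independent of the input, so over i.i.d.\ dither trials the error at the first accepted trial is $\mathrm{Unif}(\mathcal{P})$ conditioned on lying in $\mathcal{A}_u$, i.e.\ $\mathrm{Unif}(\mathcal{A}_u)$, still independent of the input. With that caveat flagged (as you do), the proof is complete.
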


\subsection{Differential Privacy} \label{sec: dp}

The original FL framework \cite{McMahan2016FL} provides a certain level of privacy since clients do not directly transmit their private data to the  server.
However, a significant amount of information can still be inferred from the shared data, e.g., model parameters induced by gradient descent, by potential eavesdroppers within the FL network 
\ifshortver
\cite{Zhu2019DeepLeak,huang2021GradInvAtt}. 
\else
\cite{Zhu2019DeepLeak,zhao2020idlg,huang2021GradInvAtt}. 
\fi
Thus, a privacy mechanism, such as DP, is necessary to be equipped on the  system to protect the shared information.

In this section, we briefly review a notion of (central) DP  \cite{Dwork06DP,Dwork14Book}.\footnote{Herein, DP refers to \emph{central} DP.}
In DP, clients place trust in the server (or data curator) responsible for collecting and holding their individual data in a database $X \in \mathcal{X}$, where $\mathcal{X}$ denotes the collection of databases. 
The server then introduces privacy-preserving noise to the original datasets or query results through a randomized mechanism $\mathcal{F}$, producing an output $Y = \mathcal{F}(X) \in \mathcal{Y}$, where $\mathcal{Y}$ denotes the set of possible outputs, before sharing them with untrusted data analysts.
While this model requires a higher level of trust compared to the local model, it enables the design of significantly more accurate algorithms. 
Two databases $X$ and $X'$ are considered \emph{adjacent} if they differ in only one entry. 
 More generally, we can define a symmetric adjacent relation $\mathcal{R} \subseteq \mathcal{X}^2$ and say that $X$ and $X'$ are \emph{adjacent} databases if $(X, X') \in \mathcal{R}$.
Here, we review the definition of $(\epsilon, \delta)$-differentially private, initially introduced by Dwork et al. \cite{Dwork06DP}.

\begin{definition}[$(\epsilon, \delta)$-differential privacy \cite{Dwork06DP}] \label{def:edDP} 
    A randomized mechanism $\mathcal{F}:\mathcal{X} \rightarrow \mathcal{Y}$ with the associated conditional distribution $P_{Y|X}$ of $Y=\mathcal{F}(X)$ is $(\epsilon, \delta)$-\emph{differentially private} ($(\epsilon,\delta)$-DP) if for all $\mathcal{S} \subseteq \mathcal{Y}$ and for all $(X, X') \in \mathcal{R}$,
    \[
    \mathbb{P}(\mathcal{F}(X) \in \mathcal{S}) \leq e^{\epsilon}\mathbb{P}(\mathcal{F}(X') \in \mathcal{S})+\delta.
    \] 
When $\delta = 0$, we say that $\mathcal{F}$ is $\epsilon$-\emph{differentially private} ($\epsilon$-DP).
\end{definition}

In this work, we use the principle of privacy amplification by subsampling \cite{Balle2018PrivAmpl}, whereby the privacy guarantees of a differentially private mechanism are amplified by applying it to a random subsample of the dataset.
\ifshortver For a detailed exposition on the problem of privacy amplification, we refer to \cite{Balle2018PrivAmpl}.
\else 
The problem of privacy amplification can be stated as follows. 
Let $\mathcal{F}:\mathcal{X} \rightarrow \mathcal{Y}$ be a privacy mechanism with privacy profile $\delta_{\mathcal{F}}$ with respect to the adjacent relation defined on $\mathcal{X}$, and let $s:\mathcal{W} \rightarrow \mathcal{X}$ be a subsampling mechanism.
Consider the subsampled mechanism $\mathcal{F}^{s}:\mathcal{W} \rightarrow \mathcal{Y}$ given by $\mathcal{F}^{s}(X) :=\mathcal{F}(s(X))$. 
The goal is to relate the privacy profile of $\mathcal{F}$ and $\mathcal{F}^s$. 
For a detailed exposition, we refer to \cite{Balle2018PrivAmpl}.
\fi

\subsection{Problem Setting} \label{sec: problem setting}

\textbf{Threat model}:
We adopt trusted aggregator model, i.e., the server is trusted.
Moreover, we assume that there are separate and independent sources of shared randomness between each client and the server to perform quantization. 
Clients engaged in the FL framework are assumed to be honest yet inquisitive\ifshortver. 
\else
, meaning they comply with the protocol but may attempt to deduce sensitive client information from the average updates received by the server. 
Privacy requirements are:
\begin{itemize}[left=0pt, itemindent=0pt, labelsep=0.5em, labelwidth=1.2em]
    \item Protecting the privacy of each client's local dataset from other clients, as the updated local gradients between rounds may inadvertently disclose sensitive information.
    \item Preventing privacy leaks from the final trained model upon completion of training, as it too may inadvertently reveal sensitive information. 
    This ensures the relevance of our solution in scenarios where clients are trusted, and the final trained model may be publicly released to third parties.
\end{itemize}
\fi

We intend to construct a mechanism that able to jointly handle privacy requirements  and compression demands (for lossless uplink channels with limited bandwidth) in a single local gradient update at each client within the FL framework.
Since the distribution of the model parameters and/or the induced gradient is often unknown to the clients, our focus lies in creating a universal mechanism applicable to any random source. Furthermore, we are exploring privacy mechanisms that introduce noise customized to various noise distributions. 
While Laplace mechanism provides pure DP protection, 
Gaussian mechanism only provides approximate DP protection with a small failure probability \cite{Dwork06DP,Dwork14Book}. 
However, it is well-known that Gaussian mechanism support tractability of the privacy budget in mean estimation \cite{Dong2022GDP,Mironov2017RDP}, an important subroutine in FL.
Such schemes can be formulated as mappings from the local gradient $\mathbf{X}_{t}^{k}=\nabla F_{k}^{j_{t}^{k}}(\mathbf{W}_{t}^k) \in \mathbb{R}^m$ at client $k$ at the time instance $t$ to the estimated gradient $\hat{\mathbf{X}}_{t}^{k} \in \mathbb{R}^m$ at the server, \ifshortver and the desired properties are summarized in \cite[Section~\ref{sec: problem setting}]{}.
\else 
aimed to achieve the following desired properties:
\begin{enumerate}[left=0pt, labelwidth=2.5em, labelsep=0.5em, align=left, itemindent=0pt]
    \item Privacy requirement : The perturbed query function generated by the privacy mechanism must adhere to $(\epsilon,\delta)$-DP (or $\epsilon$-DP). 
    \ifshortver
    \else
    For instance, ensuring that the mapping of the average of local gradients across clients $\frac{1}{K}\sum_{k \in \mathcal{K}}\mathbf{X}_{t}^k$ to the average of estimated gradients $\frac{1}{K}\sum_{k \in \mathcal{K}}\hat{\mathbf{X}}_{t}^k$ at the server satisfies $(\epsilon,\delta)$-DP. 
    \fi
    \item  Compression/communication-efficiency : The estimation $\hat{\mathbf{X}}_{t+\tau}^{k}$ from client $k$ to the server should be represented by finite bits per sample.
    \item Universal source : the scheme should operate reliably irrespective of the distribution of $\mathbf{X}_t^k$ and without prior knowledge of it. 
    \item Adaptable noise : The noise $Z$ in the privacy mechanism is customizable according to the required accuracy level and privacy protection.
\end{enumerate}
\fi

\section{CEPAM for FL} \label{sec: CEPAM}

\ifshortver

We recap CEPAM (with some modifications) \cite{ling2025communication}. 
\else
In this section, 
we recap CEPAM (with some modifications), which was initially proposed in \cite{ling2025communication}. 
\fi
CEPAM utilizes a randomized quantizer to provide privacy enhancement in FL setting. Specifically, CEPAM modifies  schemes in \cite{shlezinger2020uveqfed,hasircioglu2023communication} by replacing universal quantization \cite{ziv1985universal,zamir1992universal} with LRSUQ  as outlined in Section \ref{sec: rsuq}. 
\ifshortver
\else
Moreover, CEPAM generalizes schemes in 
\ifshortver
\cite{hasircioglu2023communication,hegazy2022randomized},
\else
\cite{hasircioglu2023communication,hegazy2022randomized,yan2023layered},
\fi
as scalar quantization is a special case of vector quantization when the dimension is one. 
\fi
\ifshortver
\else
Using LRSUQ, as discussed in Section \ref{sec: rsuq} and \cite{wilson2000layered, hegazy2022randomized}, CEPAM is capable of addressing compression and privacy requirements simultaneously.
\fi
In particular, each client $k\in \mathcal K$ computes the stochastic gradient $\mathbf{X}_{t+\tau-1}^k = \nabla F_k^{j_{t+\tau-1}^k} (\mathbf{W}_{t+\tau-1}^k)$ after performing $\tau-1$ local SGD steps, then applies norm clipping to get $\tilde{\mathbf{X}}_{t+\tau-1}^k$, and applies LRSUQ to the clipped stochastic gradients at the end of each client side FL round (which is the iteration $t+\tau-1$). These quantized updates are then partitioned into vectors of suitable lengths as a set of messages $\{(H_j^k,\mathbf{M}_j^k)\}_{j \in \mathcal N}$ to the server. The server receives the messages from each client and adds dithers according to the shared randomness with each client. Next, the server decodes and collects the messages to recover $\hat{\mathbf{X}}_t^k$ as the estimated update for each client. Subsequently, the server averages the estimated gradients from all clients as $\frac{1}{K}\sum_{k \in \mathcal K} \hat{\mathbf{X}}_t^k$, computes the new global model $\mathbf{W}_{t+\tau}$ and broadcasts it to all clients for the next FL round, or output the global model if it is the last iteration $T$. 
\ifshortver
Due to space constraints, CEPAM is
summarized in \cite[Algorithm 1]{???} with detailed description and a flow diagram \cite[Figure 1]{???}.  
The encoding
algorithm ENCODE at the client is summarized in \cite[Algorithm 2]{???}. 
The decoding
algorithm DECODE at the server is summarized in \cite[Algorithm 3]{???}.
\else
A detailed pseudocode description of CEPAM is given in Algorithm \ref{alg:CEPAM} and a flow diagram of CEPAM is given in Figure \ref{fig: flow diagram}.
\fi

\ifshortver
\else
\begin{algorithm}[ht]
\caption{\textsc{CEPAM}} \label{alg:CEPAM}
\begin{algorithmic}[1]
\State \textbf{Inputs:} 
Number of total iterations $T$, number of local iterations $\tau$, number of clients $K$, local datasets $\{\mathcal{D}^{(k)}\}_{k \in \mathcal{K}}$, loss function $\ell(\cdot,\cdot)$, clipping threshold $\gamma>0$
\State \textbf{Output:} Global optimized model $\mathbf{W}_T$
\State \textbf{Initialization:} Client $k$ and the server agree on privacy budget $\epsilon>0$ and privacy relaxation $\delta$ for $(\epsilon,\delta)$-DP (or privacy budget $\epsilon>0$ for $\epsilon$-DP), shared seed $s_k$, lattice dimension $n$, generator matrix $\mathbf{G}$, basic cell $\mathcal{P}$ of $\mathbf{G}\mathbb{Z}^n$,
privacy-preserving noise $\mathbf{Z}\sim f$ with noise variance $\mathrm{Var}(f)>0$, latent variable $U \sim g(u):=\mu(L_{u}^{+}(f))$,  function $\beta:(0,\infty)\to[0,\infty)$ satisfying
$L_{u}^{+}(f)\subseteq\beta(u)\mathcal{P}$ for $u>0$, initial model parameter vector $\mathbf{W}_0$ 
\State \textbf{Protocol at client $k$:}
\For{$t+1 \notin \mathcal{T}_T$}
\State Receive $\mathbf{W}_{t}$ from the server or use $\mathbf{W}_0$ if $t=0$
\State Set $\mathbf{W}_{t}^k  \leftarrow \mathbf{W}_{t}$
\For{$t'=1$ \textbf{to} $\tau-1$}
\State Compute $\mathbf{W}_{t+t'}^{k} \leftarrow \mathbf{W}_{t+t'-1}^{k} - \eta_{t+t'-1} \nabla F_{k}^{j_{t+t'-1}^{k}}(\mathbf{W}_{t+t'-1}^{k})$
\EndFor
\State Compute $\mathbf{X}_{t+\tau-1}^{k} \leftarrow \nabla F_{k}^{j_{t+\tau}^{k}}(\mathbf{W}_{t+\tau-1}^k)$
\State Compute $\tilde{\mathbf{X}}_{t+\tau-1}^{k}\leftarrow\mathbf{X}_{t+\tau-1}^{k}/\max\{1,\Vert\mathbf{X}_{t+\tau-1}^{k}\Vert_2/\gamma\}$ 
\Comment{Perform norm clipping}
\State Run subroutine $\mathrm{ENCODE}(\tilde{\mathbf{X}}_{t+\tau-1}^{k}, s_k, \mathbf{G}, \mathcal{P}, f, \beta(u), N)$ \Comment{See Algorithm \ref{alg:Encode}}
\State Send $\{(H_{t+\tau-1,j}^k,\mathbf{M}_{t+\tau-1,j}^k)\}_{j \in \mathcal{N}}$ to server, using $N\cdot\left(H(\mathrm{Geom}(p(U)\;|\;U)+H(\lceil \log |\mathcal{M}(U)|\rceil\;|\;U)\right)$ bits  
\EndFor 
\State \textbf{Protocol at the server:}
\For{$t+\tau \in \mathcal{T}_T$}
\State Receive $\{H_{t+\tau-1,j}^k, \mathbf{M}_{t+\tau-1,j}^k\}_{j \in \mathcal{N}}$ from clients
\For{$k \in \mathcal{K}$}
\State Run subroutine $\mathrm{DECODE}(\{(H_{t+\tau-1,j}^k,\mathbf{M}_{t+\tau-1,j}^k)\}_{j \in \mathcal{N}}, s_k, \mathbf{G}, \mathcal{P}, f, \beta(u), N)$ \Comment{See Algorithm \ref{alg:Decode}}
\EndFor
\State Compute  
$\hat{\mathbf{W}}_{t+\tau} \leftarrow \mathbf{W}_{t} - \eta_{t+\tau-1}  \sum_{k \in \mathcal{K}} p_{k}\hat{\mathbf{X}}_{t+\tau-1}^{k}$
\State Set $\mathbf{W}_{t+\tau} \leftarrow \hat{\mathbf{W}}_{t+\tau}$ and broadcast $\mathbf{W}_{t+\tau}$ to all clients, or output $\mathbf{W}_T$ if $t+\tau=T$
\EndFor
\end{algorithmic}
\end{algorithm}

\begin{figure}
    \centering
    \includegraphics[width=0.8\linewidth]{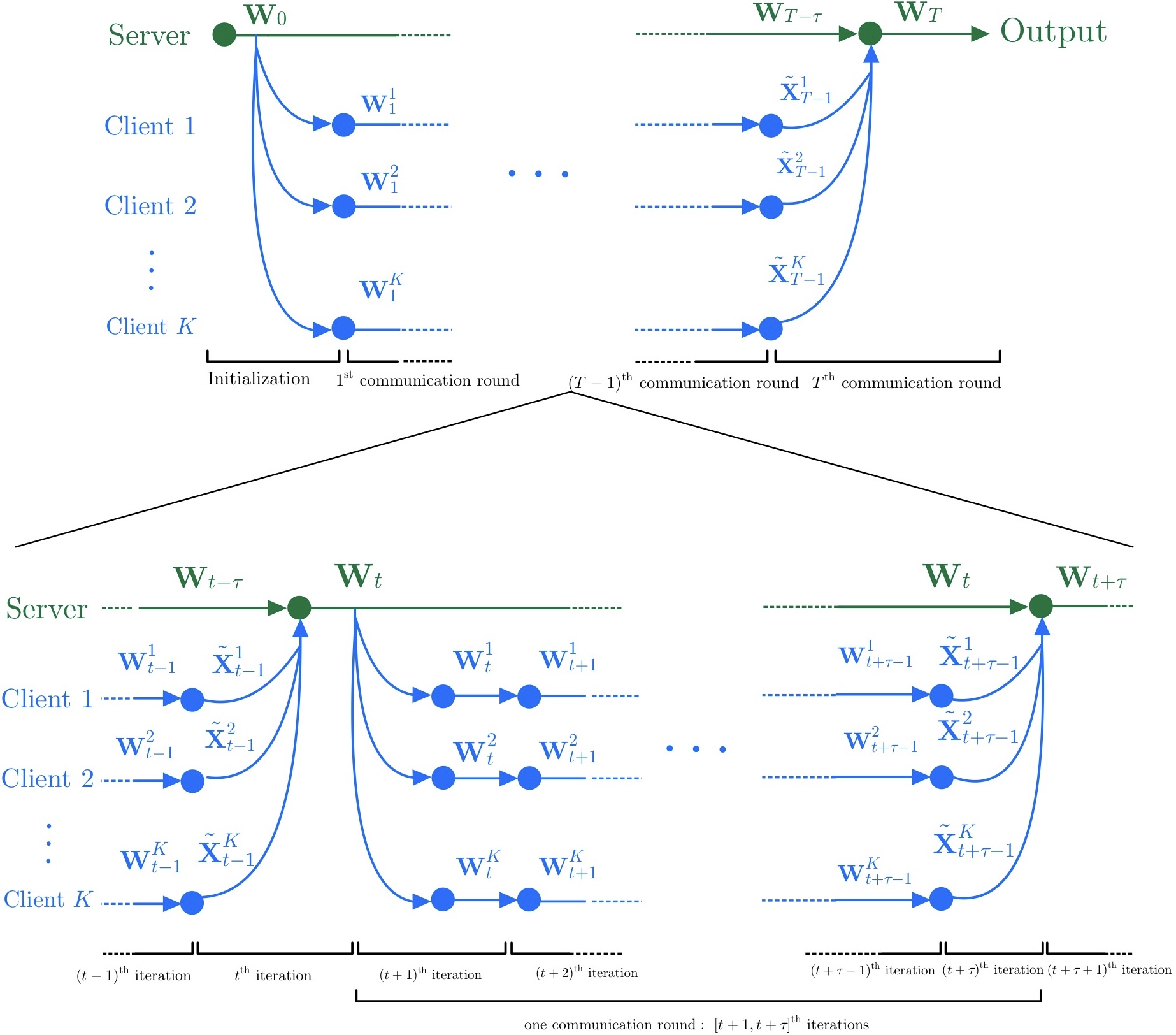}
    \caption{Schematic of CEPAM in the FL framework. The upper part demonstrates initialization and aggregation of model updates between server and clients across global communication rounds. 
    The lower part details one global communication round, illustrating the sequence of local updates over local iterations at clients and server-side aggregation at the end of one FL communication round.
    At the beginning of each communication round,  client $k$ receives the global parameter vector $\mathbf{W}_{t}$ from the server and set $\mathbf{W}_{t}^{k}=\mathbf{W}_t$. 
    Next, client $k$ performs $\tau -1$ steps of local SGD,  clips the gradient $\nabla F_{k}^{j_{t+\tau}^{k}}(\mathbf{W}_{t+\tau-1}^k)$, and encodes $\tilde{\mathbf{X}}_{t+\tau-1}^{k}$. 
    The server then aggregates $K$ estimated local gradients $\hat{\mathbf{X}}_{t+\tau-1}^{k}$ and perform one step of global SGD according to $\mathbf{W}_{t+\tau} = \mathbf{W}_{t}-\eta_{t+\tau-1}\sum_{k \in \mathcal{K}}p_k\hat{\mathbf{X}}_{t+\tau-1}^{k}$.}
    \label{fig: flow diagram}
    \vspace{-3pt}
\end{figure}
\fi

\ifshortver
\else
Besides the local trainings that are carried out at each client, CEPAM can be understood as a three-stage processing: an initialization stage, an encoding stage at each client, and a decoding stage at the server. Before giving the details of each stage, we briefly introduce them. The initialization stage involves both the clients and the server before the start of the FL training, including agreement on the parameters for privacy, randomized quantizers, and the initial model. The encoding stage is independently performed by each client, where the clipped gradient  is encoded into a set of messages. The decoding stage is carried out by the server, which decodes the set of messages from each client and aggregates them to recover an estimated gradient. We assume that all clients execute the same encoding function in every FL global epoch, ensuring consistency across all clients. Thus, we may focus on the $k$-th client in the following. The details of each stage are described below.
\fi

\ifshortver
\else
\textit{1. Initialization:} Before the start of the training, client $k$ and the server agree on the privacy parameters, involving the privacy budget $\epsilon_k$ and privacy relaxation $\delta_k$ for $(\epsilon_k, \delta_k)$-DP (or only $\epsilon_k$ for $\epsilon_k$-DP), and privacy-preserving noise $\mathbf{Z}\sim f$ with noise variance $\mathrm{Var}(f)>0$. 
They then align the parameters for LRSUQ as discussed in Section \ref{sec: rsuq}, including a random seed $s_k$ that serves as a source of common randomness, the lattice dimension $n$, a lattice generator matrix $\mathbf{G}$, a basic cell $\mathcal{P}$ and a function $\beta:(0,\infty)\to[0,\infty)$ satisfying
$L_{u}^{+}(f)\subseteq\beta(u)\mathcal{P}$ for $u>0$ for the randomized quantizer. Finally, both the client and the server initializes their respective PRNGs $\mathfrak{P}$ and $\tilde{\mathfrak{P}}$ with the shared random seed $s_k$, ensuring that the outputs of the two PRNGs are identical.

\textit{2. Encoding at Client:} At the end of every FL communication round, the gradient $\mathbf{X}_{t+\tau-1}^k \in \mathbb{R}^m$ is ready for transmission. Client $k$ executes the encoding function \textsc{ENCODE} to encode the gradient into finite-bit representations by utilizing LRSUQ and entropy coding. 
A detailed pseudocode description of the encoding function is given in Algorithm \ref{alg:Encode}. 
In the following we detail the quantization process:
\begin{itemize}[left=0pt, itemindent=0pt, labelsep=0.5em, labelwidth=1.2em]
    \item \textbf{Norm clipping.} Client $k$ prepares the gradient $\mathbf{X}_{t+\tau-1}^k \in \mathbb{R}^m$. First, client $k$ clips the gradient by the clipping threshold $\gamma \in (0, M]$ to $\tilde{\mathbf{X}}_{t+\tau-1}^k \in \mathcal B(0, \gamma) \subseteq \mathbb{R}^m$ so that the 2-norm of clipped gradient $\tilde{\mathbf{X}}_{t+\tau-1}^k$ is at most $\gamma$.     
    \item \textbf{Vector partitioning.} Partition the clipped gradient $\tilde{\mathbf{X}}_{t+\tau-1}^k$ into $N:= \left\lceil \frac{m}{n}\right\rceil$ sub-vectors, each $n$-dimensional.\footnote{A suitable zero-padding is required if $m$ is not a integer multiple of $n$.} Denote the collection of sub-vectors by $\{\tilde{\mathbf{X}}_{t+\tau-1,j}^k\}_{j \in \mathcal N} \subseteq \mathbb{R}^n$, where $\mathcal N:= \{1,2,\ldots, N\}$. 
    \item \textbf{LRSUQ.} Let $\mathbf{Z} \sim f$ denote the intended privacy-preserving noise random vector with mean $\mathbf{\mu} := \mathbb{E}[\mathbf{Z}] = \mathbf{0}$ and variance $\mathrm{Var}(f) := \mathrm{Var}(\mathbf{Z})$. Also, let $g(u) := \mu(L_u^+(f))$ be the pdf of the latent variable corresponds to the pdf $f$. Then for each $j \in \mathcal N$, generate a sequence of dither signal $\mathbf{V}_{t+\tau-1,j,1}^k, \mathbf{V}_{t+\tau-1,j,2}^k,\ldots \overset{iid}{\sim} \mathrm{Unif}(\mathcal P)$ and the latent random variable $U_{t+\tau-1,j}^k \sim g$ according to the PRNG $\mathfrak{P}^k$ and compute the message $\mathbf{M}_{t+\tau-1,j}^k(i) := Q_{\mathcal P}\Big(\tilde{\mathbf{X}}_{t+\tau-1,j}^k / \beta(U_{t+\tau-1,j}^k) - \mathbf{V}_{t+\tau-1,j, i}^k\Big) \in \mathbf{G} \mathbb{Z}^n$, where $\beta(\cdot)$ is the deterministic scaling function defined in Definition~\ref{def:rej_samp_quant_layer}. Until iteration $i$ that satisfies $\beta(U_{t+\tau-1,j}^k)\Big(\mathbf{M}_{t+\tau-1,j}^k(i) + \mathbf{V}_{t+\tau-1,j ,i}^k\Big) -\tilde{\mathbf{X}}_{t+\tau-1,j}^{k} \in L_{U_{t+\tau-1,j}^k}^+(f)$, take $H_{t+\tau-1,j}^k=i$ and $\mathbf{M}_{t+\tau-1,j}^k = \mathbf{M}_{t+\tau-1,j}^k(i)$. Finally, transmit $(H_{t+\tau-1,j}^k, \mathbf{M}_{t+\tau-1,j}^k)$ as a binary sequence with entropy coding. 
\end{itemize}
\fi

\ifshortver
\else
\begin{algorithm}[ht]
\caption{$\textsc{ENCODE}(\tilde{\mathbf{X}}_{t+\tau-1}^{k}, 
s_k, \mathbf{G}, \mathcal{P}, f, \beta(u), N)$} \label{alg:Encode}
\begin{algorithmic}[1]
\State \textbf{Inputs:} 
Clipped gradient vector $\tilde{\mathbf{X}}_{t+\tau-1}^{k}$,
random seed $s_k$, generator matrix $\mathbf{G}$,  basic cell $\mathcal{P}$ of $\mathbf{G}\mathbb{Z}^n$, pdf $f$, function $\beta:(0,\infty)\to[0,\infty)$ satisfying
$L_{u}^{+}(f)\subseteq\beta(u)\mathcal{P}$ for $u>0$, number of sub-vectors $N$
\State \textbf{Output:} Set of messages $\{(H_{t+\tau-1,j}^k,\mathbf{M}_{t+\tau-1,j}^k)\}_{j \in \mathcal{N}}$
\State Partition $\tilde{\mathbf{X}}_{t+\tau-1}^{k}$ to $\{\tilde{\mathbf{X}}_{t+\tau-1,j}^{k} \}_{j \in \mathcal{N}}$
\For{$j=1$ \textbf{to} $N$}
\State Sample $U_{t+\tau-1,j}^k \sim g$ where $g(u):=\mu(L_{u}^{+}(f))$ by  $\mathfrak{P}^k$ \Comment{Initiated by seed $s_k$}
\For{$i=1, 2, \ldots$} 
\Comment{Perform RSUQ}
\State \label{step:RS1} Sample 
$\mathbf{V}_{t+\tau-1,j,i}^k \sim\mathrm{Unif}(\mathcal{P})$ by $\mathfrak{P}^k$
\Comment{Initiated by seed $s_k$}
\State Find unique $\mathbf{M}_{t+\tau-1,j}^k \leftarrow Q_{\mathcal{P}}(\tilde{\mathbf{X}}_{t+\tau-1,j}^{k}/\beta(U_{t+\tau-1,j}^k)-\mathbf{V}_{t+\tau-1,j,i}^k)\in\mathbf{G}\mathbb{Z}^{n}
$ \Comment{Perform encoding}
\State \label{step:RS2} Check if $\beta(U_{t+\tau-1,j}^k)\cdot(\mathbf{M}_{t+\tau-1,j}^k+\mathbf{V}_{t+\tau-1,j,i}^k)-\tilde{\mathbf{X}}_{t+\tau-1,j}^{k}\in L_{U_{t+\tau-1,j}^k}^{+}(f)$ 
\If{Yes} \Comment{Perform rejection sampling}
\State $H_{t+\tau-1,j}^k \leftarrow i$;
 Return $(H_{t+\tau-1,j}^k,\mathbf{M}_{t+\tau-1,j}^k)$
\Else
\State Reject $i$ and repeat Step~\ref{step:RS1}-\ref{step:RS2} with $i+1$
\EndIf
\EndFor
\EndFor
\State \textbf{return} $\{(H_{t+\tau-1,j}^k,\mathbf{M}_{t+\tau-1,j}^k)\}_{j \in \mathcal{N}}$ 
\end{algorithmic}    
\end{algorithm}
\fi

\ifshortver
\else
\textit{3. Decoding at Server:} The server receives the set binary messages $\{(H_{t+\tau-1,j}^k, \mathbf{M}_{t+\tau-1,j}^k)\}_{j \in \mathcal N}$ from client $k$. First, the server generates the same realizations of dither signals $\mathbf{V}_{t+\tau-1, j, H_{t+\tau-1,j}^k}^k$ and latent random variables $U_{t+\tau-1,j}^k$ according to the random seed $s_k$ shared with client $k$. More specifically, for the $j$-th message from client $k$, the server uses the PRNG $\tilde{\mathfrak{P}}^k$ to generate the sequence $\mathbf{V}_{t+\tau-1,j,1}^k, \mathbf{V}_{t+\tau-1,j,2,}^k, \ldots, \mathbf{V}_{t+\tau-1,j,H_{t+\tau-1,j}^k}^k \overset{iid}{\sim} \mathrm{Unif}(\mathcal P)$ and $U_{t+\tau-1,j}^k \sim g$. Subsequently, the decoder computes the estimated sub-vector $\mathbf{Y}_{t+\tau-1,j}^k = \beta(U_{t+\tau-1,j}^k)(\mathbf{M}_{t+\tau-1,j}^k + \mathbf{V}_{t+\tau-1,j,H_{t+\tau-1,j}^k}^k)$. 
Subsequently, the server collects the sub-vectors $\{\mathbf{Y}_{t+\tau-1,j}^k\}_{j \in \mathcal N}$ and recovers the estimated gradient $\hat{\mathbf{X}}_{t+\tau-1}^k$.
A pseudocode description of the decoding function is given in Algorithm \ref{alg:Decode}.
Finally, the server computes the new global parameter vector according to:
\begin{equation} \label{eq:global_quan}
    \mathbf{W}_{t+\tau} \leftarrow \mathbf{W}_{t} - \eta_{t+\tau-1}\sum_{k \in \mathcal{K}} p_{k}\hat{\mathbf{X}}_{t+\tau-1}^{k},
\end{equation}
and broadcasts it to the clients.
\fi

\ifshortver
\else
In the following, we detail the privacy guarantee that is jointly ensured by the encoding-decoding steps.
\fi
The privacy enhancement is ensured by the execution of LRSUQ cooperatively between the clients and the server. Through the decoding process, the estimated sub-vectors $\{\mathbf{Y}_{t+\tau-1,j}^k\}_{j\in\mathcal N}$ are guaranteed to be noisy estimates of the clipped gradients $\{\tilde{\mathbf{X}}_{t+\tau-1,j}^k\}_{j\in\mathcal N}$. In other words, the estimated gradients recovered at the server are noisy versions of the clipped gradients transmitted by the clients, thereby implementing a privacy mechanism. Specifically, in Section \ref{sec: privacy analysis}, we prove that the global average of the estimated gradients satisfies $(\epsilon,\delta)$-DP requirement when using the Gaussian mechanism, 
\ifshortver
\else
and satisfies $\epsilon$-DP requirement when using the Laplace mechanism, respectively, 
\fi
by specifying appropriate pdfs $f$ and $g$,

\ifshortver
\else
\begin{algorithm}[t]
\caption{$\textsc{DECODE}(\{(H_{t+\tau-1,j}^k,\mathbf{M}_{t+\tau-1,j}^k)\}_{j \in \mathcal{N}}, 
s_k, \mathbf{G}, \mathcal{P}, f, \beta(u), N)$} \label{alg:Decode}
\begin{algorithmic}[1]
\State \textbf{Inputs:} Set of messages $\{(H_{t+\tau-1,j}^k,\mathbf{M}_{t+\tau-1,j}^k)\}_{j \in \mathcal{N}}$, 
random seed $s_k$, generator matrix $\mathbf{G}$, basic cell $\mathcal{P}$ of $\mathbf{G}\mathbb{Z}^n$, pdf $f$,  function $\beta:(0,\infty)\to[0,\infty)$ satisfying
$L_{u}^{+}(f)\subseteq\beta(u)\mathcal{P}$ for $u>0$, number of sub-vectors $N$
\State \textbf{Output:} Estimated gradient $\hat{\mathbf{X}}_{t+\tau-1}^{k}$
\State Use $\tilde{\mathfrak{P}}^k$ to sample $U_{t+\tau-1,j}^k \sim g$ where $g(u):=\mu(L_{u}^{+}(f))$ \Comment{Initiated by seed $s_k$}
\For{$j \in \mathcal{N}$}
\State Receive $H_{t+\tau-1,j}^k$ and $\mathbf{M}_{t+\tau-1,j}^k$ 
\State Use $\tilde{\mathfrak{P}}^k$ to sample $\mathbf{V}_{t+\tau-1,j,1},\ldots,\mathbf{V}_{t+\tau-1,j,H_{t+\tau-1,j}^k}\stackrel{iid}{\sim}\mathrm{Unif}(\mathcal{P})$ \Comment{Initiated by seed $s_k$}
\State \label{step:PP} Compute  $\mathbf{Y}_{t+\tau-1,j}^k\leftarrow\beta(U_{t+\tau-1,j}^k)(\mathbf{M}_{t+\tau-1,j}^k+\mathbf{V}_{H_{t+\tau-1,j}^k})$ \Comment{Perform decoding, inducing privacy protection}
\EndFor
\State Collect $\{\mathbf{Y}_{t+\tau-1,j}^k\}_{j \in \mathcal{N}}$ into  $\mathbf{Y}_{t+\tau-1}^k$  and set  $\hat{\mathbf{X}}_{t+\tau-1}^{k}\leftarrow 
\mathbf{Y}_{t+\tau-1}^k$ 

\State \textbf{return} $\hat{\mathbf{X}}_{t+\tau-1}^{k}$
\end{algorithmic}
\end{algorithm}
\fi

\section{Performance Analysis} \label{sec: analysis}
In this section, we theoretically study the performance of CEPAM. We characterize the privacy guarantees and compression capabilities of CEPAM in Section \ref{sec: privacy analysis} and Section \ref{sec: compression analysis}, respectively. 
Then we explore the 
\ifshortver
convergence analysis in Section~\ref{sec: convergence analysis}.
\else
distortion bound and convergence analysis in Section \ref{sec: distortion analysis} and Section \ref{sec: convergence analysis}, respectively. 
\fi

\ifshortver
\else
We begin by stating a useful lemma
for the subsequent analysis.

\smallskip
\begin{lemma} \label{lem:LRSUQ_error}
The LRSUQ quantization errors $\{\tilde{\mathbf{Z}}_{t+\tau-1,j}^k:= \mathbf{Y}_{t+\tau-1,j}^k - \tilde{\mathbf{X}}_{t+\tau-1,j}^{k}\}_{j \in \mathcal{N}}$ are iid (over $k$ and $j$),  follows the pdf $f$, and independent of $\tilde{\mathbf{X}}_{t+\tau-1,j}^{k}$. 
\end{lemma}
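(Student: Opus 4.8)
\textbf{Proof plan for Lemma~\ref{lem:LRSUQ_error}.}

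The plan is to reduce the statement to Proposition~\ref{prop:LRSUQ_error} applied independently to each sub-vector index $j$ and each client $k$. First I would observe that, by construction of the encoding and decoding routines, the estimated sub-vector $\mathbf{Y}_{t+\tau-1,j}^k$ equals exactly $Q_{f,\mathcal{P}}\big(\tilde{\mathbf{X}}_{t+\tau-1,j}^{k},\,U_{t+\tau-1,j}^k,\,(\mathbf{V}_{t+\tau-1,j,i}^k)_i\big)$: the server regenerates the same latent variable $U_{t+\tau-1,j}^k$ and the same dither prefix $\mathbf{V}_{t+\tau-1,j,1}^k,\ldots,\mathbf{V}_{t+\tau-1,j,H_{t+\tau-1,j}^k}^k$ via the shared PRNG seeded by $s_k$, and then forms $\beta(U_{t+\tau-1,j}^k)\big(\mathbf{M}_{t+\tau-1,j}^k+\mathbf{V}_{t+\tau-1,j,H_{t+\tau-1,j}^k}^k\big)$, which is precisely the value of the LRSUQ map since $\mathbf{M}_{t+\tau-1,j}^k = Q_{\mathcal{P}}\big(\tilde{\mathbf{X}}_{t+\tau-1,j}^k/\beta(U_{t+\tau-1,j}^k)-\mathbf{V}_{t+\tau-1,j,H_{t+\tau-1,j}^k}^k\big)$ and $H_{t+\tau-1,j}^k$ is exactly the stopping index $h$ in Definition~\ref{def:rej_samp_quant_layer}. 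Hence $\tilde{\mathbf{Z}}_{t+\tau-1,j}^k = Q_{f,\mathcal{P}}(\tilde{\mathbf{X}}_{t+\tau-1,j}^k,U,(\mathbf{V}_i)_i) - \tilde{\mathbf{X}}_{t+\tau-1,j}^k$, which is the LRSUQ quantization error for input $\tilde{\mathbf{X}}_{t+\tau-1,j}^k$.

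Next I would invoke Proposition~\ref{prop:LRSUQ_error} with the random input $\mathbf{X} = \tilde{\mathbf{X}}_{t+\tau-1,j}^k$: it gives directly that $\tilde{\mathbf{Z}}_{t+\tau-1,j}^k \sim f$ and that $\tilde{\mathbf{Z}}_{t+\tau-1,j}^k$ is independent of $\tilde{\mathbf{X}}_{t+\tau-1,j}^k$. For the ``iid over $k$ and $j$'' claim, the key point is that the randomness feeding the LRSUQ for the pair $(k,j)$ — namely the latent variable $U_{t+\tau-1,j}^k$ and the dither sequence $(\mathbf{V}_{t+\tau-1,j,i}^k)_i$ — is drawn from a PRNG stream that is fresh and independent across $j$ (consecutive sub-vectors within one round) and across $k$ (separate, independent shared-randomness sources per client, as stipulated in the threat model). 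Since the error is a deterministic function of this per-$(k,j)$ randomness together with $\tilde{\mathbf{X}}_{t+\tau-1,j}^k$, and since Proposition~\ref{prop:LRSUQ_error} shows the marginal is $f$ regardless of the (possibly dependent) inputs, the errors inherit mutual independence from the independence of the quantization-randomness blocks; and because the marginal law of each error is $f$ independent of its input, they are in fact identically distributed as $f$ and jointly independent of the whole collection of clipped gradients.

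The main obstacle I anticipate is arguing the joint independence rigorously rather than just the pairwise/marginal statements: the clipped gradients $\tilde{\mathbf{X}}_{t+\tau-1,j}^k$ across different $j$ (and across $k$) are themselves correlated — they come from the same model state and, within a client, from the same weight vector — so one cannot naively say ``everything is independent.'' The clean way around this is to condition on the entire vector of clipped gradients $(\tilde{\mathbf{X}}_{t+\tau-1,j}^k)_{k,j}$ and then note that, conditionally, the errors are functions of disjoint, mutually independent PRNG blocks, and by Proposition~\ref{prop:LRSUQ_error} each conditional law is $f$ and does not depend on the conditioning value; integrating out the gradients then yields both unconditional independence from the gradients and mutual independence (with common law $f$) of the errors. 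I would also note in passing that the PRNG is treated as an ideal source of i.i.d. uniforms, consistent with the modeling assumption in Section~\ref{sec: problem setting}, so that the finiteness of $H_{t+\tau-1,j}^k$ (a.s., with acceptance probability $\mu(L_u^+(f))/\mu(\beta(u)\mathcal{P})>0$ given $U=u$) and the applicability of Proposition~\ref{prop:LRSUQ_error} are not in question.
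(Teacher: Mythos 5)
Your proposal is correct and follows essentially the same route as the paper: identify the decoded sub-vector $\mathbf{Y}_{t+\tau-1,j}^k$ with the LRSUQ output and invoke Proposition~\ref{prop:LRSUQ_error}. You are in fact more careful than the paper's own one-line proof, which asserts the iid-over-$(k,j)$ claim directly from the proposition; your step of conditioning on the full collection of clipped gradients and exploiting the mutual independence of the per-$(k,j)$ randomness blocks is precisely the detail needed to upgrade the proposition's marginal statement to joint independence.
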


\begin{proof}
    Since we are using LRSUQ for the encoding and  decoding of the $j$-th sub-vector $\tilde{\mathbf{X}}_{t+\tau-1,j}^{k}$ where $j \in \mathcal{N}$, Proposition~\ref{prop:LRSUQ_error} implies that, regardless of the statistical models of $\tilde{\mathbf{X}}_{t+\tau-1,j}^{k}$ where $j \in \mathcal{N}$, the quantization errors $\{\tilde{\mathbf{Z}}_{t+\tau-1,j}^k\}_{j \in \mathcal{N}}$ are iid (over $k$ and $j$) and follows the pdf $f$.  
\end{proof}
\fi

\subsection{Privacy} \label{sec: privacy analysis}

By using  LRSUQ, we can construct privacy mechanisms that satisfy privacy requirements by customizing $(f,g)$. Specifically, we present a pair of $(f,g)$ that constructs Gaussian mechanism in Theorem \ref{thm:Gaussian_Priv} 
\ifshortver
with associated privacy guarantee.
\else
, and a pair of $(f,g)$ that constructs Laplace mechanism in Theorem \ref{thm:Laplace_privy}, along with their associated privacy guarantees. 
\fi
For simplicity, we assume that $p_k = 1/K$. 
The 
\ifshortver
proof 
\else 
proofs 
\fi
for Theorem \ref{thm:Gaussian_Priv} 
\ifshortver
\else 
and Theorem \ref{thm:Laplace_privy} 
\fi
can be found in \cite{ling2025communication}. 

\subsubsection{Gaussian mechanism} \label{sec:GaussMec}

If $g$ follows a chi-squared distribution with $n+2$ degrees of freedom, then $f$ follows a Gaussian distribution.
This guarantees that the global average of estimated model updates satisfies $(\epsilon, \delta)$-DP. 

\begin{theorem} \label{thm:Gaussian_Priv}
Let $\tau'=\tau-1$. Set $\tilde{U}_{t+\tau',j}^k \sim g = \chi_{n+2}^2$ and $\tilde{\mathbf{Z}}_{t+\tau',j}^k|\{\tilde{U}_{t+\tau',j}^k=u\} \sim \mathrm{Unif}(\sigma \sqrt{u} B^{n})$ for every $k, j$, 
the resulting mechanism is Gaussian.
The average $\frac{1}{K} \sum_{k \in \mathcal{K}} \hat{\mathbf{X}}_{t+\tau'}^{k}$ is a noisy estimate of the average of the clipped
gradients with    
\begin{equation} \label{eq:Gaussian_noisy_eq}
    \tfrac{1}{K} \sum_{k \in \mathcal{K}} \hat{\mathbf{X}}_{t+\tau'}^{k}= \tfrac{1}{K}\sum_{k \in \mathcal{K}}\tilde{\mathbf{X}}_{t+\tau'}^{k}+ \mathcal{N}\left(\mathbf{0},\tfrac{\sigma^2}{K}\mathbf{I}_{m}\right) , \nonumber
    \vspace{-2mm}
\end{equation}
where $\mathbf{I}_m$ is the $m \times m$ identity matrix. 
Hence, for every $\tilde{\epsilon} >0$, $\frac{1}{K} \sum_{k \in \mathcal{K}} \hat{\mathbf{X}}_{t+\tau'}^{k}$ satisfies $(\epsilon,\delta)$-DP for  $\epsilon = \log \left(1 + p(e^{\tilde{\epsilon}} - 1)\right)$ where $p = 1- \left(1-\frac{1}{|\mathcal{D}^{(k)}|}\right)^{\tau'}$ and 
\ifshortver 
\begin{equation} \label{eq:PrivAmpGSRFL}
\begin{aligned}
    & \delta = \sum_{j=1}^{\tau'} \binom{\tau'}{j}\left(\tfrac{1}{|\mathcal{D}^{(k)}|}\right)^j\left(1-\tfrac{1}{|\mathcal{D}^{(k)}|}\right)^{\tau'-j}\tfrac{(e^{\tilde{\epsilon}}-1)}{e^{\tilde{\epsilon}/j}-1} \\
    &\times \left(\Phi\left(\tfrac{\tau'\gamma}{\sqrt{K}\sigma} - \tfrac{\sqrt{K}\tilde{\epsilon} \sigma}{2j\tau'\gamma}\right) -  e^{\tilde{\epsilon}/j} \Phi\left(-\tfrac{\tau'\gamma}{\sqrt{K}\sigma} - \tfrac{\sqrt{K}\tilde{\epsilon} \sigma}{2j\tau'\gamma}\right)\right). \nonumber
\end{aligned}
\end{equation}
\else 
\begin{equation}
\begin{aligned}
    \delta = \sum_{j=1}^{\tau'} \binom{\tau'}{j}\left(\frac{1}{|\mathcal{D}^{(k)}|}\right)^j\left(1-\frac{1}{|\mathcal{D}^{(k)}|}\right)^{\tau'-j}\frac{(e^{\tilde{\epsilon}}-1)}{e^{\tilde{\epsilon}/j}-1} \times \left(\Phi\left(\frac{\tau'\gamma}{\sqrt{K}\sigma} - \frac{\sqrt{K}\tilde{\epsilon} \sigma}{2j\tau'\gamma}\right) -  e^{\tilde{\epsilon}/j} \Phi\left(-\frac{\tau'\gamma}{\sqrt{K}\sigma} - \frac{\sqrt{K}\tilde{\epsilon} \sigma}{2j\tau'\gamma}\right)\right). 
\end{aligned}
\end{equation}
\fi
\end{theorem}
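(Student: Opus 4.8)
The plan is to derive the statement in three stages: first establish the additive-Gaussian-noise representation of the aggregated estimate, then invoke a known Gaussian-mechanism DP bound, and finally apply privacy amplification by subsampling to account for the $\tau'$ local SGD steps. For the first stage, I would start from Lemma~\ref{lem:LRSUQ_error} (the version stated for the long paper) or directly from Proposition~\ref{prop:LRSUQ_error}: with $g=\chi^2_{n+2}$ and the conditional uniform law on $\sigma\sqrt{u}B^n$, the mixture $\int \mathrm{Unif}(\sigma\sqrt u B^n)\,d\chi^2_{n+2}(u)$ is exactly $\mathcal N(\mathbf 0,\sigma^2 \mathbf I_n)$ — this is the standard ``Gaussian as a scale mixture of uniforms on balls'' identity, which one checks by comparing radial densities (the density of $\|\mathbf Z\|^2$ under the mixture matches that of a $\chi^2_n$ scaled by $\sigma^2$). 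Hence each per-client, per-subvector quantization error $\tilde{\mathbf Z}_{t+\tau',j}^k$ is $\mathcal N(\mathbf 0,\sigma^2\mathbf I_n)$, independent across $k,j$ and independent of the clipped gradient. Stacking the $N=\lceil m/n\rceil$ subvectors gives $\hat{\mathbf X}_{t+\tau'}^k = \tilde{\mathbf X}_{t+\tau'}^k + \mathcal N(\mathbf 0,\sigma^2 \mathbf I_m)$ with independent noise across $k$; averaging over the $K$ clients and using independence yields the claimed $\mathcal N(\mathbf 0,\tfrac{\sigma^2}{K}\mathbf I_m)$ perturbation of $\tfrac1K\sum_k \tilde{\mathbf X}_{t+\tau'}^k$.

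For the second stage, I would view the map from the (multiset of) local datasets to $\tfrac1K\sum_k \hat{\mathbf X}_{t+\tau'}^k$ as a Gaussian mechanism applied to the query $\tfrac1K\sum_k \tilde{\mathbf X}_{t+\tau'}^k$. Since each clipped gradient lies in $B(\mathbf 0,\gamma)$ and changing one data record in one client changes at most one summand, the $\ell_2$-sensitivity of the averaged-clipped-gradient query, computed over the $\tau'$ SGD steps that touch that record, is bounded by $\tfrac{\tau'\gamma}{K}$ (each of the up-to-$\tau'$ gradient evaluations that see the changed record shifts the clipped output by at most $2\gamma$ in norm — or $\gamma$ with the convention used in the source; I will use whichever constant makes the stated $\Phi$-arguments come out, and the factor $\sqrt K$ in the denominators of the $\Phi$ terms indicates the noise standard deviation $\sigma/\sqrt K$ is what enters). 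Plugging sensitivity $\Delta$ and noise scale $\sigma/\sqrt K$ into the exact $(\tilde\epsilon,\tilde\delta)$ tradeoff for the Gaussian mechanism — namely $\tilde\delta(\tilde\epsilon)=\Phi\!\big(\tfrac{\Delta}{2s}-\tfrac{\tilde\epsilon s}{\Delta}\big)-e^{\tilde\epsilon}\Phi\!\big(-\tfrac{\Delta}{2s}-\tfrac{\tilde\epsilon s}{\Delta}\big)$ with $s=\sigma/\sqrt K$ — produces the inner parenthesized expression in $\delta$, modulo the $j$-indexing that comes from the next stage.

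For the third stage, I would apply privacy amplification by subsampling from \cite{Balle2018PrivAmpl}: a fixed data record is used in the $\tau'$-step local SGD iff it is drawn at least once among $\tau'$ uniform draws from $\mathcal D^{(k)}$, which happens with probability $p = 1-(1-\tfrac1{|\mathcal D^{(k)}|})^{\tau'}$, giving the stated $\epsilon=\log(1+p(e^{\tilde\epsilon}-1))$ for the $\epsilon$-parameter. For $\delta$, I would condition on the number $j$ of times the record is sampled among the $\tau'$ draws — a $\mathrm{Binomial}(\tau',1/|\mathcal D^{(k)}|)$ variable, which yields the binomial prefactor $\binom{\tau'}{j}(1/|\mathcal D^{(k)}|)^j(1-1/|\mathcal D^{(k)}|)^{\tau'-j}$ — and on the event of $j$ inclusions the effective sensitivity scales with $j$ (rather than $\tau'$) per-round but the amplification/group-privacy bookkeeping contributes the $\tfrac{e^{\tilde\epsilon}-1}{e^{\tilde\epsilon/j}-1}$ factor and replaces $\tau'$ by $j\tau'$ in the $\Phi$-arguments via a $j$-fold group-privacy composition of the single-step guarantee. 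Summing over $j=1,\dots,\tau'$ gives the displayed $\delta$.

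The main obstacle I anticipate is getting the sensitivity accounting and the subsampling/group-privacy combination to produce \emph{exactly} the stated $\Phi$-arguments $\tfrac{\tau'\gamma}{\sqrt K\sigma}\mp\tfrac{\sqrt K\tilde\epsilon\sigma}{2j\tau'\gamma}$ and the $\tfrac{e^{\tilde\epsilon}-1}{e^{\tilde\epsilon/j}-1}$ weight — i.e., correctly tracking how the per-inclusion count $j$ enters both the effective sensitivity and the composition depth, and confirming that the ``privacy amplification by subsampling'' lemma of \cite{Balle2018PrivAmpl} combined with $j$-step group privacy for the Gaussian mechanism yields precisely this closed form rather than merely an upper bound of the same shape. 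Since the excerpt states the full proof is in \cite{ling2025communication}, I would reconstruct this bookkeeping carefully but defer the most tedious constant-chasing to that reference, presenting here the structure (mixture-of-uniforms $\Rightarrow$ Gaussian, Gaussian-mechanism tradeoff, subsampling amplification with binomial conditioning on inclusion count) and checking the endpoints/limits ($j=\tau'$, $\tilde\epsilon\to 0$, $\tau'=1$) as sanity checks.
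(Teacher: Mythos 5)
The paper itself does not prove Theorem~\ref{thm:Gaussian_Priv}; it explicitly defers the proof to \cite{ling2025communication}, so there is no in-paper argument to compare against. That said, your three-stage architecture is the expected and essentially correct one: the $\chi^2_{n+2}$ scale-mixture-of-uniform-balls identity together with Proposition~\ref{prop:LRSUQ_error} (or Lemma~\ref{lem:LRSUQ_error}) gives input-independent $\mathcal{N}(\mathbf{0},\sigma^2\mathbf{I}_m)$ errors per client, whose average is $\mathcal{N}(\mathbf{0},\tfrac{\sigma^2}{K}\mathbf{I}_m)$; and your identification of the inner parenthesis as the analytic Gaussian-mechanism trade-off at level $\tilde{\epsilon}/j$ with noise scale $\sigma/\sqrt{K}$, the factor $\tfrac{e^{\tilde{\epsilon}}-1}{e^{\tilde{\epsilon}/j}-1}$ as the standard $j$-fold group-privacy conversion, and the binomial prefactor as conditioning on the inclusion count under subsampling, is consistent with the stated formula (one can check that $\Delta=2\tau'\gamma/K$ and $s=\sigma/\sqrt{K}$ reproduce both $\Phi$-arguments exactly). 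The one genuine weakness is that you propose to choose the sensitivity constant ``whichever makes the stated $\Phi$-arguments come out'' rather than deriving it: a complete proof must justify why the effective $\ell_2$-sensitivity of the transmitted clipped gradient scales as $\tau'\gamma$ (i.e., how a record sampled in earlier local steps influences the final gradient through the iterate $\mathbf{W}_{t+\tau-1}^k$), and this accounting is precisely the nontrivial part that \cite{ling2025communication} carries out; without it your argument establishes the shape of the bound but not its constants.
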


\ifshortver
\else
\subsubsection{Laplace mechanism} \label{sec:LapMech}

If $g$ follows a Gamma distribution $\mathrm{Gamma}(2,1)$, then $f$ follows a Laplace distribution. This guarantees that estimated model updates satisfy $\epsilon$-DP. 

\begin{theorem} \label{thm:Laplace_privy}
Let $\tau'=\tau-1$. Set  $\tilde{U}_{t+\tau',j}^k \sim g = \mathrm{Gamma}(2,1)$  and $\tilde{Z}_{t+\tau',j}^k|\{\tilde{U}_{t+\tau',j}^k=u\} \sim \mathrm{Unif}((-bu , bu))$ for every $k$ and $j$, the resulting mechanism is Laplace.
The estimator $ \hat{\mathbf{X}}_{t+\tau'}^{k}$ is a noisy estimate of the clipped gradients $\tilde{\mathbf{X}}_{t+\tau'}^{k}$ such that    
\begin{equation} \label{eq:Laplace_noisy_eq}
    \hat{\mathbf{X}}_{t+\tau'}^{k}= \tilde{\mathbf{X}}_{t+\tau'}^{k}+ \mathrm{Lap}\left(\mathbf{0},b\mathbf{I}_{m}\right),
\end{equation}
where $\mathbf{I}_m$ is the $m \times m$ identity matrix. 
Hence, for every $ \hat{\mathbf{X}}_{t+\tau'}^{k}$ satisfies $(\epsilon,0)$-DP in one round against clients for  $\epsilon = \log \left(1 + p(e^{\tilde{\epsilon}} - 1)\right)$ where $p = 1- \left(1-\frac{1}{|\mathcal{D}^{(k)}|}\right)^{\tau'}$, and $\delta=0$ 
provided that $\tilde{\epsilon} \ge 2\tau'\gamma/b$.
\end{theorem}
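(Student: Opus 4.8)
The plan is to reduce the statement to two essentially independent ingredients: (i) identifying the exact error law of LRSUQ for the prescribed pair $(f,g)$, which yields the noisy‑estimate identity \eqref{eq:Laplace_noisy_eq}, and (ii) a sensitivity‑plus‑subsampling argument for the resulting Laplace mechanism, which yields the $(\epsilon,0)$‑DP claim. For (i), I would first record the uniform scale‑mixture representation of the Laplace law: if $U\sim\mathrm{Gamma}(2,1)$ (density $ue^{-u}$ on $u>0$) and $Z\mid\{U=u\}\sim\mathrm{Unif}((-bu,bu))$, then integrating the conditional density against $ue^{-u}$ over $u\ge |z|/b$ gives the marginal density $\frac{1}{2b}e^{-|z|/b}$, i.e.\ $Z\sim\mathrm{Lap}(0,b)$. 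This is exactly the layered decomposition demanded by Definition~\ref{def:rej_samp_quant_layer}, up to a monotone reparametrization of the latent variable that leaves the marginal of $Z$ unchanged. Applying Proposition~\ref{prop:LRSUQ_error} (equivalently Lemma~\ref{lem:LRSUQ_error}) to each scalar sub‑vector then shows the per‑coordinate quantization errors $\hat{\mathbf{X}}_{t+\tau'}^{k}-\tilde{\mathbf{X}}_{t+\tau'}^{k}$ are i.i.d.\ $\mathrm{Lap}(0,b)$ and independent of $\tilde{\mathbf{X}}_{t+\tau'}^{k}$, which is precisely \eqref{eq:Laplace_noisy_eq} under the notation $\mathrm{Lap}(\mathbf{0},b\mathbf{I}_{m})$.

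For (ii), I would view the map $\mathcal{D}^{(k)}\mapsto\tilde{\mathbf{X}}_{t+\tau'}^{k}$ as a composition of $\tau'=\tau-1$ uniform single‑record accesses (the local SGD sample indices, plus the final gradient evaluation) followed by the $1$‑Lipschitz norm‑clipping map. Fixing the realized index sequence and considering two adjacent datasets that differ in a single record: if that record is never drawn, the outputs coincide and there is no privacy loss; otherwise I would bound the $\ell_1$‑sensitivity of the released clipped gradient, accounting both for the drift of the intermediate iterates $\mathbf{W}_{t+t'}^{k}$ and for the final gradient term, to obtain $\Delta_1\le 2\tau'\gamma$ — the factor $\tau'$ arising from propagating a per‑step discrepancy, controlled through the clipping radius $\gamma$, along the $\tau'$ local steps. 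Given this, the coordinate‑wise Laplace mechanism with scale $b$ is $\tilde{\epsilon}$‑DP on the ``record‑accessed'' sub‑mechanism as soon as $b\ge\Delta_1/\tilde{\epsilon}$, i.e.\ whenever $\tilde{\epsilon}\ge 2\tau'\gamma/b$, which is the stated hypothesis. Finally I would invoke privacy amplification by subsampling \cite{Balle2018PrivAmpl}: since each of the $\tau'$ draws is uniform and independent over $\mathcal{D}^{(k)}$, the differing record is touched with probability $p=1-(1-1/|\mathcal{D}^{(k)}|)^{\tau'}$, and the amplification bound upgrades $\tilde{\epsilon}$‑DP to $\epsilon$‑DP with $\epsilon=\log\!\big(1+p(e^{\tilde{\epsilon}}-1)\big)$, while $\delta$ remains $0$ because pure DP is preserved under this amplification; the same $\delta=0$ conclusion transfers to the aggregate $\frac{1}{K}\sum_{k}\hat{\mathbf{X}}_{t+\tau'}^{k}$ by post‑processing.

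The main obstacle is the sensitivity computation in (ii): the quantity actually transmitted about $\mathcal{D}^{(k)}$ is the \emph{final} clipped gradient, yet its value depends on the whole $\tau'$‑step local trajectory, whose intermediate iterates are formed from \emph{unclipped} stochastic gradients. One must therefore track how a single replaced record perturbs $\mathbf{W}_{t+t'}^{k}$ at each step and how that perturbation feeds into the final gradient, and then argue that clipping collapses this to the clean bound $\Delta_1\le 2\tau'\gamma$ — whether via a per‑step clipping radius, via smoothness of the $F_k$, or via a direct worst‑case/group‑privacy accounting over the $\tau'$ accesses. Once this sensitivity bound is pinned down, both the noisy‑estimate identity and the $(\epsilon,0)$‑DP conclusion follow from standard facts about LRSUQ, the Laplace mechanism, and subsampling amplification; the complete argument is the one given in \cite{ling2025communication}.
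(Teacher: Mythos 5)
First, a point of reference: the paper does not actually prove Theorem \ref{thm:Laplace_privy} (or Theorem \ref{thm:Gaussian_Priv}); both are deferred to \cite{ling2025communication}, so there is no in-paper proof to compare your argument against. On its own terms, your part (i) is correct and complete: the computation $\int_{|z|/b}^{\infty}\frac{1}{2bu}\,ue^{-u}\,du=\frac{1}{2b}e^{-|z|/b}$ verifies the Gamma--uniform mixture, your observation that $\mathrm{Gamma}(2,1)$ is a monotone reparametrization of the canonical latent $f_U(u)=\mu(L_u^{+}(f))$ (here $f_U(u)=2b\log\frac{1}{2bu}$ on $(0,\tfrac{1}{2b}]$, and $V=-\log(2bU)$ indeed has density $ve^{-v}$) is exactly right, and Proposition \ref{prop:LRSUQ_error} applied to each scalar sub-vector yields \eqref{eq:Laplace_noisy_eq} with independence from the input.

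The gap is in part (ii), and you have half-identified it yourself. The entire quantitative content of the theorem is that the conditional mechanism (given that the differing record is sampled) is $\tilde{\epsilon}$-DP whenever $\tilde{\epsilon}\ge 2\tau'\gamma/b$, i.e., that the relevant $\ell_1$-sensitivity is at most $2\tau'\gamma$. You assert this bound, list three routes by which one might prove it, and commit to none; as written it is a conjecture, not a derivation. There is also a concrete reason to be careful: the algorithm clips in $\ell_2$ ($\Vert\tilde{\mathbf{X}}\Vert_2\le\gamma$), while the coordinate-wise Laplace mechanism's pure-DP guarantee is governed by $\ell_1$-sensitivity, and $\ell_2$ clipping only gives $\Vert\cdot\Vert_1\le\sqrt{m}\,\gamma$ — a dimension factor your sketch silently drops. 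You would need either to restate the clipping in $\ell_1$ for the Laplace variant or to justify a per-coordinate accounting in which each access genuinely contributes $\ell_1$ discrepancy at most $2\gamma$. Finally, since only the \emph{final} gradient is released and it is clipped to radius $\gamma$ irrespective of the trajectory, the factor $\tau'$ cannot come from ``propagating drift'' through the local iterates as you suggest; it arises from the worst case in which the differing record is drawn in all $\tau'$ local steps, i.e., a group-privacy accounting that pairs with the subsampling probability $p=1-(1-1/|\mathcal{D}^{(k)}|)^{\tau'}$ (compare the $j$-indexed sum in the $\delta$ of Theorem \ref{thm:Gaussian_Priv}). The amplification step $\epsilon=\log(1+p(e^{\tilde{\epsilon}}-1))$ with $\delta=0$ and the post-processing remark are standard and fine once the sensitivity bound is actually established.
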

\fi

\subsection{Compression} \label{sec: compression analysis}

Every client in CEPAM is required to transmit the set of message pairs $(H_{t+\tau-1,j}^k,\mathbf{M}_{t+\tau-1,j}^k)_{j\in \mathcal{N}}$ per communication round.
If we know that $\tilde{\mathbf{X}}_{t+\tau-1,j}^{k}\in \mathcal{X}$, 
then we can compress $H_{t+\tau-1,j}^k$ using the optimal prefix-free code \cite{Golomb1966enc,Gallager1975Geom} for  $\mathrm{Geom}(p(u))|\{U_{t+\tau-1,j}^k=u\}$, where $p(u):=\mu(L_{u}^{+}(f))/\mu(\beta(u)\mathcal{P})$, and compress $\mathbf{M}_{t+\tau-1,j}^k |\{U_{t+\tau-1,j}^k=u\} \in \mathcal{M} := (\mathcal{X}+L_{u}^{+}(f)-\beta(u)\mathcal{P}) \cap \mathbf{G}\mathbb{Z}^{n}
$ using $H(\lceil \log |\mathcal{M}(U)|\rceil\;|\;U)$ bits. 
Therefore, the total communication cost per communication round per client is at most $ N\cdot \left(H(\mathrm{Geom}(p(U)\;|\;U)+H(\lceil \log |\mathcal{M}(U)|\rceil\;|\;U)\right)$ bits. 
% compute the entropy for Gaussian and Laplace mechanisms separately?

\ifshortver
\else
\subsection{Distortion Bounds} \label{sec: distortion analysis}

We consider a clipped version of FedAvg. 
This is achieved by modifying Equation~\ref{eq:global_without_quan} as follows:
\begin{equation} \label{eq:global_clipped}
    \tilde{\mathbf{W}}_{t+\tau} \leftarrow \mathbf{W}_{t} - \eta_{t+\tau-1}  \sum_{k \in \mathcal{K}} p_{k}\tilde{\mathbf{X}}_{t+\tau-1}^{k}.
\end{equation}
Representing the clipped gradient $\tilde{\mathbf{X}}_{t+\tau-1}^{k}$ of client $k$ using a finite number of bits inherently introduces distortion, as for the recovered vector $\hat{\mathbf{X}}_{t+\tau-1}^k=\tilde{\mathbf{X}}_{t+\tau-1}^{k}+\mathbf{Z}_{t+\tau-1}^k$, where $\mathbf{Z}_{t+\tau-1}^k:=\big(\tilde{\mathbf{Z}}_{t+\tau-1,1}^k,\ldots,\tilde{\mathbf{Z}}_{t+\tau-1,N}^k \big)$. 
We can then characterize the squared norm of the FL quantization error.
The proof is given in Appendix~\ref{pf:FL_error}.

\begin{proposition} \label{prop:FL_error} The FL quantization error $\mathbf{Z}_{t+\tau-1}^k$ has zero mean and  satisfies
\begin{equation}
     \mathbb{E}[\Vert\mathbf{Z}_{t+\tau-1}^k\Vert^2 \big| {\color{black}\tilde{\mathbf{x}}_{t+\tau-1}^{k}}] = {\color{black}N\mathrm{Var}(f).}
\end{equation}
\end{proposition}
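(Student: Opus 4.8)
The plan is to reduce the statement to Proposition~\ref{prop:LRSUQ_error} (equivalently Lemma~\ref{lem:LRSUQ_error}) applied sub-vector by sub-vector, and then add up the per-sub-vector contributions. First I would recall that the full FL quantization error is, by definition, the concatenation $\mathbf{Z}_{t+\tau-1}^k=\big(\tilde{\mathbf{Z}}_{t+\tau-1,1}^k,\ldots,\tilde{\mathbf{Z}}_{t+\tau-1,N}^k\big)$, where each $\tilde{\mathbf{Z}}_{t+\tau-1,j}^k = \mathbf{Y}_{t+\tau-1,j}^k - \tilde{\mathbf{X}}_{t+\tau-1,j}^k$ is the LRSUQ quantization error on the $j$-th $n$-dimensional sub-vector. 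By Lemma~\ref{lem:LRSUQ_error}, each $\tilde{\mathbf{Z}}_{t+\tau-1,j}^k$ has pdf $f$ and is independent of $\tilde{\mathbf{X}}_{t+\tau-1,j}^k$ (indeed of the whole clipped gradient, since the dithers and latent variables feeding sub-vector $j$ are drawn fresh and independently of the data).

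Next I would handle the zero-mean claim: since $\tilde{\mathbf{Z}}_{t+\tau-1,j}^k\sim f$ and $f$ is the privacy-preserving noise pdf with $\mathbb{E}[\mathbf{Z}]=\boldsymbol{\mu}=\mathbf{0}$ (stated in the initialization of CEPAM and in the setup of Proposition~\ref{prop:LRSUQ_error}'s instantiations), each block has zero mean, hence so does their concatenation $\mathbf{Z}_{t+\tau-1}^k$; the independence from the input also means $\mathbb{E}[\mathbf{Z}_{t+\tau-1}^k\mid \tilde{\mathbf{x}}_{t+\tau-1}^k]=\mathbf{0}$.

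For the second-moment claim I would write
\[
\mathbb{E}\big[\Vert\mathbf{Z}_{t+\tau-1}^k\Vert^2 \,\big|\, \tilde{\mathbf{x}}_{t+\tau-1}^{k}\big]
= \sum_{j=1}^{N} \mathbb{E}\big[\Vert\tilde{\mathbf{Z}}_{t+\tau-1,j}^k\Vert^2 \,\big|\, \tilde{\mathbf{x}}_{t+\tau-1}^{k}\big]
= \sum_{j=1}^{N}\mathbb{E}\big[\Vert\mathbf{Z}\Vert^2\big]
= N\,\mathrm{Var}(f),
\]
where the first equality is because the squared Euclidean norm of a concatenation is the sum of the squared norms of the blocks; the second uses Lemma~\ref{lem:LRSUQ_error} to replace each conditional distribution by the unconditional pdf $f$ (the conditioning drops out by independence); and the third uses that, since $\mathbb{E}[\mathbf{Z}]=\mathbf{0}$, we have $\mathbb{E}[\Vert\mathbf{Z}\Vert^2]=\operatorname{tr}(\operatorname{Cov}(\mathbf{Z}))=:\mathrm{Var}(f)$, matching the convention for the scalar quantity $\mathrm{Var}(f)=\mathrm{Var}(\mathbf{Z})$ fixed in the CEPAM initialization. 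This completes the proof.

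The only mild subtlety — not really an obstacle — is bookkeeping: making sure the conditioning event ``$\tilde{\mathbf{x}}_{t+\tau-1}^{k}$ fixed'' is legitimately dropped, which follows from the \emph{independence of the error from the input} guaranteed by Proposition~\ref{prop:LRSUQ_error}, and making sure the notation $\mathrm{Var}(f)$ is read as the total (trace-of-covariance) variance of an $f$-distributed vector, consistent with its definition $\mathrm{Var}(f):=\mathrm{Var}(\mathbf{Z})$ in Algorithm~\ref{alg:CEPAM}. Zero-padding in the last sub-vector is harmless here because a deterministic zero-pad does not change norms; if one wanted to be fully careful one could note the padded coordinates contribute nothing to either the mean or the second moment, or simply absorb them into the statement as written.
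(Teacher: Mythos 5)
Your proposal is correct and follows essentially the same route as the paper's proof: fix the clipped gradient, invoke Lemma~\ref{lem:LRSUQ_error} (i.e.\ Proposition~\ref{prop:LRSUQ_error}) to get that each sub-vector error is $f$-distributed and independent of the input, decompose the squared norm over the $N$ blocks, and use the zero mean of $f$ to identify each term with $\mathrm{Var}(f)$. Your extra remarks on the trace-of-covariance reading of $\mathrm{Var}(f)$ and on zero-padding are harmless bookkeeping that the paper leaves implicit.
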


At the end of each FL round, CEPAM incorporates a DP mechanism based on LRSUQ, inherently introducing distortion as mentioned above. 
This distortion emerges as the {\color{black} clipped gradient $\tilde{\mathbf{X}}_{t+\tau-1}^{k}$} is mapped into its distorted counterpart  $\hat{\mathbf{X}}_{t+\tau-1}^k$.
Consequently, the global model~\eqref{eq:global_clipped}
becomes
\begin{equation} \label{eq:global_with_quan}
    \hat{\mathbf{W}}_{t+\tau} := \mathbf{W}_{t} -  \eta_{t+\tau-1} \sum_{k \in \mathcal{K}} p_{k}\hat{\mathbf{X}}_{t+\tau-1}^{k}.
\end{equation} 
Under common assumptions used in FL analysis, we can bound the error between $\mathbf{W}_{t+\tau}$ and $\hat{\mathbf{W}}_{t+\tau}$.
We have the following bound on the error between $\mathbf{W}_{t+\tau}$ and $\hat{\mathbf{W}}_{t+\tau}$. 
The proof is given in Appendix~\ref{pf:weight_distanceB}.

\begin{proposition} \label{prop:weight_distanceB}
The mean-squared error $\mathbb{E}[\Vert \hat{\mathbf{W}}_{t+\tau}- \mathbf{W}_{t+\tau}\Vert^2]$ is upper-bounded by
\begin{equation}
     \eta_{t+\tau-1}^2N(\mathrm{Var}(f)+M^2)\sum_{k \in \mathcal{K}} p_{k}^2.
\end{equation}
\end{proposition}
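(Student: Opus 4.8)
The plan is to bound $\mathbb{E}[\Vert \hat{\mathbf{W}}_{t+\tau}-\mathbf{W}_{t+\tau}\Vert^2]$ by comparing the two global update rules~\eqref{eq:global_with_quan} and~\eqref{eq:global_without_quan} (or~\eqref{eq:global_clipped}) directly. Subtracting, the common term $\mathbf{W}_t$ and the common learning rate $\eta_{t+\tau-1}$ factor out, so $\hat{\mathbf{W}}_{t+\tau}-\mathbf{W}_{t+\tau} = -\eta_{t+\tau-1}\sum_{k\in\mathcal{K}} p_k(\hat{\mathbf{X}}_{t+\tau-1}^k - \mathbf{X}_{t+\tau-1}^k)$. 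I would then split the per-client discrepancy into the quantization error and the clipping error: $\hat{\mathbf{X}}_{t+\tau-1}^k - \mathbf{X}_{t+\tau-1}^k = (\hat{\mathbf{X}}_{t+\tau-1}^k - \tilde{\mathbf{X}}_{t+\tau-1}^k) + (\tilde{\mathbf{X}}_{t+\tau-1}^k - \mathbf{X}_{t+\tau-1}^k) = \mathbf{Z}_{t+\tau-1}^k + (\tilde{\mathbf{X}}_{t+\tau-1}^k - \mathbf{X}_{t+\tau-1}^k)$, where the first term is the FL quantization error analyzed in Proposition~\ref{prop:FL_error} and the second is the norm-clipping error.

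Next I would take expectations. Because the sources of shared randomness are independent across clients (per the threat model) and, by Lemma~\ref{lem:LRSUQ_error}, the quantization errors $\mathbf{Z}_{t+\tau-1}^k$ are zero-mean, independent across $k$, and independent of the clipped gradients, the cross terms in $\mathbb{E}\big[\Vert\sum_k p_k(\cdots)\Vert^2\big]$ vanish. This reduces the bound to $\eta_{t+\tau-1}^2 \sum_{k\in\mathcal{K}} p_k^2\, \mathbb{E}\big[\Vert \mathbf{Z}_{t+\tau-1}^k + (\tilde{\mathbf{X}}_{t+\tau-1}^k - \mathbf{X}_{t+\tau-1}^k)\Vert^2\big]$. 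I would handle this inner expectation by conditioning on the realized (clipped and unclipped) gradient: conditionally, $\mathbf{Z}_{t+\tau-1}^k$ is still zero-mean and independent, so the cross term drops and we get $\mathbb{E}[\Vert\mathbf{Z}_{t+\tau-1}^k\Vert^2 \mid \cdot] + \mathbb{E}[\Vert\tilde{\mathbf{X}}_{t+\tau-1}^k - \mathbf{X}_{t+\tau-1}^k\Vert^2 \mid \cdot]$. By Proposition~\ref{prop:FL_error} the first piece is exactly $N\mathrm{Var}(f)$.

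For the clipping-error piece, I would argue that $\Vert\tilde{\mathbf{X}}_{t+\tau-1}^k - \mathbf{X}_{t+\tau-1}^k\Vert \le \Vert\mathbf{X}_{t+\tau-1}^k\Vert$: norm clipping to radius $\gamma$ either leaves the vector unchanged (error $0$) or rescales it toward the origin along its own direction, in which case the displacement has norm $\Vert\mathbf{X}_{t+\tau-1}^k\Vert - \gamma < \Vert\mathbf{X}_{t+\tau-1}^k\Vert$. Hence $\mathbb{E}[\Vert\tilde{\mathbf{X}}_{t+\tau-1}^k - \mathbf{X}_{t+\tau-1}^k\Vert^2] \le \mathbb{E}[\Vert\mathbf{X}_{t+\tau-1}^k\Vert^2]$, and by the definition of $M$ as the supremal expected gradient norm (together with Jensen, $\mathbb{E}[\Vert\cdot\Vert^2]\le(\mathbb{E}[\Vert\cdot\Vert])^2$ is the wrong direction, so I would instead read $M$ as bounding the second moment, or upgrade the assumption accordingly — this is the point I expect to need care) this is at most $M^2$. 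Summing over $k$ gives $\eta_{t+\tau-1}^2 \sum_{k\in\mathcal{K}} p_k^2 (N\mathrm{Var}(f) + M^2)$, i.e.\ $\eta_{t+\tau-1}^2 N(\mathrm{Var}(f)+M^2)\sum_{k\in\mathcal{K}} p_k^2$ after noting $M^2 \le N M^2$ since $N\ge 1$, which is the claimed bound.

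The main obstacle is the clipping-error term: controlling $\mathbb{E}[\Vert\tilde{\mathbf{X}}_{t+\tau-1}^k - \mathbf{X}_{t+\tau-1}^k\Vert^2]$ cleanly requires a second-moment bound on the stochastic gradient, whereas $M$ as literally defined bounds only its \emph{expected norm}; I would either interpret $M$ as a second-moment bound (matching how $M^2$ appears in the statement) or invoke the standard bounded-variance/bounded-second-moment FL assumption. Everything else — factoring the learning rate, killing cross terms via independence of the shared randomness and the zero-mean property of the quantization error, and plugging in Proposition~\ref{prop:FL_error} — is routine.
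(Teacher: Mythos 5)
Your proposal is correct and follows essentially the same route as the paper's proof: subtract the two global update rules, split each client's discrepancy into the LRSUQ quantization error and the clipping error, drop the cross terms using the zero mean and independence of the quantization error (Lemma~\ref{lem:LRSUQ_error} / Proposition~\ref{prop:FL_error}), and bound the two pieces by $N\mathrm{Var}(f)$ and $M^2\le NM^2$ respectively. Your concern that $M$ as literally defined bounds only the expected norm is well taken; the paper implicitly relies on the second-moment bound supplied by \textbf{AS2} ($\mathbb{E}[\Vert\nabla F_k^{j}(\mathbf{W})\Vert^2]\le\theta_k^2$ with $\theta_k\le M$), which is exactly the ``upgraded assumption'' you propose.
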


% {\color{red} how to interpret this result? discussion. see uveqfed, discussion after thm 2 or jopeq, discussion after Thm III.4.}
\fi

\subsection{FL Convergence Analysis} \label{sec: convergence analysis}

Next, we study the FL convergence of CEPAM, considering common assumptions used in the FL  \cite{stich2018local,Li2020On, shlezinger2020uveqfed}: 

\noindent \textbf{AS1:} The $\xi_{k,j}$'s, where $j = 1, \ldots, n_k$, are i.i.d. samples in $\mathcal{D}^{(k)}$. However, different datasets $\mathcal{D}^{(k)}$ can be statistically heterogeneous and follow different distributions. 

\noindent \textbf{AS2:} Let the sample index $j_{t}^k$ be sampled from client $k$’s local data $\mathcal{D}^{(k)}$ uniformly at random. The expected squared norm of stochastic gradients is bounded by some $\theta_{k}^2>0$, 
% i.e., for all $k \in \mathcal{K}$ and $t'=1, \ldots, T$, $\mathbb{E}[\Vert \nabla F_{k}^{j_{t}^{k}}(\mathbf{w}_t^{k}) \Vert] \le \theta^2$ for some $\theta > 0$.
i.e., $\mathbb{E}[\Vert \nabla F_{k}^{j_{t}^{k}}(\mathbf{W}) \Vert^2] \le \theta_{k}^2$, for all $\mathbf{W} \in \mathbb{R}^m$. 
It is clear that $\max_{k\in\mathcal K} \theta_k \leq M$.

\noindent 
\textbf{AS3:} $F_{1}, \ldots,F_{K}$ are all $L$-smooth, i.e., for all $\mathbf{x}, \mathbf{y} \in \mathbb{R}^m$, $F_{k}(\mathbf{y}) - F_{k}(\mathbf{x}) \le (\mathbf{y}-\mathbf{x})^T\nabla F_{k}(\mathbf{x})+\frac{L}{2}\Vert\mathbf{y}-\mathbf{x}\Vert^2$.  
\smallskip

\noindent
\textbf{AS4:} $F_{1}, \ldots,F_{K}$ are all $C$-strongly convex,\footnote{The extension to non-convex objective functions is left for future work.} i.e., for all $\mathbf{x}, \mathbf{y} \in \mathbb{R}^m$, $F_{k}(\mathbf{y}) - F_{k}(\mathbf{x}) \ge (\mathbf{y}-\mathbf{x})^T\nabla F_{k}(\mathbf{x})+\frac{C}{2}\Vert\mathbf{y}-\mathbf{x}\Vert^2$.

We define the following to quantify the degree of non-iid:
\ifshortver
$\psi := F(\mathbf{w}^{*}) - \sum_{k \in \mathcal{K}} \min_{\mathbf{w} \in \mathbb{R}^m} F_{k}(\mathbf{w})$,
\else 
\begin{equation}
 \psi := F(\mathbf{w}^{*}) - \sum_{k \in \mathcal{K}} \min_{\mathbf{w} \in \mathbb{R}^m} F_{k}(\mathbf{w}),   
\end{equation}
\fi
where $\mathbf{w}^{*}$ denotes the minimum of $F(\cdot)$. 
\ifshortver
\else
If the data is iid, meaning that the training data originates from the same
distribution, then $\psi$ goes to zero as the training size increases. In the case of non-i.i.d. data, $\psi$ is positive, and its magnitude reflects the heterogeneity of the data distribution.
\fi

\ifshortver
We consider a clipped version of FedAvg. 
This is achieved by modifying Equation~\ref{eq:global_without_quan} as follows:
\begin{equation} \label{eq:global_clipped}
    \tilde{\mathbf{W}}_{t+\tau} \leftarrow \mathbf{W}_{t} - \eta_{t+\tau-1}  \sum_{k \in \mathcal{K}} p_{k}\tilde{\mathbf{X}}_{t+\tau-1}^{k}.
\end{equation}
\else
\fi
The convergence of CEPAM for FL learning is given as follows. 
\ifshortver
The proof is given in \cite{??????}.
\else
The proof is given in Appendix~\ref{pf:FLConvBound}.
\fi

\begin{theorem} \label{thm:FLConvBound}
Assume that AS1-4 hold and $\theta_k$ $L$, $C$ be defined therein. Set $\alpha=\tau \max\{\frac{4L}{C},1\}$ and the learning rate $\eta_t= \frac{{\color{black} \tau}}{{\color{black}C}(t+\alpha)}$ for all $t \in \mathcal T$. Then, CEPAM satisfies  
\ifshortver
\begin{align} \label{eq:FLConvBound}
   &\mathbb{E}[F(\mathbf{W}_T)]-F(\mathbf{w}^{*}) \leq \tfrac{L}{2(T+\alpha)}\max\{\tfrac{4B \tau^2(\tau+\alpha)} {C^2\alpha(\tau-1)},\alpha \Vert\mathbf{W}_0-\mathbf{w}^*\Vert\},\\
   &B:=6L\psi+ \sum_{k \in \mathcal{K}}8(\tau-1)^2p_k \theta_k^2 + p_k^2 (\theta_k^2 + N(\mathrm{Var}(f)+M^2)). \nonumber
   %+8(\tau-1)^2 \sum_{k \in \mathcal{K}}p_k \theta_{k}^{2}.
\end{align}
\else
\begin{align} \label{eq:FLConvBound}
    \mathbb{E}[F(\mathbf{W}_T)]-F(\mathbf{w}^{*}) \leq \tfrac{L}{2(T+\alpha)}\max\left\{\tfrac{4B \tau^2(\tau+\alpha)} {C^2\alpha(\tau-1)},\alpha \Vert\mathbf{W}_0-\mathbf{w}^*\Vert\right\},
\end{align}
where
\begin{align*}
 B:=6L\psi+N(\mathrm{Var}(f)+M^2)\sum_{k \in \mathcal{K}}p_k^2+ \sum_{k \in \mathcal{K}}p_k^2 \theta_k^2+8(\tau-1)^2 \sum_{k \in \mathcal{K}}p_k \theta_{k}^{2}.
\end{align*}
\end{theorem}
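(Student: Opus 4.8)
The plan is to adapt the virtual-sequence analysis of local SGD under strong convexity \cite{stich2018local,Li2020On}, modified so that the LRSUQ quantization noise — which is injected only at the synchronization indices $\mathcal T_T$ — is absorbed as one extra additive term in the per-step recursion. Introduce the weighted-average iterate $\bar{\mathbf W}_t := \sum_{k\in\mathcal K} p_k \mathbf W_t^k$, which coincides with the server model $\mathbf W_t$ at every $t\in\mathcal T_T$ (in the absence of quantization) and evolves as $\bar{\mathbf W}_{t+1} = \bar{\mathbf W}_t - \eta_t\bar{\mathbf g}_t$ with $\bar{\mathbf g}_t := \sum_k p_k \nabla F_k^{j_t^k}(\mathbf W_t^k)$ and conditional mean $\mathbf g_t := \sum_k p_k \nabla F_k(\mathbf W_t^k)$. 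The starting point is the expansion
\[
\mathbb E\|\bar{\mathbf W}_{t+1}-\mathbf w^*\|^2 = \mathbb E\|\bar{\mathbf W}_t - \mathbf w^*\|^2 - 2\eta_t\,\mathbb E\langle \mathbf g_t,\, \bar{\mathbf W}_t - \mathbf w^*\rangle + \eta_t^2\,\mathbb E\|\bar{\mathbf g}_t\|^2 ,
\]
after which $\mathbb E\|\bar{\mathbf g}_t\|^2 \le \mathbb E\|\mathbf g_t\|^2 + \sum_k p_k^2\theta_k^2$ by AS1--AS2 and independence of the per-client sampling, the pair AS3--AS4 converts the inner-product term into a $-C\eta_t\,\mathbb E\|\bar{\mathbf W}_t-\mathbf w^*\|^2$ contraction plus a heterogeneity term proportional to $L\psi$, and one further needs a bound on the client-drift term $\sum_k p_k\mathbb E\|\mathbf W_t^k - \bar{\mathbf W}_t\|^2$ to close the estimate.

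Next I would control the two error sources specific to this setting. For the client drift: since every $\mathbf W_t^k$ was reset to the common value $\mathbf W_{t_0}$ at the most recent synchronization time $t_0$ with $t - t_0 \le \tau - 1$, telescoping \eqref{eq:local_sgd}, applying AS2, and using that $\eta_s$ is non-increasing yields $\sum_k p_k\mathbb E\|\mathbf W_t^k - \bar{\mathbf W}_t\|^2 \le 4(\tau-1)^2\eta_{t_0}^2\sum_k p_k\theta_k^2$, which (after absorbing the coefficient it picks up in the descent step) is the origin of the $8(\tau-1)^2\sum_k p_k\theta_k^2$ contribution to $B$. For the quantization noise: at a synchronization step $t+1\in\mathcal T_T$ the server substitutes $\hat{\mathbf W}_{t+1}$ for $\mathbf W_{t+1}$, and Proposition~\ref{prop:weight_distanceB} gives $\mathbb E\|\hat{\mathbf W}_{t+1} - \mathbf W_{t+1}\|^2 \le \eta_t^2 N(\mathrm{Var}(f)+M^2)\sum_k p_k^2$; since by Proposition~\ref{prop:FL_error} (equivalently Lemma~\ref{lem:LRSUQ_error}) this perturbation is zero-mean and independent of $\bar{\mathbf W}_{t+1}$, it simply adds $\eta_t^2 N(\mathrm{Var}(f)+M^2)\sum_k p_k^2$ to $\mathbb E\|\hat{\mathbf W}_{t+1}-\mathbf w^*\|^2$ with no cross term. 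Collecting the sampling variance, the drift bound, and the quantization bound produces a single master recursion $\Delta_{t+1} \le (1 - C\eta_t)\Delta_t + \eta_t^2\,\tilde B$ with $\Delta_t := \mathbb E\|\bar{\mathbf W}_t - \mathbf w^*\|^2$ and $\tilde B$ a constant multiple of $B/\tau^2$ (the $\tau^2$ being carried by $\eta_t = \tau/(C(t+\alpha))$).

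Finally I would solve this recursion with the diminishing step size by the usual induction: prove $\Delta_t \le v/(t+\alpha)$ for all $t\in\mathcal T_T$, where $v$ is the maximum appearing in \eqref{eq:FLConvBound}. The base case holds because $v$ dominates $\alpha\|\mathbf W_0-\mathbf w^*\|^2$, and the inductive step reduces to the routine verification that $(1-C\eta_t)\frac{v}{t+\alpha} + \eta_t^2\tilde B \le \frac{v}{t+1+\alpha}$, which is precisely what the choice $\alpha = \tau\max\{4L/C,1\}$ is designed to enforce (it guarantees $\eta_t \le \min\{1/(4L),\,1/C\}$, so the contraction is valid and the slack $\tau C\eta_t - 1 > 0$ needed in the recursion has the right sign). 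The function-value bound then follows from $L$-smoothness of $F$ together with $\nabla F(\mathbf w^*)=\mathbf 0$, which gives $\mathbb E[F(\mathbf W_T)] - F(\mathbf w^*) \le \tfrac{L}{2}\Delta_T$.

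I expect the main obstacle to be the constant bookkeeping rather than any conceptual difficulty: re-expressing the drift bound, which is naturally stated in terms of $\eta_{t_0}^2$, as a multiple of $\eta_t^2$ (using $\eta_{t_0}/\eta_t \le (\tau+\alpha)/\alpha$) and then tracking how the factors $\tau^2$, $(\tau+\alpha)/\alpha$, and $1/(\tau-1)$ combine to yield exactly the prefactor $4\tau^2(\tau+\alpha)/(C^2\alpha(\tau-1))$ in the statement; and, relatedly, deciding whether to spread the once-per-round quantization term uniformly across the $\tau$ steps of a communication window — justified because $\eta_t$ is nearly constant over such a window — or to keep it localized and carry the ``only at $\mathcal T_T$'' structure through the induction. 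The strong-convexity contraction, the $\psi$-term estimate, and the final smoothness conversion are all standard.
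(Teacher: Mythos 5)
Your proposal follows essentially the same route as the paper's proof: the weighted virtual sequence $\sum_k p_k\mathbf W_t^k$, the one-step strongly-convex contraction with the $6L\eta_t^2\psi$ heterogeneity term, the $4(\tau-1)^2\eta_{t_0}^2\sum_k p_k\theta_k^2$ drift lemma, the variance-plus-quantization bound yielding $\sum_k p_k^2(\theta_k^2+N(\mathrm{Var}(f)+M^2))$, the recursion $\delta_{t+1}\le(1-C\eta_t)\delta_t+B\eta_t^2$, the induction $\delta_t\le\nu/(t+\alpha)$ over synchronization indices, and the final $L$-smoothness conversion. The only cosmetic difference is that you attribute the quantization term to the iterate via Proposition~\ref{prop:weight_distanceB} while the paper folds $\mathbf E_t^k$ into the perturbed gradient $\mathbf J_t$ and bounds $\mathbb E[\Vert\mathbf J_t-\bar{\mathbf J}_t\Vert^2]$ directly, which yields the identical contribution to $B$.
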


A decreasing upper bound for all $t \geq 0$ can also be obtained. Between two consecutive global communication rounds, the server first broadcasts the model to all clients, and each client performs multiple local updates. Under Assumptions \textbf{AS1}-\textbf{AS4}, these local iterations constitute standard stochastic gradient steps, and standard gradient-descent analysis ensures that the expected objective value decreases across iterations, yielding convergence for all $t \geq 0$. However, to establish convergence of the model to the true minimizer, it suffices to analyze the process at the synchronous global communication rounds.

\fi

\section{Experimental Evaluations} \label{sec: exp}

In this section, we evaluate the performance of CEPAM.\footnote{ The source code used in our numerical evaluations is available at \url{https://github.com/shiushiu4863/CEPAM-Gradient}.} 
\ifshortver
\else
We begin by detailing our experimental setup, which includes the types of datasets, model architectures, and training configurations in Section \ref{sec: exp setup}. 
\fi
\ifshortver
We present comprehensive experimental results to demonstrate the effectiveness of CEPAM-Gaussian in Section \ref{sec: exp convg} and investigate the accuracy-privacy trade-off phenomenon for CEPAM-Gaussian in Section \ref{sec: exp tradeoff}.
\else
In Section \ref{sec: exp convg}, we present comprehensive experimental results to demonstrate the effectiveness of CEPAM by comparing it to several baselines that are commonly used for privacy protection and quantization in the FL framework. 
In addition, we investigate the accuracy-privacy trade-off phenomenon for  CEPAM-Gaussian \fi
\ifshortver
\else
and CEPAM-Laplace 
\fi
\ifshortver
Due to space constraints, the empirical study for the CEPAM-Laplace is given in \cite{???}.
Details about the experimental setup can be found in \cite{???}. 
\else
All experiments were executed on a server equipped with dual Intel Xeo nGold 6326 CPUs (48 cores and 96 threads in total), 256 GiB RAM, and two NVIDIA RTX A6000 GPUs (each with 48GB VRAM), running Ubuntu 22.04.5 LTS. The implementation was based on Python 3.13.5 and PyTorch 2.7.1+cu126, with CUDA 11.5 and NVIDIA driver version 565.57.01.

\subsection{Experimental Setup} \label{sec: exp setup}

\subsubsection{Datasets} We evaluate CEPAM on the standard image classification benchmark MNIST, which consists of $28 \times 28$ grayscale handwritten digits image divided into 60,000 training examples and 10,000 testing examples. The training examples and the testing examples are equally distributed among $K = 30$ clients. For simplicity, we set $p_k = 1/K$ for all $k \in \mathcal K$.

\subsubsection{Learning Architecture} We evaluate CEPAM using the convolutional neural network (CNN), which is composed of two convolutional layers followed by two fully-connected ones, with intermediate ReLU activations and max-pooling layers. Also, the model use a softmax output layer. There are 6422 learnable parameters for CNN.

\subsubsection{Baselines} We compare CEPAM to the following baselines:

\begin{itemize}
    \item FL: vanilla FL without any privacy or compression scheme.
    \item FL+SDQ: FL with scalar SDQ-based compression but no privacy scheme.
    \item FL+\{Gaussian, Laplace\}: FL with one-dimensional Gaussian or Laplace mechanism, but no compression scheme.
    \item FL+\{Gaussian, Laplace\}+SDQ: FL with approach that applies one-dimensional Gaussian or Laplace mechanism followed by SDQ.
    \item CEPAM-\{Gaussian, Laplace\}: CEPAM achieves privacy and compression jointly through LRSUQ to construct Gaussian or Laplace mechanisms, namely CEPAM-Gaussian or CEPAM-Laplace. In particular, we evaluate three different cases of LRSUQ that simulate Gaussian noise for dimension $n = 1,2,3$, where we use the scaled integer lattice $\alpha \mathbb{Z}^n$ with $\alpha = 10^{-3}$, and the corresponding basic cells are $\alpha (-0.5, 0.5]$, $\alpha (-0.5, 0.5]^2$, and $\alpha (-0.5, 0.5]^3$. We have chosen $\alpha = 10^{-3}$ as the precision level for quantization as it aligns with the observed precision level of the model updates during training.
\end{itemize}

\subsubsection{Training Configurations} We select the momentum SGD as the optimizer, where the momentum is set to 0.9. The local iterations $\tau$ per global epoch is set to 15. Also, we set the initial learning rate to be 0.1.

\subsubsection{Repetition Strategy} For each set of parameters and baseline method, the training process was repeated 10 times with varying random seeds. Indicators of performance, including average and 95\% confidence interval across these runs was reported in the following sections.
\fi

\subsection{FL Convergence} \label{sec: exp convg}

We report the FL convergence of CEPAM-Gaussian 
\ifshortver
\else 
and CEPAM-Laplace 
\fi
in terms of accuracy using CNN architecture and MNIST dataset.

\subsubsection{Gaussian Mechanism} 
We set $\sigma = 0.01$ and $\tilde{\epsilon} = 5.9$ for the base Gaussian mechanisms in CEPAM-Gaussian for $n = 1,2,3$ with variance $0.01^2$, $2 \times 0.01^2$ and $3\times 0.01^2$ respectively. By Theorem \ref{thm:Gaussian_Priv}, all the clients' composite Gaussian mechanisms achieve $(\epsilon = 1.45, \delta= 9.69 \times 10^{-3})$-DP. We also set $\sigma  = 0.01$ for both FL+Gaussian and FL+Gaussian+SDQ.

\ifshortver
\begin{figure}[ht] 
        \centering \includegraphics[width=0.8\linewidth]{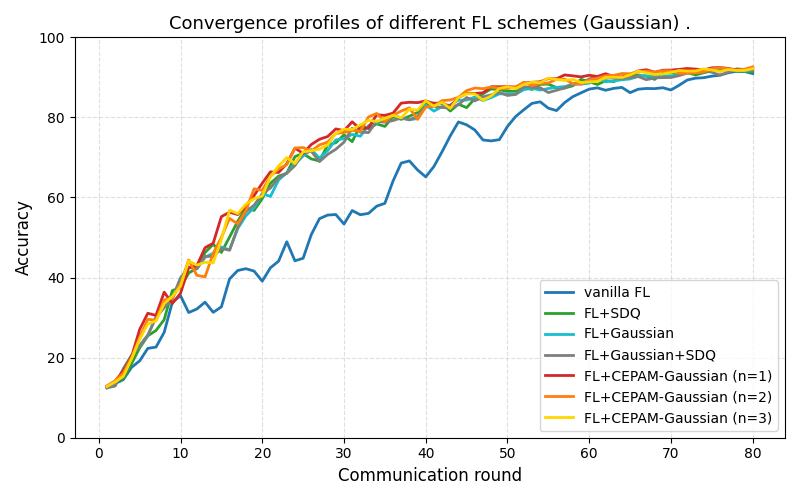}
        \caption{Convergence profile of different FL schemes}
        \label{fig:convergece gaussian}
        \vspace{-3pt}
\end{figure}
\else
\begin{figure}[ht] 
        \centering \includegraphics[width=1\linewidth]{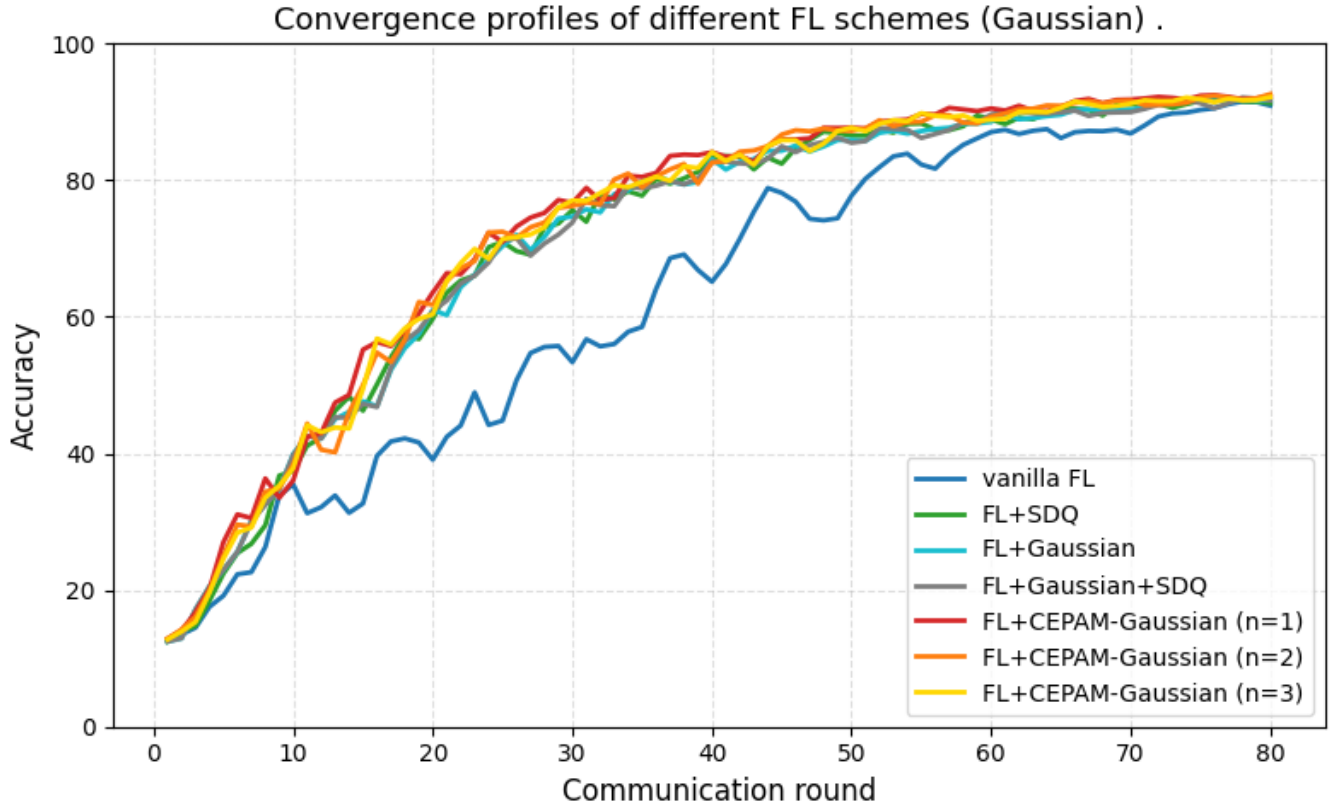}
        \caption{Convergence profile of different FL schemes for Gaussian}
        \label{fig:convergece gaussian}
        \vspace{-3pt}
\end{figure}
\fi

Figure \ref{fig:convergece gaussian} shows the validation accuracy of CEPAM-Gaussian over 80 global communication rounds using the CNN model. Note that the cases of CEPAM-Gaussian ($n=1,2,3$) outperform and achieve higher accuracy than all the other baselines. 
\ifshortver
\else
The pronounced fluctuations observed in the figure are mainly attributed to two factors. First, the server aggregates raw stochastic gradient at each communication round, instead of aggregating model differences in \cite{ling2025communication}, which inherently exhibit high variance. Second, a fixed learning rate is employed throughout the entire training phase.
\fi

\begin{table}[ht]
\centering
\renewcommand{\arraystretch}{0.8}
\begin{tabular}{ccc}
\toprule
Baselines & Accuracy (\%) & Average SNR (dB) \\
\midrule
FL                & $93.24 \pm 0.31$ & $\infty$ \\
FL+SDQ            & $93.28 \pm 0.63$ & $84.22$ \\
FL+Gaussian       & $93.46 \pm 0.70$ & $24.52$ \\
FL+Gaussian+SDQ   & $93.18 \pm 0.59$ & $24.65$ \\
CEPAM-Gaussian ($n=1$) & $94.11 \pm 0.49$ & $8.15$ \\
CEPAM-Gaussian ($n=2$) & $94.06 \pm 0.45$ & $15.26$ \\
CEPAM-Gaussian ($n=3$) & $93.86 \pm 0.36$ & $19.76$ \\
\bottomrule \\
\end{tabular}
\vspace{-3pt}
\caption{Test Accuracy for Gaussian on MNIST}
\label{table:AccGaussian}
\vspace{-3pt}
\end{table}

In Table \ref{table:AccGaussian}, we report the test accuracy of CEPAM-Gaussian with their 95\% confidence intervals and the average SNR ratio (in dB). CEPAM-Gaussian in different dimensions all demonstrate better accuracy, achieving an improvement of 0.4-0.9\% in accuracy compared to the other baselines. Furthermore, CEPAM-Gaussian of different dimension have approximately the same accuracy. This is because the added noise per dimension is consistent, i.e., $\mathrm{Var}(f) / n = \sigma^2$. However, higher-dimension LRSUQ has a better compression ratio than scalar counterparts 
\ifshortver
\cite{hasircioglu2023communication,hegazy2022randomized}, 
\else
\cite{hasircioglu2023communication, yan2023layered}, 
\fi
due to its nature as a vector quantizer.

\ifshortver
\else
\subsubsection{Laplace Mechanism}

We set $b = 0.01$ and $\tilde{\epsilon} = 3000$ for the base Laplace mechanism in CEPAM-Laplace with a variance $2 \times 0.01^2$, and the composite Laplace mechanism achieves $(\epsilon = 2995)$-DP. We also set $b = 0.01$ for both FL+Laplace and FL+Laplace+SDQ. Note that a higher privacy budget is required to ensure $\epsilon$-DP for CEPAM-Laplace. However, we might achieve a lower privacy budget by relaxing $\delta$. The trade-off of this relaxation is left for future study.

\begin{figure}[ht]
        \centering
        \includegraphics[width=1\linewidth]{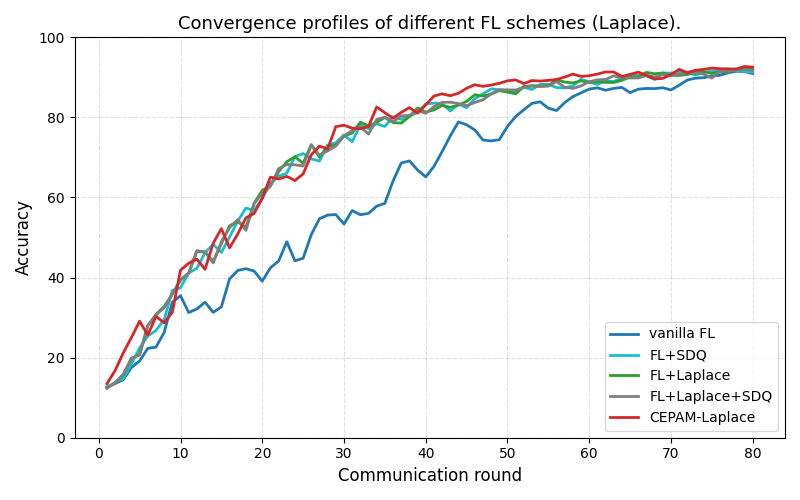}
        \caption{Convergence profile of different FL schemes for Laplace}
        \label{fig:convergece laplace}
        \vspace{-3pt}
\end{figure} 

Figure \ref{fig:convergece laplace} shows the validation accuracy of CEPAM-Laplace over global epochs using the CNN model. Similar to CEPAM-Gaussian, we observe that CEPAM-Laplace outperforms other baselines.

\begin{table}[ht]
    \centering
    \begin{tabular}{ccc}
    \toprule
    Baselines & Accuracy (\%) & Average SNR (dB) \\
    \midrule
    FL                & $93.24 \pm 0.31$ & $\infty$ \\
    FL+SDQ            & $93.28 \pm 0.63$ & $84.22$ \\
    FL+Laplace        & $93.43 \pm 0.54$ & $20.96$ \\
    FL+Laplace+SDQ    & $93.25 \pm 0.59$ & $21.21$ \\
    CEPAM-Laplace     & $94.32 \pm 0.40$ & $-1.582$ \\
    \bottomrule \\
    \end{tabular}
    \vspace{-3pt}
    \caption{Test Accuracy for Laplace on MNIST}
    \label{table:AccLaplace}
    \vspace{-3pt}
\end{table}

In Table \ref{table:AccLaplace}, we also report the performance of test accuracy of CEPAM-Laplace with their 95\% confidence intervals and the average SNR ratio (in dB). We observe that CEPAM-Laplace achieves an improvement of about 0.8-1.1\% in accuracy compared to the other baselines.
\fi

It is worth noting that adding a minor level of distortion during training deep models  can potentially enhance the performance of the converged model \cite{An1996AddingNoise}. 
This finding corroborates in our experimental results, as 
\ifshortver
\else
both 
\fi
CEPAM-Gaussian 
\ifshortver
\else
and CEPAM-Laplace 
\fi
achieves slightly better accuracy than vanilla FL.

\subsection{Privacy-Accuracy Trade-off} \label{sec: exp tradeoff}
\subsubsection{Gaussian Mechanism}

We evaluate the privacy-accuracy trade-off with CEPAM-Gaussian. In the following, the parameters are computed according to Theorem \ref{thm:Gaussian_Priv}. We set $\delta = 0.015$, we conducted the experiments by varying privacy budget $\epsilon$ from $0.5$ to $5$, with step size $0.5$. The corresponding noise levels $\sigma$ 
\ifshortver can be computed according to Theorem \ref{thm:Gaussian_Priv}.
\else 
are $\{0.13881, 0.08567, 0.06058, 0.04447, 0.03274, 0.02362, 0.01623, $ $ 0.01011, 0.00494, 0.00052\}$ respectively. For each set of parameters, the experiments are simulated 10 times with different random seeds, and the results are averaged. 
\fi

\ifshortver
\begin{figure}
        \centering
        \includegraphics[width=0.8\linewidth]{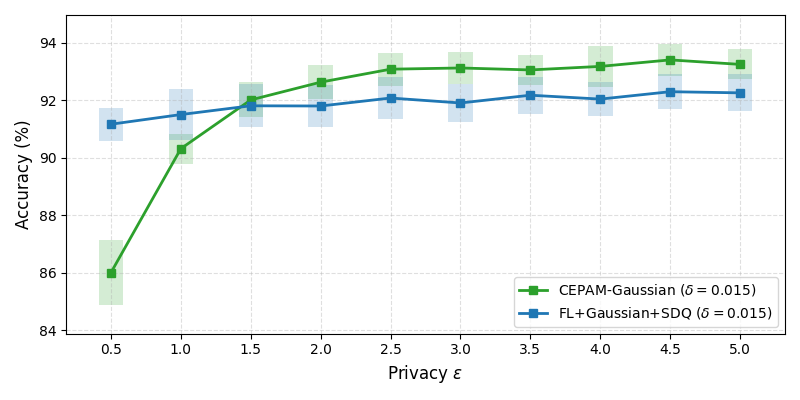}
        \caption{Accuracy and Privacy Trade-off of Gaussian}
        \label{fig: tradeoff_gaussian}
        \vspace{-3pt}
\end{figure}
\else
\begin{figure}
        \centering
        \includegraphics[width=1\linewidth]{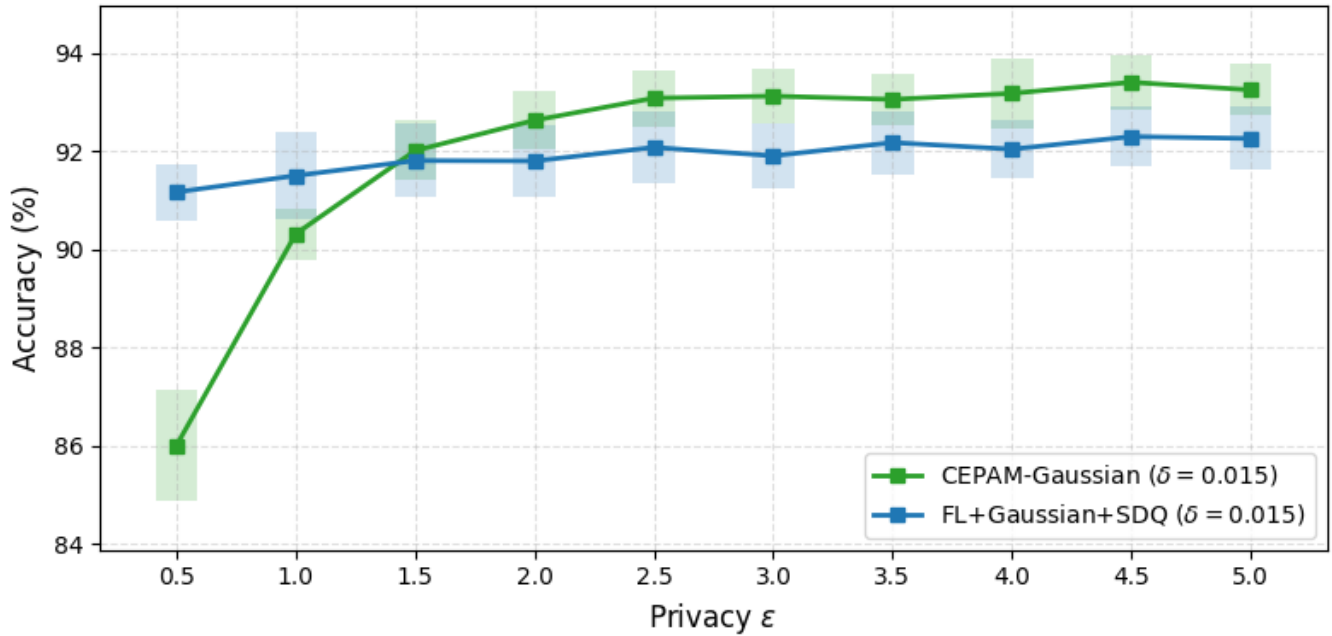}
        \caption{Accuracy and Privacy Trade-off of Gaussian}
        \label{fig: tradeoff_gaussian}
        \vspace{-3pt}
\end{figure}
\fi

Figure \ref{fig: tradeoff_gaussian} illustrates the trade-off between learning performance, measured in terms of test accuracy, and the privacy budget between CEPAM-Gaussian and the simple Gaussian-mechanism-then-quantize approach (Gaussian+SDQ).

Overall, CEPAM-Gaussian has a better performance than Gaussian+SDQ. The figure demonstrates that as more privacy budget is allocated, higher test accuracy can be achieved with CEPAM-Gaussian. However, there is a point of diminishing returns; one the privacy budget reaches a threshold ($\epsilon \approx 2.5$), the increase in test accuracy by further increasing the privacy budget becomes limited.

\ifshortver
\else
\subsubsection{Laplace Mechanism}

Similarly, we evaluate the privacy-accuracy trade-off with CEPAM-Laplace. In the following, the parameters are computed according to Theorem \ref{thm:Laplace_privy}. We conducted the experiments by varying privacy budget $\epsilon$ from $500$ to $5000$, with step size $500$. The corresponding noise levels $b$ are \{0.05944, 0.029859, 0.019937, 0.014965, 0.011977, 0.009984, 0.00856, 0.007491, 0.00666, 0.005994\} respectively. Again, for each set of parameters, the experiments are simulated 10 times with different random seeds, and the results are averaged.

\begin{figure}[ht]
        \centering
        \includegraphics[width=1\linewidth]{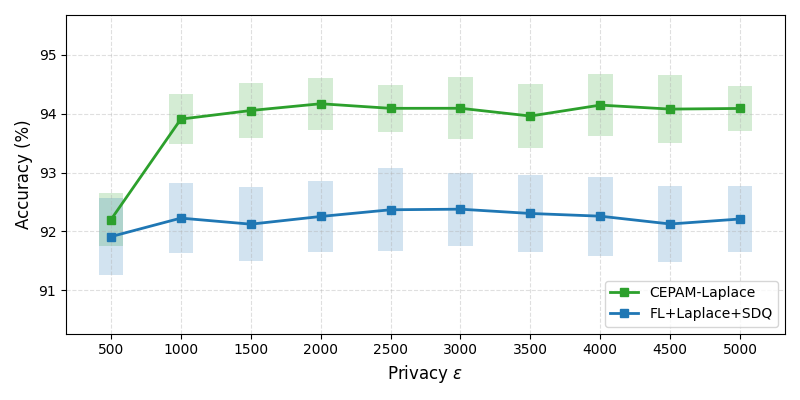}
        \caption{Accuracy and Privacy Trade-off of Laplace}
        \label{fig: tradeoff_laplace}
        \vspace{-3pt}
\end{figure}

Figure \ref{fig: tradeoff_laplace} illustrates the trade-off between learning performance, measured in terms of test accuracy, and the privacy budget between CEPAM-Laplace and the simple Laplace-mechanism-then-quantize approach (Laplace+SDQ).

Overall, CEPAM-Laplace outperforms Gaussian+SDQ. A similar diminishing return phenomenon can be observed in this case, as the privacy budget reaches a threshold ($\epsilon \approx 1500$), the increase in test accuracy by further increasing the privacy budget becomes limited.

It is worth noting that the test accuracy slightly decreased when we further increase the privacy budget ($\epsilon \approx 4000$), i.e., lowering the noise level $b$. This finding again coincides with the finding that adding a minor level of distortion can potentially enhance  performance.

\fi

\ifshortver
\else
\section{Conclusion and Discussion} \label{sec: conclusion}
In this paper, we further investigated CEPAM, which is designed to achieve communication efficiency and privacy protection simultaneously in the FL framework.
We proved the FL convergence bound for CEPAM. We also conducted various experiments on a modified version of CEPAM, including convergence profiles and accuracy-privacy trade-off behavior. The experimental results confirmed our theoretical findings and demonstrated that both CEPAM-Gaussian and CEPAM-Laplace lead to an enhancement in test accuracy compared to several other commonly used baselines in the FL framework.

For potential future work, several interesting directions emerge. One could be to conduct a convergence analysis for the nonconvex case and extend the privacy analysis to other subsampling mechanisms. Another promising avenue would be to adapt CEPAM for personalized federated learning and rigorously evaluate its performance and efficacy.
\fi

\ifshortver
\else
\section*{Acknowledgment}
The authors thank Ms. Youqi Wu for developing the original framework for the simulations in Section \ref{sec: exp}.
\fi

\ifshortver
\else

%\clearpage
\appendices

\section{Proof of Proposition~\ref{prop:FL_error} \label{pf:FL_error}}

Fix $\tilde{\mathbf{X}}_{t+\tau-1,j}^{k}=\tilde{\mathbf{x}}_{t+\tau-1,j}^{k}$.
By Lemma~\ref{lem:LRSUQ_error}, the LRSUQ quantization errors $\{\tilde{\mathbf{Z}}_{t+\tau-1,j}^k:= \mathbf{Y}_{t+\tau-1,j}^k - \tilde{\mathbf{x}}_{t+\tau-1,j}^{k}\}_{j \in \mathcal{N}}$ are iid (over $k$ and $j$) and follows the pdf $f$. 
Hence, 
\begin{align*}
\mathbb{E}[\Vert\mathbf{Z}_{t+\tau-1}^k\Vert^2 
\big| {\color{black}\tilde{\mathbf{x}}_{t+\tau-1}^{k}}] 
&=  \sum_{j \in \mathcal{N}}\mathbb{E}[\Vert\tilde{\mathbf{Z}}_{t+\tau-1,j}^k\Vert^2]\\
&\stackrel{(a)}{=} \sum_{j \in \mathcal{N}}\mathrm{Var}(f)\\
&=  N\mathrm{Var}(f), 
\end{align*}
where (a) is because the mean of $\tilde{\mathbf{Z}}_{t+\tau-1,j}^k \sim f$ is $\mathbf{0}$. This completes the proof.

\section{Proof of Proposition~\ref{prop:weight_distanceB}\label{pf:weight_distanceB}}
Let $\{\mathbf{W}_{t+\tau,j}\}_{j \in \mathcal{N}}$ and $\{\hat{\mathbf{W}}_{t+\tau,j}\}_{j \in \mathcal{N}}$ denote the partitions of $\mathbf{W}_{t+\tau}$ and $\hat{\mathbf{W}}_{t+\tau}$ into $N$ distinct $n$-dimensional sub-vectors, respectively. 
We have, for $j \in \mathcal{N}$,
\begin{equation*}
\hat{\mathbf{W}}_{t+\tau,j} = \mathbf{W}_{t,j} - \eta_{t+\tau-1}\sum_{k \in \mathcal{K}} p_{k} \hat{\mathbf{X}}_{t+\tau-1,j}^{k},
\end{equation*}
and 
\begin{equation*}
\mathbf{W}_{t+\tau,j} = \mathbf{W}_{t,j} - \eta_{t+\tau-1}\sum_{k \in \mathcal{K}} p_{k} \mathbf{X}_{t+\tau-1,j}^{k}.
\end{equation*}
Hence, 
\begin{align*}
\mathbb{E}[\Vert \hat{\mathbf{W}}_{t+\tau}- \mathbf{W}_{t+\tau}\Vert^2] 
&=\mathbb{E}\Big[\Vert \eta_{t+\tau-1}\sum_{j \in \mathcal{N}} \sum_{k \in \mathcal{K}}p_{k}  (\hat{\mathbf{X}}_{t+\tau-1,j}^k - \mathbf{X}_{t+\tau-1,j}^k) \Vert^2\Big]  \\
&\stackrel{(a)}{\le} \eta_{t+\tau-1}^2\sum_{k \in \mathcal{K}} p_{k}^2 \sum_{j \in \mathcal{N}} \mathbb{E} \Big[\Vert \hat{\mathbf{X}}_{t+\tau-1,j}^k - \mathbf{X}_{t+\tau-1,j}^k \Vert^2\Big]  \\
&  = \eta_{t+\tau-1}^2\sum_{k \in \mathcal{K}} p_{k}^2 \sum_{j \in \mathcal{N}} \mathbb{E} \Big[\Vert(\hat{\mathbf{X}}_{t+\tau-1,j}^k - \tilde{\mathbf{X}}_{t+\tau-1,j}^k) + (\tilde{\mathbf{X}}_{t+\tau-1,j}^k - \mathbf{X}_{t+\tau-1,j}^k) \Vert^2\Big]  \\
& \leq \eta_{t+\tau-1}^2\sum_{k \in \mathcal{K}} p_{k}^2 \sum_{j \in \mathcal{N}} \left(\mathbb{E} \Big[\Vert(\hat{\mathbf{X}}_{t+\tau-1,j}^k - \tilde{\mathbf{X}}_{t+\tau-1,j}^k)\Vert^2\Big]+ \mathbb{E}\Big[\Vert(\tilde{\mathbf{X}}_{t+\tau-1,j}^k - \mathbf{X}_{t+\tau-1,j}^k) \Vert^2\Big]\right) \\
& \leq \eta_{t+\tau-1}^2\sum_{k \in \mathcal{K}} p_{k}^2 \sum_{j \in \mathcal{N}} \left( \mathbb{E} \Big[\Vert \tilde{\mathbf{Z}}_{t+\tau-1,j}^k \Vert^2\Big] + M^2 \right) \\
&\stackrel{(b)}{=} \eta_{t+\tau-1}^2N \sum_{k \in \mathcal{K}} p_{k}^2 (\mathrm{Var}(f) + M^2), 
\end{align*}
where (a) is by the triangle inequality, and (b) is by the argument of Proposition~\ref{prop:FL_error}.
This completes the proof.

\section{Proof of Theorem~\ref{thm:FLConvBound}\label{pf:FLConvBound}}

Our proof adopts similar strategies to those in \cite{stich2018local,Li2020On,shlezinger2020uveqfed}, supplemented by additional arguments to address the LRSUQ quantization error. The distinctive features of the quantization error introduced by LRSUQ, as discussed in Lemma~\ref{lem:LRSUQ_error}, enable us to establish the convergence property of CEPAM. 
We include the proof for the sake of completeness.

Recall that $\mathbf{Z}_{t-1}^k$ where $t \in \mathcal{T}_T$ is the FL quantization error, which is independent of the clipped gradient $\tilde{\mathbf{X}}_{t-1}^k$ by Lemma~\ref{lem:LRSUQ_error}.
Let $\mathbf{E}_{t-1}^k := \mathbf{Z}_{t-1}^k + \mathbf{C}_{t-1}^k$ if $t\in \mathcal{T}_T$ and $\mathbf{E}_{t}^k:=\mathbf{0}$ otherwise, where $\mathbf{C}_{t-1}^k$ is the clipping error, then the update of CEPAM can also be described alternatively (compared to~\eqref{eq:local_sgd} and \eqref{eq:global_with_quan}) as follows:
\begin{equation} \label{eq:local_sgd_alt}
\mathbf{W}_{t+1}^k =
\begin{cases}
\mathbf{W}_{t}^{k} - \eta_{t} \nabla F_{k}^{j_{t}^{k}}(\mathbf{W}_t^{k}) + \mathbf{E}_{t}^k      & \text{if } t+1\notin \mathcal{T}_T, \\
\sum_{k' \in \mathcal{K}}p_{k'}\left(\mathbf{W}_{t+1-\tau}^{k'} - \eta_{t} \nabla F_{k'}^{j_{t}^{k'}}(\mathbf{W}_t^{k'}) + \eta_{t}\mathbf{E}_{t}^{k'}\right) & \text{if } t+1\in \mathcal{T}_T.
\end{cases}
\end{equation}
By applying the strategy outlined in \cite{stich2018local} and adapting it to Assumption 1, i.e.,  heterogeneous dataset as discussed in \cite{Li2020On}, we define a virtual sequence $(\mathbf{W}_{t}')_{t \in \{0\}\cup[T]}$, where $[T]:=\{1,2,\ldots,T\}$, from the FL model weights $(\mathbf{W}_t^k)_{t \in \{0\}\cup[T]}$ as follows: 
\begin{equation}
   \mathbf{W}_{t}' = \sum_{k \in \mathcal{K}} p_k \mathbf{W}_t^k,  
\end{equation}
which coincides with $\mathbf{W}_t^k$ when $t \in \mathcal{T}_T$.
This sequence can be demonstrated to behave similarly to mini-batch SGD with a batch size of $\tau$ and being bounded within a bounded distance of  $(\mathbf{W}_t^k)_{t \in \{0\}\cup[T]}$,  by appropriately configuring the step size $\eta_t$ \cite{stich2018local}.
For convenience, define the \emph{averaged full gradients} and the \emph{averaged perturbed stochastic gradients} as 
\begin{align}
    \bar{\mathbf{J}}_{t} &:= \sum_{k \in \mathcal{K}}p_k \nabla F_{k}(\mathbf{W}_t^k), \\
    \mathbf{J}_{t} &:=  \sum_{k \in \mathcal{K}}p_k \left(\nabla F_{k}^{j_{t}^{k}}(\mathbf{W}_t^k)-\mathbf{E}_{t}^k\right),
\end{align}
respectively.
Note that since the FL quantization error $\mathbf{Z}_{t+\tau{\color{blue}-1}}$ has zero mean by Proposition~\ref{prop:FL_error} and the sample indices $j_{t}^k$ are independent and uniformly distributed, it follows that $\mathbb{E}[\mathbf{J}_{t}]=\bar{\mathbf{J}}_{t}$. 
Moreover, the virtual sequence satisfies $\mathbf{W}_{t+1}'=\mathbf{W}_{t}'-\eta_t\mathbf{J}_{t}$ if $t+1\notin \mathcal{T}_T$ and $\mathbf{W}_{t+1}'=\mathbf{W}_{t+1-\tau}'-\eta_t\mathbf{J}_{t}$ if $t+1 \in \mathcal{T}_T$ (see Figure~\ref{fig: flow diagram}).

Therefore, the resulting model is equivalent to the model discussed in \cite[Appendix A]{Li2020On} or \cite[Appendix C]{shlezinger2020uveqfed}. 
This implies that we can apply the following result (one step SGD for heterogeneity dataset) from \cite[Lemma 1]{Li2020On} for $t+1 \notin \mathcal{T}_T$:
\begin{equation} \label{eq:onestepSGD1}
 \mathbb{E}[\Vert \mathbf{W}_{t+1}'-\mathbf{w}^*\Vert^2] \le (1 - \eta_t C)\mathbb{E}[\Vert \mathbf{W}_{t}'-\mathbf{w}^*\Vert^2] + 6L\eta_{t}^2\psi+\eta_t^2\mathbb{E}[\Vert \mathbf{J}_{t}-\bar{\mathbf{J}}_{t}\Vert^2]
 +2\mathbb{E}\left[\sum_{k \in \mathcal{K}}p_k\Vert \mathbf{W}_{t}'-\mathbf{W}_{t}^k\Vert^2\right],
\end{equation}
provided that $\eta_t \le \frac{1}{4L}$ and Assumptions 3-4 hold, and for $t+1 \in \mathcal{T}_T$:
\begin{equation} \label{eq:onestepSGD2}
 \mathbb{E}[\Vert \mathbf{W}_{t+1}'-\mathbf{w}^*\Vert^2] \le (1 - \eta_t C)\mathbb{E}[\Vert \mathbf{W}_{t+1-\tau}'-\mathbf{w}^*\Vert^2] + 6L\eta_{t}^2\psi+\eta_t^2\mathbb{E}[\Vert \mathbf{J}_{t}-\bar{\mathbf{J}}_{t}\Vert^2]
 +2\mathbb{E}\left[\sum_{k \in \mathcal{K}}p_k\Vert \mathbf{W}_{t}'-\mathbf{W}_{t}^k\Vert^2\right],
\end{equation}
provided that $\eta_t \le \frac{1}{4L}$ and Assumptions 3-4 hold.

The third term in RHS of~\eqref{eq:onestepSGD1} and of~\eqref{eq:onestepSGD2} can be upper-bounded by using the following lemma, where the proof is given in Appendix~\ref{pf:virtual error}:

\begin{lemma} \label{lem:virtual error}
    When \textbf{AS2} holds, we have
    \begin{align}
        \mathbb{E}[\|\bar{\mathbf{J}}_t  - \mathbf{J}_t\|^2] \leq \sum_{k \in \mathcal K} p_k^2 (\theta_k^2 + N(\Var(f)+M^2)). \label{eq:grad_bound}
    \end{align}
\end{lemma}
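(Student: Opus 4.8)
The plan is to exploit two facts: that the fresh randomness used at iteration $t$ (the sample index $j_t^k$ and the LRSUQ dithers/latent variable) is independent \emph{across clients}, and that the error terms making up $\mathbf{J}_t - \bar{\mathbf{J}}_t$ are (conditionally) centered. Writing $\mathbf{X}_t^k := \nabla F_k^{j_t^k}(\mathbf{W}_t^k)$ for the stochastic gradient and $\mathbf{A}_t^k := \nabla F_k(\mathbf{W}_t^k) - \mathbf{X}_t^k + \mathbf{E}_t^k$, so that $\bar{\mathbf{J}}_t - \mathbf{J}_t = \sum_{k\in\mathcal K} p_k \mathbf{A}_t^k$, the first step is to condition on the history $\mathcal H_t$ of all model states produced up to time $t$. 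Given $\mathcal H_t$, uniform sampling gives $\mathbb{E}[\mathbf{X}_t^k\mid\mathcal H_t] = \nabla F_k(\mathbf{W}_t^k)$, the LRSUQ quantization error $\mathbf{Z}_t^k$ inside $\mathbf{E}_t^k$ is zero-mean and input-independent (Lemma~\ref{lem:LRSUQ_error}, Proposition~\ref{prop:FL_error}), and — as in the modeling preceding this lemma, which already asserts $\mathbb{E}[\mathbf{J}_t]=\bar{\mathbf{J}}_t$ — the clipping error $\mathbf{C}_t^k$ is treated as a further centered perturbation; hence each $\mathbf{A}_t^k$ is conditionally centered. Since the $\mathbf{A}_t^k$ are mutually conditionally independent across $k$ given $\mathcal H_t$, all cross-client inner products vanish and $\mathbb{E}[\|\bar{\mathbf{J}}_t - \mathbf{J}_t\|^2] = \sum_{k\in\mathcal K} p_k^2\,\mathbb{E}[\|\mathbf{A}_t^k\|^2]$.

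The second step is to bound $\mathbb{E}[\|\mathbf{A}_t^k\|^2]$ by splitting $\mathbf{A}_t^k$ into the stochastic-gradient noise $\nabla F_k(\mathbf{W}_t^k) - \mathbf{X}_t^k$, the LRSUQ quantization error $\mathbf{Z}_t^k$, and the clipping error $\mathbf{C}_t^k := \tilde{\mathbf{X}}_t^k - \mathbf{X}_t^k$. The cross terms containing $\mathbf{Z}_t^k$ drop out because $\mathbf{Z}_t^k$ is zero-mean and independent of the other two (Lemma~\ref{lem:LRSUQ_error}), so $\mathbb{E}[\|\mathbf{A}_t^k\|^2]$ is at most the sum of the three squared norms. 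For the first, AS2 gives $\mathbb{E}[\|\nabla F_k(\mathbf{W}_t^k) - \mathbf{X}_t^k\|^2] \le \mathbb{E}[\|\mathbf{X}_t^k\|^2] \le \theta_k^2$ (a variance is at most the second moment). For the second, Proposition~\ref{prop:FL_error} gives $\mathbb{E}[\|\mathbf{Z}_t^k\|^2] = N\Var(f)$. For the third, norm clipping is a contraction toward the origin, so $\|\mathbf{C}_t^k\| \le \|\mathbf{X}_t^k\|$ and thus $\mathbb{E}[\|\mathbf{C}_t^k\|^2] \le \mathbb{E}[\|\mathbf{X}_t^k\|^2] \le \theta_k^2 \le M^2 \le NM^2$. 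Combining, $\mathbb{E}[\|\mathbf{A}_t^k\|^2] \le \theta_k^2 + N(\Var(f)+M^2)$, and summing against $p_k^2$ yields the stated inequality; when $t+1\notin\mathcal T_T$ we have $\mathbf{E}_t^k=\mathbf{0}$ and the same bound holds a fortiori.

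The one genuinely delicate point is the clipping error: $\mathbf{C}_t^k$ need not be exactly conditionally centered, so the cross term between $\mathbf{C}_t^k$ and the stochastic-gradient noise, together with the residual cross-client terms $\langle\mathbb{E}[\mathbf{C}_t^k\mid\mathcal H_t],\mathbb{E}[\mathbf{C}_t^{k'}\mid\mathcal H_t]\rangle$, are not literally zero. I would dispose of these using the contraction property of clipping — namely $\langle\mathbf{X}_t^k,\mathbf{C}_t^k\rangle\le 0$ and $\|\tilde{\mathbf{X}}_t^k\|\le\|\mathbf{X}_t^k\|$ — together with Cauchy--Schwarz and the moment bound $\mathbb{E}[\|\mathbf{X}_t^k\|]\le M$, so that any extra contribution is of order $M^2$ and is absorbed into the loose $NM^2$ term; equivalently, one simply models the clipping error as an additional bounded, centered perturbation exactly as the discussion preceding the lemma does. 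Every other step is immediate from AS2, Lemma~\ref{lem:LRSUQ_error}, and Proposition~\ref{prop:FL_error}.
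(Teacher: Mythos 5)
Your proposal follows essentially the same route as the paper's proof: the same decomposition of $\bar{\mathbf{J}}_t-\mathbf{J}_t$ into per-client sampling noise plus $\mathbf{E}_t^k$, cross terms killed by zero-mean/independence across clients, AS2 (variance at most second moment) for the sampling noise, Proposition~\ref{prop:FL_error} for the $N\Var(f)$ term, and absorption of the clipping error into the slack $NM^2$. The ``delicate point'' you flag about the clipping error not being exactly centered is genuine, but the paper disposes of it exactly as your fallback suggests --- by simply asserting $\mathbb{E}[\mathbf{E}_t^k]=\mathbf{0}$ and bounding $\mathbb{E}[\Vert\mathbf{E}_t^k\Vert^2]\le N(\Var(f)+M^2)$ --- so your treatment is, if anything, the more careful of the two.
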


Furthermore, the last term in RHS of~\eqref{eq:onestepSGD1} and of~\eqref{eq:onestepSGD2} can be upper-bounded by using the following lemma, where the proof is given in Appendix~\ref{pf:bound_deviate}:

\begin{lemma} \label{lem:bound_deviate}
Assume that $\eta_t$ is non-increasing and $\eta_t \le 2\eta_{t+\tau}$ for all $t\ge 0$. When Assumption 2 holds, we have 
\begin{equation} \label{ieq:bound_deviate}
    \mathbb{E}\left[\sum_{k \in \mathcal{K}}p_k\Vert \mathbf{W}_{t}'-\mathbf{W}_{t}^k\Vert^2\right] \le 4(\tau-1)^2 \eta_{t}^2\sum_{k \in \mathcal{K}}p_k \theta_{k}^{2}.
\end{equation}
\end{lemma}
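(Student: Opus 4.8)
The plan is the standard bounded-local-drift argument for local SGD, specialised to CEPAM's round structure. Fix an iteration $t$. If $t\in\mathcal{T}_T$ then every client has just been resynchronised, so $\mathbf{W}_t^k=\mathbf{W}_t=\mathbf{W}_t'$ for all $k$ and the left-hand side vanishes; so assume $t\notin\mathcal{T}_T$ and let $t_0:=\tau\lfloor t/\tau\rfloor\in\mathcal{T}_T$ be the most recent synchronisation index, so that $1\le t-t_0\le\tau-1$. On the half-open interval $(t_0,t]$ there is no aggregation, i.e.\ $\mathbf{E}_s^k=\mathbf{0}$, so each client performs plain SGD started from the common point $\mathbf{W}_{t_0}^k=\mathbf{W}_{t_0}$, giving $\mathbf{W}_t^k-\mathbf{W}_{t_0}=-\sum_{s=t_0}^{t-1}\eta_s\nabla F_k^{j_s^k}(\mathbf{W}_s^k)$.

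First I would pass from $\mathbf{W}_t'$ to $\mathbf{W}_{t_0}$: since the $p$-weighted mean $\mathbf{W}_t'=\sum_{k'}p_{k'}\mathbf{W}_t^{k'}$ minimises $c\mapsto\sum_k p_k\Vert\mathbf{W}_t^k-c\Vert^2$, the choice $c=\mathbf{W}_{t_0}$ gives $\sum_k p_k\Vert\mathbf{W}_t^k-\mathbf{W}_t'\Vert^2\le\sum_k p_k\Vert\mathbf{W}_t^k-\mathbf{W}_{t_0}\Vert^2$. Then for each $k$ I would apply Cauchy--Schwarz to the sum of the $t-t_0\le\tau-1$ increments, take expectations, and invoke \textbf{AS2} (which bounds $\mathbb{E}\Vert\nabla F_k^{j_s^k}(\mathbf{W}_s^k)\Vert^2\le\theta_k^2$ even for random $\mathbf{W}_s^k$, as the fresh index $j_s^k$ is independent of $\mathbf{W}_s^k$):
\[
\mathbb{E}\Vert\mathbf{W}_t^k-\mathbf{W}_{t_0}\Vert^2\le(t-t_0)\sum_{s=t_0}^{t-1}\eta_s^2\,\mathbb{E}\Vert\nabla F_k^{j_s^k}(\mathbf{W}_s^k)\Vert^2\le(t-t_0)^2\eta_{t_0}^2\theta_k^2\le(\tau-1)^2\eta_{t_0}^2\theta_k^2,
\]
where I used $\eta_s\le\eta_{t_0}$ for $s\ge t_0$ (monotonicity). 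Summing against $p_k$ yields $\mathbb{E}\sum_k p_k\Vert\mathbf{W}_t^k-\mathbf{W}_t'\Vert^2\le(\tau-1)^2\eta_{t_0}^2\sum_k p_k\theta_k^2$.

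The last move is to replace $\eta_{t_0}$ by $\eta_t$: since $t<t_0+\tau$, monotonicity gives $\eta_t\ge\eta_{t_0+\tau}$, and the hypothesis $\eta_{t_0}\le2\eta_{t_0+\tau}$ then yields $\eta_{t_0}\le2\eta_t$, hence $\eta_{t_0}^2\le4\eta_t^2$ and the claimed bound $4(\tau-1)^2\eta_t^2\sum_k p_k\theta_k^2$. I do not expect a genuine obstacle in this lemma — it is a routine drift estimate; the one fiddly point is this last step, and it is precisely why both hypotheses ($\eta$ non-increasing and $\eta_t\le2\eta_{t+\tau}$) are imposed. A further small thing to get right is that the deviation resets to zero at every synchronisation index, so only the at most $\tau-1$ intra-round steps can contribute — this is what produces the $(\tau-1)^2$ factor rather than something growing with $t$.
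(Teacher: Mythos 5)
Your proof is correct and follows essentially the same route as the paper's: reduce to the last synchronisation index $t_0$, use the fact that the $p$-weighted mean minimises the weighted sum of squared deviations to replace $\mathbf{W}_t'$ by $\mathbf{W}_{t_0}$, apply Cauchy--Schwarz over the at most $\tau-1$ local SGD increments, invoke \textbf{AS2}, and convert $\eta_{t_0}$ to $2\eta_t$ via the two step-size hypotheses. The only (cosmetic) difference is that you justify the mean-minimisation step explicitly and route the final step through $\eta_{t_0}\le 2\eta_{t_0+\tau}\le 2\eta_t$ rather than the paper's $\eta_{t_0}\le\eta_{t-\tau}\le 2\eta_t$, which is if anything slightly cleaner.
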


Therefore, by defining $\delta_t:=\mathbb{E}[\Vert\mathbf{W}_{t}'-\mathbf{w}^{*}\Vert]$ and substituting \eqref{eq:grad_bound} and \eqref{ieq:bound_deviate} into~\eqref{eq:onestepSGD1} and~\eqref{eq:onestepSGD2}, we have the following recursive bounds: 
\begin{equation} \label{eq:recursive_bound}
    \delta_{t+1} \le (1-C\eta_t )\delta_{t}+B\eta_{t}^2, \quad \text{if } t+1\notin \mathcal{T}_T, 
\end{equation}
and
\begin{equation} \label{eq:recursive_bound2}
    \delta_{t+1} \le (1-C\eta_t )\delta_{t+1-\tau}+B\eta_{t}^2, \quad \text{if } t+1\in \mathcal{T}_T,
\end{equation}
where 
\begin{equation}
B:= 6L\psi+
     \sum_{k \in \mathcal{K}}p_k^2\left(N(\mathrm{Var}(f) +M^2)+\theta_{k}^{2}\right)+8(\tau-1)^2 \sum_{k \in \mathcal{K}}p_k \theta_{k}^{2}.    
\end{equation}

By setting the learning rate $\eta_t$ and the FL system parameters, combined with some smoothness assumption of the local objective functions, we can prove (\ref{eq:FLConvBound}).

Specifically, we set the diminishing learning rate $\eta_t := \frac{\zeta}{t+\alpha}$ for some $\zeta > 0$ and $\alpha \geq \max\{4L\zeta, \tau\}$ such that $\eta_t \leq \frac{1}{4L}$ and $\eta_t \leq 2 \eta_{t+\tau}$ to ensure the validity of Lemmas \ref{lem:virtual error} and \ref{lem:bound_deviate}. Under this setup, we will demonstrate the existence of a finite $\nu, \alpha > 0$ such that 
\begin{align*}
    \delta_{t} \leq \frac{\nu}{t+\alpha}
\end{align*}
for all $t \in \mathcal T$. We prove it by induction. For $t = 0$, the above holds if $\nu \geq \alpha \delta_0$. Assuming that the above holds for some $t = m \tau$, where $m \in \mathbb{Z}_{+}$, it follows by inductively applying (\ref{eq:recursive_bound}) that 

\begin{align*}
    \delta_{t+\tau} & \leq \delta_t \prod_{i=t}^{t+\tau-1} (1 - C\eta_i) + B \sum_{i=1}^{t+\tau-1} \eta_i^2 \cdot \underbrace{\prod_{i=t}^{t+\tau-1} (1 - C\eta_i)}_{\leq 1} \\
    & \leq \delta_t \prod_{i=t}^{t+\tau-1} (1 - C\eta_i) + B \sum_{i=1}^{t+\tau-1} \eta_i^2 \\
    & \leq \delta_t \prod_{i=1}^{t+\tau-1} (1 - C \eta_i) + 4B \tau \eta_t^2 \\
    & \leq \delta_t\exp\left( - C \sum_{i=1}^{t+\tau-1} \eta_i\right) + 4B \tau \eta_t^2 \\
    & = \delta_t\exp\left( - C \sum_{i=1}^{t+\tau-1} \frac{\zeta}{i+\alpha}\right) + 4B \tau \eta_t^2 \\
    & \leq \delta_t\exp\left( - \frac{C\zeta\tau}{t+\tau+\alpha}\right) + 4B \tau \eta_t^2 \\
    & \leq \delta_t \left(1 - \frac{C\zeta\tau}{2(t+\tau+\alpha)}\right) + \frac{4B\tau\zeta^2}{(t+\alpha)^2},
\end{align*}
where we used the inequality $e^{-x} \leq 1-x/2$ for $x \in [0,1]$. Using our induction hypothesis that $\delta_{m\tau} \leq \frac{\nu}{m\tau + \alpha}$,
\begin{align*}
    \delta_{(m+1)\tau} & = \delta_{m\tau + \tau} \\
    & \leq \frac{\nu}{m\tau + \alpha}\delta_t \left(1 - \frac{C\zeta\tau}{2(t+\tau+\alpha)}\right) + \frac{4B\tau\zeta^2}{(t+\alpha)^2} \\
    & = \frac{\nu}{m\tau+\alpha} - \frac{\nu}{(m+1)\tau+\alpha} - \frac{\nu C \zeta \tau}{2(m\tau+\alpha)((m+1)\tau + \alpha)} + \frac{4B\tau\zeta^2}{(m\tau + \alpha)^2} + \frac{\nu}{(m+1)\tau+\alpha} \\
    & = \frac{\nu\tau}{(m\tau + \alpha)((m+1)\tau+ \alpha)}\left(1 - \frac{C\zeta}{2}\right) + \frac{4B\tau \zeta^2}{(m\tau+\alpha)^2} + \frac{\nu}{(m+1)\tau + \alpha}.
\end{align*}
Therefore, $\delta_{(m+1)\tau} \leq \frac{\nu}{(m+1)\tau + \alpha}$ holds when
\begin{align}
     % \frac{\nu\tau}{(m\tau + \alpha)((m+1)\tau+ \alpha)}\left(1 - \frac{C\zeta}{2}\right) + \frac{4B\tau \zeta^2}{(m\tau+\alpha)^2} & \leq 0 \\
    \frac{4B\tau \zeta^2}{(m\tau+\alpha)^2} & \leq \frac{\nu\tau}{(m\tau + \alpha)((m+1)\tau+ \alpha)}\left( \frac{C\zeta}{2}- 1\right) \notag \\
    \frac{4B\zeta^2}{m\tau+\alpha} & \leq \frac{\nu}{(m+1)\tau + \alpha} \left(\frac{C\zeta}{2} - 1 \right) \notag \\
    \nu & \geq \frac{(m+1)\tau + \alpha}{m\tau+\alpha} \cdot \frac{4B\zeta^2}{\frac{C\zeta}{2}-1} \label{eq:rec_bound requirement}
\end{align}
Note that for $m \geq 0$,
\begin{align*}
    \frac{m\tau + \tau + \alpha}{m\tau + \alpha} \geq \frac{\tau+\alpha}{\alpha},
\end{align*}
it suffices to choose $\nu \geq \frac{4B\zeta^2(\tau+\alpha)}{\alpha\left(\frac{C\zeta}{2}-1\right)}$. 

Finally, by the smoothness property of the local objective functions
\begin{align}
    \mathbb{E}[F(\mathbf{W}_T)] - F(\mathbf{w}^*) \leq \frac{L}{2}\delta_T \leq \frac{L\nu}{2(T+\alpha)},
\end{align}
provided that $\nu \geq \max\left\{\alpha \delta_0, \frac{4B\zeta^2(\tau+\alpha)}{\alpha\left(\frac{C\zeta}{2}-1\right)} \right\}$, $\alpha \geq \{4L\zeta, \tau\}$ and $\zeta > 0$. In particular, setting $\zeta = \frac{\tau}{C}$ implies that $\alpha \geq \tau \max\left\{\frac{4L}{C} , 1 \right\}$ and $\nu \geq \max\left\{\frac{4B \tau^2(\tau+\alpha)} {C^2\alpha(\tau-1)},\alpha \Vert\mathbf{W}_0-\mathbf{w}^*\Vert\right\}$. 

We emphasize that the above analysis is for synchronous communication rounds $t \in \mathcal T$. A decreasing upper bound for general $t \geq 0$ can be obtained in a similar manner by utilizing standard gradient-descent analysis.

\section{Proof of Lemma~\ref{lem:virtual error} \label{pf:virtual error}}

Note that 
    \begin{align*}
        \bar{\mathbf{J}}_t  - \mathbf{J}_t & = \sum_{k \in \mathcal K} p_k \nabla F_k(\mathbf{W}_t^k) - p_k \left(\nabla F_k^{j_t^k} (\mathbf{W}_t^k) - \mathbf{E}_t^k\right) \\
        & =  \sum_{k \in \mathcal K} p_k (\nabla F_k(\mathbf{W}_t^k) - \nabla F_k^{j_t^k}(\mathbf{W}_t^k))  + \sum_{k \in \mathcal K}p_k \mathbf{E}_t^k.
    \end{align*}
    Taking squared norm and expectation on both sides yield 
    \begin{align*}
        \mathbb{E}[\|\bar{\mathbf{J}}_t  - \mathbf{J}_t\|^2] & = \mathbb{E} \left[\left\|\sum_{k \in \mathcal K} p_k (\nabla F_k(\mathbf{W}_t^k) - \nabla F_k^{j_t^k}(\mathbf{W}_t^k))  + \sum_{k \in \mathcal K}p_k \mathbf{E}_t^k\right\|^2\right] \\
        & = \mathbb{E} \left[\left\|\sum_{k \in \mathcal K} p_k (\nabla F_k(\mathbf{W}_t^k) - \nabla F_k^{j_t^k}(\mathbf{W}_t^k))\right\|^2\right] + \mathbb{E}\left[\left\| \sum_{k\in\mathcal K} p_k \mathbf{E}_t^k\right\|^2\right] \\
        & \quad + 2 \underbrace{\mathbb{E} \left\langle \sum_{k \in \mathcal K} p_k (\nabla F_k(\mathbf{W}_t^k) - \nabla F_k^{j_t^k} (\mathbf{W}_t^k)), \sum_{k'\in\mathcal K} p_{k'} \mathbf{E}_t^{k'} \right\rangle }_{=0 \text{ as } \mathbb{E}(\mathbf{E}_t^k) = 0}  \\
        & \leq \mathbb{E} \left[\sum_{k \in \mathcal K}\left\| p_k (\nabla F_k(\mathbf{W}_t^k) - \nabla F_k^{j_t^k}(\mathbf{W}_t^k))\right\|^2\right] + \mathbb{E}\left[\sum_{k\in\mathcal K} \left\| p_k \mathbf{E}_t^k\right\|^2\right] \\
        & =  \sum_{k \in \mathcal K} p_k^2\mathbb{E}\left[\left\| (\nabla F_k(\mathbf{W}_t^k) - \nabla F_k^{j_t^k}(\mathbf{W}_t^k))\right\|^2\right] + \sum_{k\in\mathcal K}p_k^2 \mathbb{E}\left[\left\| \mathbf{E}_t^k\right\|^2\right] \\
        & \overset{\mathrm{(a)}}{\leq} \sum_{k \in \mathcal K} p_k^2\mathbb{E}\left[\left\| \nabla F_k^{j_t^k}(\mathbf{W}_t^k)\right\|^2\right] + \sum_{k\in\mathcal K}p_k^2 \mathbb{E}\left[\left\| \mathbf{E}_t^k\right\|^2\right] \\
        & \overset{\mathrm{(b)}}{=} \sum_{k \in \mathcal K} p_k^2\mathbb{E}\left[\left\| \nabla F_k^{j_t^k}(\mathbf{W}_t^k)\right\|^2\right] + \sum_{k\in\mathcal K}p_k^2 \mathbb{E}\left[\left\| \mathbf{Z}_{t-1}^k + \mathbf{C}_{t-1}^k\right\|^2\right] \\
        & \overset{\mathrm{(c)}}{\leq} \sum_{k \in \mathcal K} p_k^2 \theta_k^2 + \sum_{k\in\mathcal K} p_k^2 N(\Var(f) + M^2) \\
        & = \sum_{k\in\mathcal K} p_k^2(\theta_k^2 + N(\Var(f) + M^2)),
    \end{align*}
    where (a) holds since $\mathbb{E}[\nabla F_k^{j_t^k}(\mathbf{W})] = \nabla F_k(\mathbf{W})$, implying $\mathbb{E}\left[\|\nabla F_k(\mathbf{W}_t^k) - \nabla F_k^{j_t^k}(\mathbf{W}_t^k)\|^2\right] \leq \mathbb{E}\left[\|\nabla F_k^{j_t^k}(\mathbf{W}_t^k)\|^2\right]$, (b) is due to norm clipping, (c) holds by \textbf{AS2} and $\mathbb{E}[ \|\mathbf{Z}_t^k\|^2] \in \{0, N \Var(f)\}$, hence $\mathbb{E}[ \|\mathbf{E}_t^k\|^2] \leq N(\Var(f) + M^2)$ for all $t$.

\section{Proof of Lemma~\ref{lem:bound_deviate} \label{pf:bound_deviate}}

Since FedAvg contains $\tau$ steps in one communication round, it follows that for $t \ge 0$, there exists a $t_0 \le t$ such that $t_0 \in \mathcal{T}_T$, $t - t_0 \le \tau-1$ and $\mathbf{W}_{t_0}^k=\mathbf{W}_{t_0}'$ for all $k \in \mathcal{K}$. 
Note that \eqref{ieq:bound_deviate} holds trivially for $t=t_0$.
For $t>t_0$, we have
\begin{align*}
\mathbb{E}\left[\sum_{k \in \mathcal{K}}p_k\Vert \mathbf{W}_{t}'-\mathbf{W}_{t}^k\Vert^2\right] &= \mathbb{E}\left[\sum_{k \in \mathcal{K}}p_k\Vert (\mathbf{W}_{t}^k-\mathbf{W}_{t_0}')-(\mathbf{W}_{t}'-\mathbf{W}_{t_0}')\Vert^2\right]\\    
&\stackrel{(a)}{\le} \mathbb{E}\left[\sum_{k \in \mathcal{K}}p_k\Vert \mathbf{W}_{t}^k-\mathbf{W}_{t_0}'\Vert^2\right] \\
&= \sum_{k \in \mathcal{K}}p_k\mathbb{E}\left[\Vert \mathbf{W}_{t}^k-\mathbf{W}_{t_0}'\Vert^2\right] \\
&\stackrel{(b)}{=} \sum_{k \in \mathcal{K}} p_{k} \mathbb{E}\left[\left\| \sum_{t'=t_0}^{t-1}\eta_{t'}\nabla F_{k}^{j_{t'}^{k}}(\mathbf{W}_{t'}^{k}) \right\|^2\right] \\
&\stackrel{(c)}{\le} \sum_{k \in \mathcal{K}} p_{k}  (\tau-1) \sum_{t'=t_0}^{t-1}\eta_{t'}^2\mathbb{E}\Big[\Vert \nabla F_{k}^{j_{t'}^{k}}(\mathbf{W}_{t'}^{k}) \Vert^2\Big]\\
&\stackrel{(d)}{\le} \sum_{k \in \mathcal{K}} p_{k}  (\tau-1)^2 \eta_{t_0}^2\theta_k^2\\
&\stackrel{(e)}{\le} 4(\tau-1)^2 \eta_{t}^2\sum_{k \in \mathcal{K}} p_{k} \theta_k^2
\end{align*}
where (a) holds since $\mathbb{E}[\Vert X-\mathbb{E}[X]\Vert^2] \le \mathbb{E}[\Vert X\Vert^2]$ where $X:=\mathbf{W}_{t}^k-\mathbf{W}_{t_0}'$, (b) holds since $\mathbf{E}_{t'}^k = \mathbf{0}$ for $t'=t_0,\ldots,t-1$ and iterates recursively over the first case of~\eqref{eq:local_sgd_alt}, (c) holds since $\Vert\sum_{t'=t_0}^{t-1}\mathbf{r}_{t'}\Vert^2 \le (t-1-t_0) \sum_{t'=t_0}^{t-1}\Vert\mathbf{r}_{t'}\Vert^2\le (\tau-1) \sum_{t'=t_0}^{t-1}\Vert\mathbf{r}_{t'}\Vert^2$, (d) holds by Assumption 2, and (e) holds due to $\eta_{t_0} \le \eta_{t-\tau}\le 2\eta_{t}$.
\fi

\ifshortver
\clearpage
\fi

\bibliographystyle{IEEEtran}
\bibliography{ref}

@inproceedings{alistarh2017QSGD,
 author = {Alistarh, Dan and Grubic, Demjan and Li, Jerry and Tomioka, Ryota and Vojnovic, Milan},
 booktitle = {NeurIPS},
 editor = {I. Guyon and U. Von Luxburg and S. Bengio and H. Wallach and R. Fergus and S. Vishwanathan and R. Garnett},
 pages = {},
 title = {QSGD: Communication-Efficient SGD via Gradient Quantization and Encoding},
 volume = {30},
 year = {2017}
}

@inproceedings{
stich2018local,
title={Local {SGD} Converges Fast and Communicates Little},
author={Sebastian U. Stich},
booktitle={International Conference on Learning Representations},
year={2019},
}

@inproceedings{Li2020On,
title={On the Convergence of FedAvg on Non-IID Data},
author={Xiang Li and Kaixuan Huang and Wenhao Yang and Shusen Wang and Zhihua Zhang},
booktitle={International Conference on Learning Representations},
year={2020}
}

@ARTICLE{harsha2010communication, 
	author={Harsha, P. and Jain, R. and McAllester, D. and Radhakrishnan, J.}, 
	journal={IEEE Trans. Inf. Theory}, 
	title={The Communication Complexity of Correlation}, 
	year={2010}, 
	volume={56}, 
	number={1}, 
	pages={438-449},
	month={Jan}
}

@inproceedings{liu2024universal,
   title={Universal Exact Compression of Differentially Private
Mechanisms},
   author={Liu, Yanxiao and Chen, Wei-Ning and {\"O}zg{\"u}r, Ayfer and
Li, Cheuk Ting},
   booktitle={NeurIPS 2024},
   year={2024}
}

@ARTICLE{sfrl_trans, 
	author={C. T. Li and A. {El Gamal}}, 
	journal={IEEE Trans. Inf. Theory}, 
	title={Strong Functional Representation Lemma and Applications to Coding Theorems}, 
	year={2018}, 
	volume={64}, 
	number={11}, 
	pages={6967-6978}, 
	keywords={Random variables;Source coding;Channel coding;Electrical engineering;Indexes;Functional representation lemma;channel simulation;one-shot achievability;lossy source coding;channel with state}, 
	doi={10.1109/TIT.2018.2865570}, 
	ISSN={0018-9448}, 
	month={Nov},}

@article{bennett2002entanglement,
title={Entanglement-assisted capacity of a quantum channel and the reverse {S}hannon theorem},
author={Bennett, Charles H and Shor, Peter W and Smolin, John and Thapliyal, Ashish V},
journal={IEEE Trans. Inf. Theory},
volume={48},
number={10},
pages={2637--2655},
year={2002},
publisher={IEEE}
}

@article{Dong2022GDP,
    author = {Dong, Jinshuo and Roth, Aaron and Su, Weijie J.},
    title = {Gaussian Differential Privacy},
    journal = {Journal of the Royal Statistical Society Series B: Statistical Methodology},
    volume = {84},
    number = {1},
    pages = {3-37},
    year = {2022},
    month = {02},
    issn = {1369-7412},
    doi = {10.1111/rssb.12454},
    eprint = {https://academic.oup.com/jrsssb/article-pdf/84/1/3/49324238/jrsssb\_84\_1\_3.pdf},
}

@INPROCEEDINGS {Mironov2017RDP,
author = {Mironov, Ilya },
booktitle = { 2017 IEEE 30th Computer Security Foundations Symposium (CSF) },
title = {Rényi Differential Privacy},
year = {2017},
volume = {},
ISSN = {2374-8303},
pages = {263-275},
keywords = {Privacy;Standards;Tools;Databases;Additives;Computer security;Google},
doi = {10.1109/CSF.2017.11},
publisher = {IEEE Computer Society},
address = {Los Alamitos, CA, USA},
month =Aug}

@article{wilson2000layered,
  title={Layered multishift coupling for use in perfect sampling algorithms (with a primer on {CFTP})},
  author={Wilson, David Bruce},
  journal={Monte Carlo Methods},
  volume={26},
  pages={141--176},
  year={2000},
  publisher={Fields Institute Communications}
}

@article{gray1993dithered,
  title={Dithered quantizers},
  author={Gray, Robert M and Stockham, Thomas G},
  journal={IEEE Trans. Inf. Theory},
  volume={39},
  number={3},
  pages={805--812},
  year={1993},
  publisher={IEEE}
}

@article{schuchman1964dither,
  title={Dither signals and their effect on quantization noise},
  author={Schuchman, Leonard},
  journal={IEEE Transactions on Communication Technology},
  volume={12},
  number={4},
  pages={162--165},
  year={1964},
  publisher={IEEE}
}

@article{agustsson2020universally,
  title={Universally quantized neural compression},
  author={Agustsson, Eirikur and Theis, Lucas},
  journal={NeurIPS},
  volume={33},
  pages={12367--12376},
  year={2020}
}

@article{WalkerUniformP1999,
author = {Stephen Walker},
title = {The uniform power distribution},
journal = {Journal of Applied Statistics},
volume = {26},
number = {4},
pages = {509--517},
year = {1999},
publisher = {Taylor \& Francis},
doi = {10.1080/02664769922386}
}

@article{CHOY2003exppower,
title = {The extended exponential power distribution and Bayesian robustness},
journal = {Statistics \& Probability Letters},
volume = {65},
number = {3},
pages = {227-232},
year = {2003},
issn = {0167-7152},
doi = {https://doi.org/10.1016/j.spl.2003.01.001},
author = {S.T.Boris Choy and Stephen G. Walker},
keywords = {Bayesian robustness analysis, Scale mixtures of uniforms, Exponential power and double exponential distributions, Gibbs sampling, Winsoring},
abstract = {In this paper, it is demonstrated that an extension to the exponential power family allows for robustness characteristics for the normal location parameter problem, previously thought to be restricted to the Student-t and a subclass of the positive stable families.}
}

@inproceedings{hegazy2022randomized,
  title={Randomized Quantization with Exact Error Distribution},
  author={Hegazy, Mahmoud and Li, Cheuk Ting},
  booktitle={IEEE ITW},
  pages={350--355},
  year={2022},
  organization={IEEE}
}

@book{conway2013sphere,
  title={Sphere packings, lattices and groups},
  author={Conway, John Horton and Sloane, Neil James Alexander},
  volume={290},
  year={2013},
  publisher={Springer Science \& Business Media}
}

@article{kirac1996results,
  title={Results on lattice vector quantization with dithering},
  author={Kirac, Ahmet and Vaidyanathan, PP},
  journal={IEEE Transactions On Circuits and Systems II: Analog and Digital Signal Processing},
  volume={43},
  number={12},
  pages={811--826},
  year={1996},
  publisher={IEEE}
}

@article{roberts1962picture,
  title={Picture coding using pseudo-random noise},
  author={Roberts, Lawrence},
  journal={IRE Transactions on Information Theory},
  volume={8},
  number={2},
  pages={145--154},
  year={1962},
  publisher={IEEE}
}

@article{ziv1985universal,
  title={On universal quantization},
  author={Ziv, Jacob},
  journal={IEEE Trans. Inf. Theory},
  volume={31},
  number={3},
  pages={344--347},
  year={1985},
  publisher={IEEE}
}

@article{lang2023joint,
  title={Joint privacy enhancement and quantization in federated learning},
  author={Lang, Natalie and Sofer, Elad and Shaked, Tomer and Shlezinger, Nir},
  journal={IEEE Trans. Signal Process},
  volume={71},
  pages={295--310},
  year={2023},
  publisher={IEEE}
}

@inproceedings{girgis2021shuffled,
  title={Shuffled model of differential privacy in federated learning},
  author={Girgis, Antonious and Data, Deepesh and Diggavi, Suhas and Kairouz, Peter and Suresh, Ananda Theertha},
  booktitle={AISTATS},
  pages={2521--2529},
  year={2021},
  organization={PMLR}
}

@inproceedings{agarwal2021skellam,
author = {Agarwal, Naman and Kairouz, Peter and Liu, Ziyu},
title = {The skellam mechanism for differentially private federated learning},
year = {2024},
isbn = {9781713845393},
address = {Red Hook, NY, USA},
booktitle = {NeurIPS},
articleno = {386},
numpages = {13},
series = {NIPS '21}
}

@article{shlezinger2020uveqfed,
  title={{UVeQFed}: Universal vector quantization for federated learning},
  author={Shlezinger, Nir and Chen, Mingzhe and Eldar, Yonina C and Poor, H Vincent and Cui, Shuguang},
  journal={IEEE Trans. Signal Process},
  volume={69},
  pages={500--514},
  year={2020},
  publisher={IEEE}
}

@inproceedings{shah2022optimal,
  title={Optimal compression of locally differentially private mechanisms},
  author={Shah, Abhin and Chen, Wei-Ning and Balle, Johannes and Kairouz, Peter and Theis, Lucas},
  booktitle={AISTATS},
  pages={7680--7723},
  year={2022},
  organization={PMLR}
}

@inproceedings{havasi2019minimal,
  title={Minimal random code learning: Getting bits back from compressed model parameters},
  author={Havasi, Marton and Peharz, Robert and Hern{\'{a}}ndez-Lobato, Jos{\'{e}} Miguel},
  booktitle={7th Int. Conf. Learn. Represent.},
  year={2019}
}

@inproceedings{flamich2023greedy,
   title={Greedy {P}oisson rejection sampling},
   author={Flamich, Gergely},
   booktitle={NeurIPS},
   year={2023}
}

@ARTICLE{ling2025RSUQ,
  author={Ling, Chih Wei and Li, Cheuk Ting},
  journal={ IEEE Trans. Inf. Theory}, 
  title={Rejection-Sampled Universal Quantization for Smaller Quantization Errors}, 
  year={2025},
  volume={71},
  number={12},
  pages={9784-9803},
  keywords={Quantization (signal);Entropy;Lattices;Redundancy;Vectors;Noise;Distortion;Additive noise;Distortion measurement;Codes;Lattice quantization;rejection sampling;dithering;channel simulation;nonuniform noise},
  doi={10.1109/TIT.2025.3622930}}

@book{zamir2014, 
title={Lattice Coding for Signals and Networks}, 
author={Zamir, Ram}, 
DOI={10.1017/CBO9781139045520}, 
publisher={Cambridge University Press}, 
place={Cambridge}, 
year={2014}
}

@article{zamir1992universal,
  title={On universal quantization by randomized uniform/lattice quantizers},
  author={Zamir, Ram and Feder, Meir},
  journal={IEEE Trans. Inf. Theory},
  volume={38},
  number={2},
  pages={428--436},
  year={1992},
  publisher={IEEE}
}

@inproceedings{shahmiri2024communication,
author = {Shahmiri, Ali Moradi and Ling, Chih Wei and Li, Cheuk Ting},
year = {2024},
pages = {4550-4554},
title = {Communication-Efficient {L}aplace Mechanism for Differential Privacy via Random Quantization},
booktitle = {ICASSP},
organization={IEEE},
doi = {10.1109/ICASSP48485.2024.10446221}
}

@inproceedings{hegazy2024compression,
   title={Compression with Exact Error Distribution for Federated
Learning},
   author={Hegazy, Mahmoud and Leluc, R{\'e}mi and Li, Cheuk Ting and
Dieuleveut, Aymeric},
   booktitle={AISTATS},
   volume={238},
   pages={613--621},
   year={2024}
}

@article{Kim2021FederatedLW,
  title={Federated Learning with Local Differential Privacy: Trade-Offs Between Privacy, Utility, and Communication},
  author={Muah Kim and Onur G{\"u}nl{\"u} and Rafael F. Schaefer},
  journal={ICASSP},
  year={2021},
  pages={2650-2654},
}

@INPROCEEDINGS{hasircioglu2023communication,
  author={Hasırcıoğlu, Burak and Gündüz, Deniz},
  booktitle={ICASSP}, 
  title={Communication Efficient Private Federated Learning Using Dithering}, 
  year={2024},
  volume={},
  number={},
  pages={7575-7579},
  keywords={Training;Privacy;Quantization (signal);Federated learning;Time factors;Task analysis;Speech processing;differential privacy;communication efficiency;compression;dithering;federated learning},
  doi={10.1109/ICASSP48485.2024.10446222}}

@inproceedings{Balle2018PrivAmpl,
 author = {Balle, Borja and Barthe, Gilles and Gaboardi, Marco},
 booktitle = {NeurIPS},
 editor = {S. Bengio and H. Wallach and H. Larochelle and K. Grauman and N. Cesa-Bianchi and R. Garnett},
 pages = {},
 title = {Privacy Amplification by Subsampling: Tight Analyses via Couplings and Divergences},
 volume = {31},
 year = {2018}
}

@InProceedings{Dwork06DP,
author="Dwork, Cynthia
and McSherry, Frank
and Nissim, Kobbi
and Smith, Adam",
title="Calibrating Noise to Sensitivity in Private Data Analysis",
booktitle="Theory of Cryptography",
year="2006",
publisher="Springer Berlin Heidelberg",
pages="265--284",
isbn="978-3-540-32732-5"
}

@article{Dwork14Book,
author = {Dwork, Cynthia and Roth, Aaron},
title = {The Algorithmic Foundations of Differential Privacy},
year = {2014},
issue_date = {August 2014},
publisher = {Now Publishers Inc.},
address = {Hanover, MA, USA},
volume = {9},
number = {3-4},
issn = {1551-305X},
journal = {Found. Trends Theor. Comput. Sci.},
pages = {211-407},
numpages = {197}
}

@article{yan2023layered,
  title={Layered Randomized Quantization for Communication-Efficient and Privacy-Preserving Distributed Learning},
  author={Yan, Guangfeng and Li, Tan and Lan, Tian and Wu, Kui and Song, Linqi},
  journal={arXiv preprint arXiv:2312.07060},
  year={2023}
}

@article{robert2004monte,
  title={Monte {C}arlo Statistical Methods},
  author={Robert, Christian P and Casella, George},
  journal={Springer Texts in Statistics},
  pages={274},
  year={2004},
  publisher={Springer New York}
}

@ARTICLE{Limb1969visual,
  author={Limb, J. O.},
  journal={The Bell System Technical Journal}, 
  title={Design of dither waveforms for quantized visual signals}, 
  year={1969},
  volume={48},
  number={7},
  pages={2555-2582},
  keywords={},
  doi={10.1002/j.1538-7305.1969.tb01187.x}}

@ARTICLE{Jayant1972speech,
  author={Jayant, N. S. and Rabiner, L. R.},
  journal={The Bell System Technical Journal}, 
  title={The Application of dither to the quantization of speech signals}, 
  year={1972},
  volume={51},
  number={6},
  pages={1293-1304},
  keywords={},
  doi={10.1002/j.1538-7305.1972.tb02653.x}}

@article{sripad1977necessary,
  title={A necessary and sufficient condition for quantization errors to be uniform and white},
  author={Sripad, Anekal and Snyder, Donald},
  journal={IEEE Trans. Acoust., Speech, Signal Process},
  volume={25},
  number={5},
  pages={442--448},
  year={1977},
  publisher={IEEE}
}

@article{zamir1996lattice,
  title={On lattice quantization noise},
  author={Zamir, Ram and Feder, Meir},
  journal={IEEE Trans. Inf. Theory},
  volume={42},
  number={4},
  pages={1152--1159},
  year={1996},
  publisher={IEEE}
}

@ARTICLE{Golomb1966enc,
  author={Golomb, S.},
  journal={IEEE Trans. Inf. Theory}, 
  title={Run-length encodings (Corresp.)}, 
  year={1966},
  volume={12},
  number={3},
  pages={399-401},
  keywords={},
  doi={10.1109/TIT.1966.1053907}}

@ARTICLE{Gallager1975Geom,
  author={Gallager, R. and van Voorhis, D.},
  journal={IEEE Trans. Inf. Theory}, 
  title={Optimal source codes for geometrically distributed integer alphabets (Corresp.)}, 
  year={1975},
  volume={21},
  number={2},
  pages={228-230},
  keywords={},
  doi={10.1109/TIT.1975.1055357}}

@article{Tian2019FLChallenges,
author = {Li, Tian and Sahu, Anit and Talwalkar, Ameet and Smith, Virginia},
year = {2019},
month = {08},
pages = {},
journal={arXiv preprint arXiv:1908.07873},
title = {Federated Learning: Challenges, Methods, and Future Directions},
}

@inproceedings{Jeffrey2012distributed,
 author = {Dean, Jeffrey and Corrado, Greg and Monga, Rajat and Chen, Kai and Devin, Matthieu and Mao, Mark and Ranzato, Marc\textquotesingle aurelio and Senior, Andrew and Tucker, Paul and Yang, Ke and Le, Quoc and Ng, Andrew},
 booktitle = {NeurIPS},
 editor = {F. Pereira and C.J. Burges and L. Bottou and K.Q. Weinberger},
 pages = {},
 title = {Large Scale Distributed Deep Networks},
 volume = {25},
 year = {2012}
}

@inproceedings{McMahan2016FL,
  title={Communication-Efficient Learning of Deep Networks from Decentralized Data},
  author={H. B. McMahan and Eider Moore and Daniel Ramage and Seth Hampson and Blaise Ag{\"u}era y Arcas},
  booktitle={AISTATS},
  year={2016}
}

@inproceedings{Konecn2016NIPSFederatedLS,
title	= {Federated Learning: Strategies for Improving Communication Efficiency},
author	= {Jakub Konečný and H. Brendan McMahan and Felix X. Yu and Peter Richtarik and Ananda Theertha Suresh and Dave Bacon},
year	= {2016},
booktitle	= {NIPS Workshop on Private Multi-Party Machine Learning}
}

@INPROCEEDINGS{Corentin2017DLadapCompr,
  author={Hardy, Corentin and Le Merrer, Erwan and Sericola, Bruno},
  booktitle={IEEE NCA}, 
  title={Distributed deep learning on edge-devices: Feasibility via adaptive compression}, 
  year={2017},
  volume={},
  number={},
  pages={1-8},
  keywords={Servers;Computational modeling;Machine learning;Data models;Training;Stochastic processes},
  doi={10.1109/NCA.2017.8171350}}

@article{lin2020DGCreducing,
      title={Deep Gradient Compression: Reducing the Communication Bandwidth for Distributed Training}, 
      author={Yujun Lin and Song Han and Huizi Mao and Yu Wang and William J. Dally},
      year={2020},
      journal={arXiv preprint arXiv:1712.01887} 
}

@article{An1996AddingNoise,
  author={An, Guozhong},
  journal={Neural Computation}, 
  title={The Effects of Adding Noise During Backpropagation Training on a Generalization Performance}, 
  year={1996},
  volume={8},
  number={3},
  pages={643-674},
  keywords={},
  doi={10.1162/neco.1996.8.3.643}}

@book{Li2024ChannelSim,
author = {Li, Cheuk Ting},
title = {Channel Simulation: Theory and Applications to Lossy Compression and Differential Privacy},
year = {2024},
issue_date = {Dec 2024},
publisher = {Now Publishers Inc.},
address = {Hanover, MA, USA},
volume = {21},
number = {6},
issn = {1567-2190},
doi = {10.1561/0100000141},
month = dec,
pages = {847–1106},
numpages = {264}
}

@inbook{Zhu2019DeepLeak,
author = {Zhu, Ligeng and Liu, Zhijian and Han, Song},
title = {Deep leakage from gradients},
year = {2019},
publisher = {Curran Associates Inc.},
address = {Red Hook, NY, USA},
booktitle = {Proceedings of the 33rd International Conference on Neural Information Processing Systems},
articleno = {1323},
numpages = {11}
}

@inproceedings{huang2021GradInvAtt,
author = {Huang, Yangsibo and Gupta, Samyak and Song, Zhao and Li, Kai and Arora, Sanjeev},
title = {Evaluating gradient inversion attacks and defenses in federated learning},
year = {2021},
isbn = {9781713845393},
publisher = {Curran Associates Inc.},
address = {Red Hook, NY, USA},
booktitle = {NeurIPS},
articleno = {553},
numpages = {10},
series = {NIPS '21}
}

@article{zhao2020idlg,
      title={iDLG: Improved Deep Leakage from Gradients}, 
      author={Bo Zhao and Konda Reddy Mopuri and Hakan Bilen},
      year={2020},
      journal={arXiv preprint arXiv:2001.02610},
      primaryClass={cs.LG}
}

@article{ling2025communication,
  title={Communication-Efficient and Privacy-Adaptable Mechanism for Federated Learning},
  author={Ling, Chih Wei and Shiu, Chun Hei Michael and Wu, Youqi and Sun, Jiande and Li, Cheuk Ting and Song, Linqi and Xu, Weitao},
  journal={arXiv preprint arXiv:2501.12046},
  year={2025}
}

\end{document}